\newtheorem{theorem}{Theorem}%这个要放在amsthm后面
\newtheorem{definition}{Definition}
\newtheorem{lemma}{Lemma}
 \newtheorem{assumption}{Assumption}
 \newtheorem{corollary}{Corollary}
\begin{document}
%-------------------------------------------------------------------------------

%don't want date printed
\date{}

% make title bold and 14 pt font (Latex default is non-bold, 16 pt)
\title{\Large \bf Low-Cost Hard-Label Adversarial Attack with Theoretical Foundations}
% \author{Anonymous Authors}
% \author{
% {\rm Jun Liu}\\
% University of Macau; National Institute of Informatics 
% \and
% {\rm Leo Yu Zhang}\\
% Griffith University
% \and
% {\rm Fengpeng Li}\\
%  University of Macau
%  \and
% {\rm Isao Echizen}\\
%  University of Tokyo; National Institute of Informatics
% \and
% {\rm Jiantao Zhou}\\
%  University of Macau
% } % end author
\author{
{\rm Jun Liu}$^{\dagger\ddagger}$,
{\rm Leo Yu Zhang}$^{\S}$,
{\rm Fengpeng Li}$^{\dagger}$,
{\rm Isao Echizen}$^{\mathparagraph\ddagger}$,
{\rm Jiantao Zhou}$^{\dagger *}$\\[0.5em]
$^{\dagger}$University of Macau,
$^{\ddagger}$National Institute of Informatics,
$^{\S}$Griffith University,
$^{\mathparagraph}$University of Tokyo\\
$^{*}$Corresponding author
}

\maketitle

%-------------------------------------------------------------------------------
% \begin{abstract}
% %-------------------------------------------------------------------------------
% Your abstract text goes here. Just a few facts. Whet our appetites.
% Not more than 200 words, if possible, and preferably closer to 150.
% \end{abstract}
\begin{abstract}
Hard-label black-box attacks, relying solely on top-1 predictions, represent one of the most challenging yet practically threat models. Despite recent progress, existing approaches face two key limitations: (1) they overlook the critical role of initialization, focusing primarily on optimization strategies; and (2) they rely heavily on empirical heuristics without theoretical guarantees. To bridge this gap, we establish a unified theoretical framework showing that existing sign-flipping hard-label attacks can be understood as approximating the true gradient sign. Guided by this principled analysis, we propose a novel attack framework featuring a zero-query initialization strategy and a Pattern-Driven Optimization (PDO) algorithm. We provide theoretical guarantees that our initialization yields higher cosine similarity to the true gradient sign than random baselines, and our PDO module achieves significantly lower query complexity than baseline search methods. Extensive experiments across CIFAR-10, ImageNet, and ObjectNet—covering standard and adversarially trained models, commercial APIs, and CLIP models—demonstrate that our method consistently outperforms SOTA hard-label attacks in both success rate and efficiency, particularly under low query budgets. Furthermore, our method demonstrates robust generalization across corrupted data (ImageNet-C), biomedical images (PathMNIST), and dense prediction tasks such as segmentation. Notably, it bypasses the stateful defense Blacklight, achieving a $0\%$ detection rate. 
\end{abstract}

\section{Introduction}\label{sec:intro}

\begin{figure}
\centering
\includegraphics[width=0.35\textwidth]{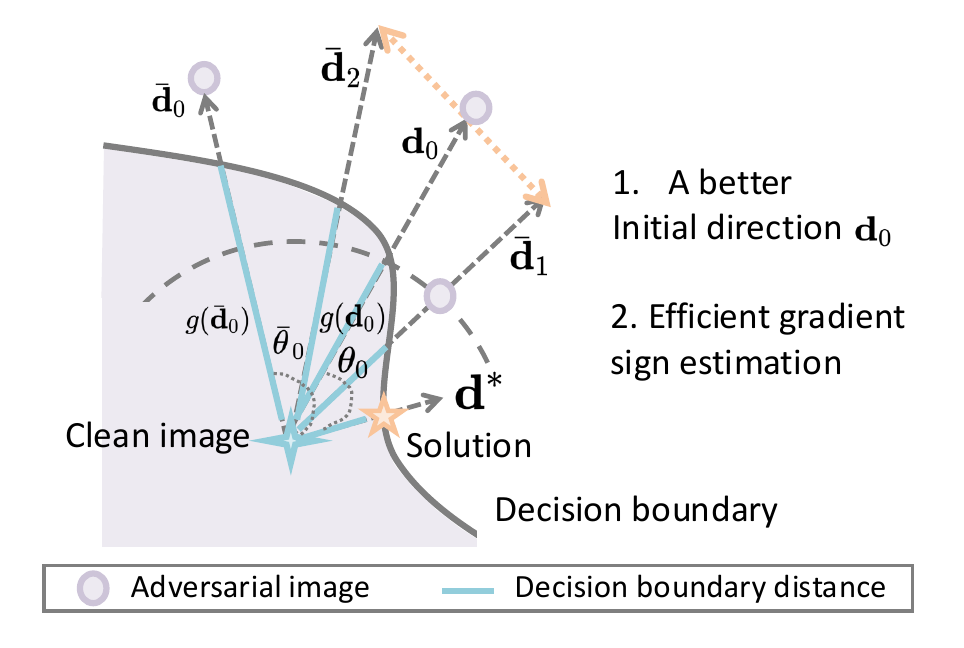}
  \caption{Comparison of our approach with traditional methods. Our method achieves a better initialization $\mathbf{d}_0$	
  with smaller angular deviation $\theta_0$	
  and boundary distance $g(\mathbf{d_0})$ compared to the traditional initialization  
$\bar{\mathbf{d}}_0$. Furthermore, it employs a more query-efficient strategy to approximate the ideal descent direction $\mathbf{d}^*$.}
  \label{fig:optill}
\end{figure}
Deep Neural Networks (DNNs) have been widely applied in real-world applications such as image classification \cite{GoogleVision,Baidu}, speech recognition \cite{zheng2021black}, face recognition \cite{deng2019arcface,jin2024faceobfuscator}, and etc. However, previous researches \cite{xu2025one,li2024dat,DBLP:journals/tifs/LiLWTZ25,fang2024zero} reveal that DNNs can fail catastrophically when processing Adversarial Examples (AEs)—carefully crafted inputs that deceive DNNs while appearing benign to human observers. In image classification tasks, adversarial attacks for generating AEs can be categorized based on the adversary's knowledge of the victim model. Broadly, they fall into three classes: white-box \cite{dong2018boosting,croce2020minimally,zhang2020walking}, gray-box \cite{guo2018countering}, and black-box attacks \cite{brendel2018decision,andriushchenko2020square,Wang_Zuo_Huang_Chen_2025}. In \textit{white-box} attacks, the adversary has complete access to the victim model's architecture and parameters, while \textit{gray-box} attacks assume partial access. In contrast, \textit{black-box} attacks assume no knowledge of either the architecture or internal parameters of the victim model and can be further classified by the type of output they leverage: \textit{soft-label} (namely score-based) attacks \cite{al2020sign,guo2019simple} exploiting classification confidence scores (e.g., softmax outputs) and \textit{hard-label} (a.k.a. decision-based) attacks \cite{chen2020hopskipjumpattack,wan2024bounceattack,vo2022ramboattack} relying solely on the top-$1$ predicted labels. 
Among these attacks, hard-label attacks represent the most challenging and practical scenario~\cite{chen2020hopskipjumpattack,wan2024bounceattack,cheng2020sign,chengquery,Wang_Zuo_Huang_Chen_2025}. Real-world applications---such as industry AI systems (e.g., in medical imaging \cite{mediacy}) and commercial APIs ({e.g., Core ML \cite{coreml}})---intentionally suppress output confidences due to data privacy regulations and mobile resource constraints, thereby necessitating decision-only attack strategies and making them crucial for evaluating and improving the robustness of DNNs. Hence, this paper primarily focuses on hard-label attacks.

\textbf{Traditional hard-label attack workflow.} Current hard-label attacks are broadly divided into surrogate-assisted and surrogate-free methods. Surrogate-assisted methods (or transfer-query attacks) leverage gradient priors from local models to guide the search \cite{shen2024transferability}. Although efficient, their performance degrades significantly under architectural mismatch (e.g., Convolution Neural Networks (CNNs) vs. Transformers) or domain shifts, and can be mitigated by ensemble training \cite{tramer2018ensemble}. In contrast, surrogate-free (query-based) methods operate independently of external priors, relying solely on the victim model's hard-label feedback to estimate gradients or refine perturbations. The workflow of these query-based methods typically involves two stages:

\noindent\textit{Stage 1: Perturbation direction initialization.} This stage aims to generate an initial search direction. Traditional methods randomly sample noise or another image until an AE is found. The vector from the clean image to this AE defines the initial direction. Then, a binary search is performed along this vector to locate a \textit{starting point} close to the decision boundary.

\noindent\textit{Stage 2: Direction refinement and perturbation reduction.} Starting from the boundary point, this stage iteratively refines the initial direction and reduces the perturbation magnitude based on hard-label feedback. The objective is to identify an AE that satisfies the specified perturbation constraints. If no valid example is found within the allocated query budget, the attack is deemed unsuccessful.

\textbf{Observations and insights.}
\textbf{1) Better initialization.}
We observe that most existing black-box attacks devote substantial effort to Stage~2 (the search phase), while largely overlooking the importance of Stage~1 (initialization).
In practice, they typically rely on random strategies, such as Gaussian or Uniform noise~\cite{chen2020hopskipjumpattack,wan2024bounceattack,maho2021surfree,cheng2019query}, random images~\cite{wan2024bounceattack,ma2021finding}, or all $\bm1$ perturbations~\cite{Wang_Zuo_Huang_Chen_2025,10.1145/3394486.3403225}.
These choices ignore prior knowledge about model vulnerabilities, often producing initial directions that are far from the decision boundary and difficult to refine under limited query budgets (e.g., $<100$ queries).
This motivates us to design a prior-guided initialization that yields a superior starting direction $\mathbf{d}_0$.
As illustrated in Fig.~\ref{fig:optill}, our goal is to obtain an initialization that is closer to the ideal descent direction $\mathbf{d}^*$, both in angular deviation and in decision-boundary distance, compared to conventional random starts. \textbf{2) More efficient gradient sign estimation.}
The next challenge is to efficiently approach the ideal descent direction $\mathbf{d}^*$. Inspired by the score-based method SignHunter~\cite{al2020sign}, we identify $\mathbf{d}^*$ with the true gradient sign utilized in FGSM~\cite{fgsm}, a pragmatic approximation given that exact gradient retrieval is intractable in the black-box setting. Specifically, SignHunter demonstrates that, with sufficient queries, it can approximate the gradient sign used in FGSM. Similarly, we observe that RayS~\cite{10.1145/3394486.3403225} can asymptotically achieve FGSM performance by iteratively refining sign estimates through sign-flipping. However, in practice, the efficiency of such methods deteriorates rapidly as the dimensionality grows.
Moreover, when refining our structured initial directions, RayS-style rigid equal partitioning ignores the underlying spatial pattern, frequently breaking coherent regions into fragmented pieces.
This leads to \emph{gain cancellation}: when a partition contains conflicting gradient signs, their contributions offset one another, thereby weakening the effective descent signal and slowing the search process. These observations suggest that efficient gradient sign approximation requires not only reducing the search dimensionality, but also preserving spatial coherence throughout the search process.

\textbf{Our work.}
From a \emph{greedy optimization} perspective, we propose an efficient black-box attack that jointly improves initialization and search.
First, we design a superior initialization strategy via Block-wise Discrete Cosine Transform (BDCT) analysis~\cite{wallace1991jpeg}.
By characterizing model sensitivity across frequency bands, we derive a data-driven sampling distribution from the clean image's frequency statistics and generate an initial direction via inverse BDCT.
Both theoretical analysis and empirical results show that this initialization is significantly closer to the decision boundary and exhibits higher cosine similarity with the true gradient sign than random baselines (Sec.~\ref{sec:betinit}). Building upon this structured initialization, we further introduce a \emph{Pattern-Driven Optimization} (PDO) module.
PDO explicitly preserves the spatial coherence of initial directions throughout the search process, thereby reducing the effective search dimensionality and mitigating gain cancellation.
As a result, it approximates the true gradient sign with substantially fewer queries than the equal partitioning searches used in state-of-the-art (SOTA) baselines (Sec.~\ref{sec:pdo}).

We summarize our main {contributions} as follows:
\begin{itemize}
    \item We provide the first theoretical analysis showing that RayS-like sign-flipping hard-label black-box attacks can asymptotically match white-box FGSM. Our analysis identifies search dimensionality and cumulative sign-estimation errors as the primary bottlenecks limiting practical efficiency.

    \item We propose a zero-query initialization strategy grounded in image frequency statistics. Unlike blind random sampling, this method leverages prior knowledge to locate a starting point with a smaller decision boundary distance and better alignment with the true gradient sign.

    \item We design the PDO method, which reduces the search space and mitigates the gain cancellation effect inherent in baseline equal-partitioning search. We provide theoretical guarantees on its improved query complexity and experimentally demonstrate its faster convergence and higher cosine similarity with the true gradient sign.

    \item Extensive experiments demonstrate that our proposed DPAttack significantly outperforms existing algorithms in terms of ASR and query efficiency. It proves highly effective against traditional DNNs, foundation models (e.g., CLIP), and real-world APIs, while successfully bypassing SOTA defenses like Blacklight.

\end{itemize}

\section{Related Work}
\label{sec:related_work}

\subsection{Problem Formulation \& Background}
Given a victim model $\mathcal{F}:\mathbb{R}^d\to\{1,\cdots,M\}$ (where $M$ denotes the number of classes), a clean image $\mathbf{x}\in\mathbb{R}^d$, and its ground-truth (GT) label $y$, an untargeted adversarial attack aims to generate an adversarial example (AE) $\mathbf{x}^* = \mathbf{x}+\Delta$ such that $\mathcal{F}(\mathbf{x}^*)\neq y$. The objective is to maximize the classification loss $\mathcal{L}(\mathbf{x}+\Delta,y)$ while constraining the perturbation magnitude within an allowed $\ell_p$-ball (typically $p \in \{2, \infty\}$) of radius $\epsilon$. Formally:
\begin{equation}\label{eq:lossobj}
    \max_{\Delta} \mathcal{L}(\mathbf{x}+\Delta,y), \quad \text{s.t.} \quad \|\Delta\|_p \leq \epsilon.
\end{equation}
In white-box settings, attackers optimize Eq.~(\ref{eq:lossobj}) using gradients, e.g., the Fast Gradient Sign Method (FGSM)~\cite{fgsm}: $\mathbf{x}'=\mathbf{x}+\epsilon\cdot\text{sgn}(\nabla\mathcal{L}(\mathbf{x},y))$. However, in the {hard-label black-box} setting, the attacker only accesses the top-1 predicted label, rendering direct gradient access impossible.

\subsection{Hard-Label Black-Box Attacks}
\label{subsec:hard_label_attacks}

Early research, such as the Boundary Attack~\cite{brendel2018decision}, relied on heuristic random walks along the decision boundary. While pioneering, it suffered from extremely low query efficiency due to the blindness of its search strategy. To improve convergence, subsequent works incorporated gradient or normal vector estimation to guide the attack. Canonical methods like HSJA~\cite{chen2020hopskipjumpattack} and Bounce~\cite{wan2024bounceattack} utilize Monte Carlo sampling to approximate the decision boundary normal, enabling more directed perturbation updates. Building on this, TtBA~\cite{wangttba} weights the perturbation direction with its estimated normal vector. It utilizes a robust ``two-third'' update rule and curvature-based detection to escape local optima, achieving superior efficiency in normal-vector-based attacks.

However, these gradient-estimation-based methods typically require substantial query budgets to approximate normal vectors. 
Distinct from this direction, a parallel line of optimization-based approaches bypasses explicit normal estimation. Opt~\cite{cheng2019query} pioneered this by formulating the attack as a zero-order optimization task. In line with this approach, SignOpt~\cite{cheng2020sign} further improved query efficiency by reconstructing the gradient direction solely from the aggregated signs of directional derivatives along random projections, rather than estimating magnitude. Specifically, SignOpt reformulates the attack objective in Eq.~(\ref{eq:lossobj}) as finding a unit vector $\mathbf{v}\in\mathbb{S}^{d-1}$ (where $\mathbb{S}^{d-1}:=\{\mathbf{v}\in\mathbb{R}^d:\|\mathbf{v}\|_2=1\}$) to minimize:
    \begin{equation}\label{eq:signoptobj}
    \begin{gathered}
   g(\mathbf{v})=\min\{r>0\ | \ \mathcal{F}(\mathbf{x}+r\cdot\mathbf{v})\neq y\}.
        \end{gathered}
    \end{equation}
    Here, $g(\cdot)$, defined on $\mathbb{S}^{d-1}$, represents the shortest distance $r$ from the clean image $\mathbf{x}$ to the decision boundary along direction $\mathbf{v}$. SignOpt solves Eq.~(\ref{eq:signoptobj}) by estimating the gradient $\nabla_\mathbf{v} g(\mathbf{v})$ and updating $\mathbf{v}$ using standard Stochastic Gradient Descent (SGD) as follows:\footnote{The minimization over $r$ is performed using a binary search.}

\noindent{1. Sample multiple directions $\{\mathbf{n}_i\}_{i=1}^{N}$ from a Gaussian distribution}, typically setting $N=200$ to mitigate noise. 

\noindent{2. Obtain the sign of the directional derivative} via finite differences:\footnote{Note that calculating this directional derivative only requires access to the hard-label oracle $\mathcal{F} (\mathbf{x}+r\mathbf{v})\neq y$.} $\text{sgn}\left( \nabla_{\mathbf{n}_i}g(\mathbf{v})) = \text{sgn}([g(\mathbf{v} + \tau\mathbf{n}_{i} ) - g(\mathbf{v})]/ \tau \right)$, where $\tau$ is a small constant.

\noindent{3. Approximate $ \nabla_\mathbf{v} g(\mathbf{v}) \approx \sum_{i=1}^N \text{sgn}\left( \nabla_{\mathbf{n}_i}g(\mathbf{v}) \right) \cdot \mathbf{n}_{i} $ and update $\mathbf{v}$ using SGD with a learning rate $\eta$.}

Despite these advancements, continuous gradient-recovery approaches still struggle with high-dimensional search spaces, particularly under $\ell_\infty$ constraints. This limitation motivated the development of following SOTA discrete search strategies.

\subsection{Discrete Hard-Label Attacks}
\label{subsec:discrete_attacks}

Unlike the aforementioned methods that attempt to approximate continuous gradients, a distinct category of attacks, represented by RayS~\cite{10.1145/3394486.3403225} and ADBA~\cite{Wang_Zuo_Huang_Chen_2025}, abandons explicit gradient estimation. Instead, these methods directly search for the optimal perturbation direction within a discrete space (i.e., $\{-1, 1\}^d$), leveraging the empirical observation that $\ell_\infty$-bounded adversarial
perturbation often lies on the vertices of the $\ell_\infty$ ball~\cite{moon2019parsimonious} to accelerate the search process. As our proposed framework follows and improves upon this discrete search paradigm, we review these two baselines in detail.

%\vspace{1mm}
\noindent\textbf{RayS~\cite{10.1145/3394486.3403225}.} 
RayS converts continuous optimization into a discrete search problem over $\mathcal{H}\equiv\{-1,+1\}^d$. To this end, its optimization objective is:
     \begin{equation}\label{eq:newobjRayS}
    \begin{gathered}
  \min_{\mathbf{d}\in\mathcal{H}} g(\mathbf{d}) \quad \text{where} \ \mathcal{H}\equiv\{-1,+1\}^d.
        \end{gathered}
    \end{equation} 
Naive RayS (NRayS) heuristically optimizes Eq.~(\ref{eq:newobjRayS}) by flipping the sign of each dimension in the search space $\mathcal{H}$, while maintaining and updating the minimum perturbation magnitude $g(\mathbf{d}_{\text{best}})$ and its corresponding direction $\mathbf{d}_{\text{best}}$. It initializes the direction as an all-one vector, i.e. $\mathbf{d}_{\text{best}}=\mathbf{d}_0=\mathbf{1}$, and identifies $g(\mathbf{d}_0)$ via binary search.

To circumvent the inefficiency of dimension-wise flipping in NRayS, the authors proposed Hierarchical RayS (HRayS).
HRayS iteratively refines $\mathbf{d}_{\text{best}}$ through a hierarchical dyadic structure, effectively exploiting spatial correlations in image gradients~\cite{ilyas2019prior}. Specifically, HRayS constructs a standard dyadic search tree $\mathsf{T}_{\mathrm{dyad}}$ via the following three steps:

%this comprises three steps:

\noindent{\textbf{Step 1. Index partitioning.}} 
At each hierarchy level $l \in \{1, 2, \cdots,\lceil \log_2 d \rceil\}$, the index set $\Gamma=\{1,2,\cdots,d\}$ of vector $\mathbf{d}_{\text{best}}$ is sequentially partitioned into $2^l$ disjoint subsets $\{B_1^{(l)},B_2^{(l)},\cdots,B_{2^l}^{(l)}\}$ of equal length $d/{2^l}$.

\noindent{\textbf{Step 2. Sign flipping and direction update.}} 
For each subset $B_{k}^{(l)}$ ($k \in [1, 2^l]$), the signs of the entries in $\mathbf{d}_{\text{best}}$ corresponding to this subset are flipped, yielding a perturbed direction $\mathbf{d}_k^{(l)}$. The current best direction $\mathbf{d}_{\text{best}}$ is updated \emph{only if} the flipped direction yields a smaller boundary distance. That is
\begin{equation}
\label{eq:Decision1}
\begin{gathered}
\mathbf{d}_{\text{best}}=\begin{cases}  \mathbf{d}_k^{(l)}
 & \text{ if } g(\mathbf{d}_k^{(l)})< g(\mathbf{d}_{\text{best}})\\
 \mathbf{d}_{\text{best}}  & \text{Otherwise.}  \\
\end{cases}
\end{gathered}
\end{equation}
After traversing all $2^l$ subsets, $B_{ks}^{(l)}$ is defined as the union of subsets chosen for updating $\mathbf{d}_{\text{best}}$.

\noindent\textbf{Step 3. Refinement.} Steps 1 and 2 are repeated by increasing $l$ by $1$, progressively refining the direction at finer granularities, until $l= \lceil \log_2 d \rceil$, the query limit is reached, or $g(\mathbf{d}_{\text{best}})$ satisfies the perturbation budget.

%\vspace{1mm}
\noindent\textbf{ADBA~\cite{Wang_Zuo_Huang_Chen_2025}.} 
As the current SOTA for untargeted $\ell_\infty$ attacks, ADBA follows the hierarchical structure of RayS but identifies a critical inefficiency: RayS relies on binary search to calculate precise boundary distances $g(\cdot)$ for every candidate, which is query-expensive.
ADBA improves this by comparing \emph{pairs} of search directions. 

\noindent{\textbf{Step 1. Index partitioning:}} Identical to Step 1 in HRayS.

\noindent{\textbf{Step 2. Sign flipping and direction update.}} 
For each pair of adjacent subsets $({B}_{2k-1}^{(l)}, {B}_{2k}^{(l)})$ where $k\in[1,2^{l-1}]$, the signs of entries in $\mathbf{d}_{\text{best}}$ (initialized as $\mathbf{d}_0=\bm1$) corresponding to these two groups are flipped respectively, yielding two candidate directions $\bar{\mathbf{d}}_1$ and $\bar{\mathbf{d}}_2$. A comparative function $\Lambda(\cdot, \cdot)$ determines which direction is closer to the boundary only when both candidates are adversarial:
\begin{small}
\begin{equation}\label{eq:Decision2}
\begin{gathered}
\mathbf{d}_{\text{best}}=\begin{cases}
 \mathbf{d}_{\text{best}} & \text{ if } \mathcal{F}(\mathbf{x}+r\bar{\mathbf{d}}_1 )=y\ \&  \ \mathcal{F}(\mathbf{x}+r\bar{\mathbf{d}}_2 )=y \\
  \bar{\mathbf{d}}_1 & \text{ if } \mathcal{F}(\mathbf{x}+r\bar{\mathbf{d}}_1 )\neq y\ \&  \ \mathcal{F}(\mathbf{x}+r\bar{\mathbf{d}}_2 )=y  \\
  \bar{\mathbf{d}}_2 & \text{ if } \mathcal{F}(\mathbf{x}+r\bar{\mathbf{d}}_1 )= y\ \&  \ \mathcal{F}(\mathbf{x}+r\bar{\mathbf{d}}_2 )\neq y  \\
  \Lambda (\bar{\mathbf{d}}_1,\bar{\mathbf{d}}_2) & \text{Otherwise,} 
\end{cases}
\end{gathered}
\end{equation}
\end{small}where $r$ denotes the current minimal perturbation distance (initialized to $1$ for $[0,1]$-valued images), and $\Lambda(\cdot, \cdot)$ returns the direction with the smaller decision boundary distance. 
The function $\Lambda$ iteratively checks whether perturbing $\mathbf{x}$ by a gradually reduced $r$ along both $\bar{\mathbf{d}}_1$ and $\bar{\mathbf{d}}_2$ maintains adversarial status. This process stops at $r^*$ if a direction fails to produce an AE or a step limit (e.g., 8) is reached. After determining $\mathbf{d}_{\text{best}}$ and updating $r$ to $r^*$, the algorithm returns to Step 1.

\noindent\textbf{Step 3. Refinement.} Identical to Step 3 in HRayS.

\noindent\textit{Limitation:} Although ADBA substantially improves query efficiency over RayS, it relies on a naive all-ones initialization ($\mathbf{d}_0=\bm{1}$), which limits its starting efficiency. More importantly, ADBA remains a purely empirical method that lacks theoretical guarantees regarding its convergence or gradient approximation accuracy. In contrast, our work establishes a theoretically grounded framework that combines a frequency-guided initialization with a structured, pattern-aware optimization strategy, yielding both stronger guarantees and higher practical efficiency.

\section{Theoretical Foundations: From Discrete Search to Gradient Sign Estimation}
\label{sec:demon}

To address the lack of theoretical guarantees in RayS-like hard-label attacks, we draw inspiration from their soft-label counterpart SignHunter~\cite{al2020sign}. While SignHunter is proven to approximate FGSM, its proof relies on score-based derivative estimation, which is unavailable in hard-label settings. To bridge this gap, we establish a novel theoretical framework demonstrating that RayS-based algorithms (specifically NRayS and HRayS) asymptotically approximate FGSM. As ADBA is a derivative of these core algorithms, our analysis focuses on the fundamental RayS mechanism. We first derive a probabilistic lower bound for the alignment between a random binary vector and the true gradient sign. This provides a key building block for understanding why discrete sign-search methods can recover meaningful gradient information in high dimensions.

\begin{tcolorbox}[colback=blue!5!white,colframe=black!75!black
,
boxsep=0pt,
  left=4pt,
  right=4pt,
  top=4pt,
  bottom=4pt,
  boxrule=0.7pt,
  arc=2pt
]
\begin{theorem}[Approximation Lower Bound for NRayS]
\label{theo:angle}
   
    Let $\mathbf{u}\in\mathbb{S}^{d-1}$ be an arbitrary unit vector.
    Let $\{\mathbf{d}_j\}_{j=1}^m$ be $m$ vectors sampled independently and uniformly from the binary hypercube $\mathcal{H}\equiv\{-1,+1\}^d$, and let $\hat{\mathbf{d}}_j=\mathbf{d}_j/{\sqrt{d}}$ be their normalized counterparts.
    For any precision $\zeta \in (0,1)$ and failure probability $\delta \in (0,1)$, there exists a constant $C \ge 1$ such that if the sample size satisfies $m \geq -(\ln{\delta})e^{Cd\zeta^2}$, then with probability at least $1-\delta$, there exists at least one index $j \in \{1,\dots,m\}$ satisfying:
      \begin{equation}\label{eq:theo1_1_main}
    \begin{gathered}
\mathbf{u}^\top\hat{\mathbf{d}}_j \geq \zeta.
\end{gathered}
    \end{equation}
\end{theorem}
\end{tcolorbox}

\noindent\textit{Proof.} See Appendix ~\ref{app:theo1_proof}. By instantiating Theorem~\ref{theo:angle} with the normalized gradient $\mathbf{u}=\nabla\mathcal{L}/\|\nabla\mathcal{L}\|_2$, we conclude that \emph{NRayS, given a sufficient query budget $m$, guarantees finding a direction $\mathbf{d}\in \mathcal{H}$ that aligns with the true gradient $\nabla\mathcal{L}$ with high probability, thereby approximating the FGSM update.} While the theoretical bound implies high query complexity (e.g., assuming $C=1$, ensuring $\zeta\geq0.1$ with $50\%$ probability on $3\times32\times 32$ sized CIFAR-10~ \cite{krizhevsky2009learning} images requires $m \approx 1.5^{13}$ queries), empirical findings from SignHunter suggest that strict gradient alignment is unnecessary. In practice, partial alignment---recovering the signs of the top $20-30\%$ of gradients---suffices to achieve attack success rates exceeding $70\%$ on datasets like CIFAR-10 and ImageNet \cite{deng2009imagenet}.

Theorem~\ref{theo:angle} establishes the theoretical feasibility of using discrete search to approach the true gradient sign. However, purely random search remains inefficient in high dimensions. Extending this insight, Theorem~\ref{theo:raysappro} demonstrates how HRayS leverages structured estimation to bypass this bottleneck. It shows that HRayS constructs a direction aligned with the gradient sign, where the approximation quality is governed by the subset selection error rather than blind sampling.

\begin{definition}[Subset-wise Directional Derivative Influence]
\label{def:def1}
For a selected index subset $B_{ks}^{(l)}$ at round $l$ of HRayS (following the definition in Step 2 of HRayS; see Sec.~\ref{subsec:discrete_attacks}), we define its influence as the cumulative magnitude of the gradient components within that subset:
\begin{equation}\label{eq:theo2_proof2}
    I(B_{ks}^{(l)}) := \sum_{i\in B_{ks}^{(l)}} \left| \frac{\partial \mathcal{L}(\tilde{\mathbf{x}},y)}{\partial \tilde{\mathbf{x}}[i]} \bigg|_{\tilde{\mathbf{x}}=\mathbf{x}+g(\mathbf{d}_{\text{best}})\mathbf{d}_{\text{best}}} \right|.
\end{equation}
\end{definition}
This metric quantifies the collective contribution of dimensions in $B_{ks}^{(l)}$ to increasing the loss. Based on this influence measure, we establish the following subset selection
guarantee for HRayS.

\begin{lemma}
\label{lem:lem1}
Under standard regularity conditions (Assumptions~\ref{ass1} and~\ref{ass1_2} in App.~\ref{app:theo2_proof}),
the subset $B^{(l)}_{ks}$ selected by HRayS at round $l$ satisfies:
\begin{equation}
I(B_{ks}^{(l)}) \geq \max\nolimits_{B_k^{(l)}} I(B_{k}^{(l)}) - \eta_l,
\end{equation}
where $\eta_l$ represents the selection error bound at round $l$.
\end{lemma}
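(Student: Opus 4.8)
\noindent\textit{Proof sketch.}
The plan is to reduce the hard-label acceptance test of HRayS to a first-order comparison of directional derivatives at the current boundary point, and then read the influence bound off the arithmetic of the sign flips. Write $\tilde{\mathbf{x}}_0=\mathbf{x}+g(\mathbf{d}_{\text{best}})\mathbf{d}_{\text{best}}$ for the boundary point frozen in Definition~\ref{def:def1} and set $\mathbf{g}_0=\nabla\mathcal{L}(\tilde{\mathbf{x}}_0,y)$. First I would invoke Assumption~\ref{ass1} ($\beta$-smoothness of $\mathcal{L}(\cdot,y)$) together with Assumption~\ref{ass1_2} (regularity of the decision boundary, i.e.\ a uniform lower bound on the gradient norm in a neighborhood of the boundary) to obtain a one-sided Taylor estimate for $g$: for any candidate $\mathbf{d}'$ obtained from $\mathbf{d}_{\text{best}}$ by flipping a single block, $g(\mathbf{d}')<g(\mathbf{d}_{\text{best}})$ holds whenever $\langle\mathbf{g}_0,\mathbf{d}'-\mathbf{d}_{\text{best}}\rangle>\varepsilon$ and $g(\mathbf{d}')\ge g(\mathbf{d}_{\text{best}})$ holds whenever $\langle\mathbf{g}_0,\mathbf{d}'-\mathbf{d}_{\text{best}}\rangle<-\varepsilon$, where $\varepsilon$ is the quadratic remainder along the ray and is of order $\beta\,g(\mathbf{d}_{\text{best}})^2$ up to dimension-dependent factors. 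Intuitively, the closer $\mathbf{d}_{\text{best}}$ already is to the boundary, the more faithfully the scalar feedback $g(\mathbf{d}')<g(\mathbf{d}_{\text{best}})$ reports that the flip increased alignment with $\mathbf{g}_0$.

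Next I would compute a block flip explicitly. Flipping $B_k^{(l)}$ changes $\langle\mathbf{g}_0,\mathbf{d}_{\text{best}}\rangle$ by $-2\sum_{i\in B_k^{(l)}}\mathbf{g}_0[i]\,\mathbf{d}_{\text{best}}[i]=2\bigl(I^-(B_k^{(l)})-I^+(B_k^{(l)})\bigr)$, where $I^-(B_k^{(l)})$, resp.\ $I^+(B_k^{(l)})$, is the sum of $|\mathbf{g}_0[i]|$ over the indices $i\in B_k^{(l)}$ whose current sign $\mathbf{d}_{\text{best}}[i]$ disagrees with, resp.\ agrees with, $\text{sgn}(\mathbf{g}_0[i])$, so that $I(B_k^{(l)})=I^+(B_k^{(l)})+I^-(B_k^{(l)})$. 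Combining with the estimate above, HRayS keeps the flip of $B_k^{(l)}$ exactly when $I^-(B_k^{(l)})-I^+(B_k^{(l)})>\tfrac{1}{2}\varepsilon$ (up to the $\pm\varepsilon$ ambiguity), so $B_{ks}^{(l)}$ is the union of the blocks whose flip yields a net, non-negligible influence gain. Letting $B_{k^\star}^{(l)}$ attain $\max_k I(B_k^{(l)})$: if $I^-(B_{k^\star}^{(l)})-I^+(B_{k^\star}^{(l)})>\tfrac{1}{2}\varepsilon$ then $B_{k^\star}^{(l)}\subseteq B_{ks}^{(l)}$, so $I(B_{ks}^{(l)})\ge I(B_{k^\star}^{(l)})=\max_k I(B_k^{(l)})$; otherwise $I^-(B_{k^\star}^{(l)})\le I^+(B_{k^\star}^{(l)})+\tfrac{1}{2}\varepsilon$. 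To turn this second case into the stated bound I would exploit the hierarchical dyadic structure of HRayS: by the time level $l$ is reached, the coarser levels $1,\dots,l-1$ have already aligned $\mathbf{d}_{\text{best}}$ with $\mathbf{g}_0$ on all but an $O(\varepsilon)$ fraction of the influence inside any block of size $d/2^l$ (this is where Assumption~\ref{ass1_2} and the spatial coherence of gradients enter), forcing $I^+(B_{k^\star}^{(l)})=O(\varepsilon)$ and hence $I(B_{k^\star}^{(l)})=O(\varepsilon)$, so the inequality again holds. Setting $\eta_l$ equal to this $O(\varepsilon)$ quantity accumulated over the round proves the lemma.

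The step I expect to be the main obstacle is precisely this last reduction, combined with the fact that HRayS sweeps the $2^l$ blocks \emph{sequentially}: each accepted flip updates $\mathbf{d}_{\text{best}}$, which moves the reference point $\tilde{\mathbf{x}}_0$ and the gradient $\mathbf{g}_0$ that Definition~\ref{def:def1} fixes, so the correspondence between the acceptance test and the influence arithmetic only holds along a drifting base point. I would control this by bounding, via Assumptions~\ref{ass1} and~\ref{ass1_2}, the change in $(\tilde{\mathbf{x}}_0,\mathbf{g}_0)$ induced by one acceptance and summing over the at most $2^l$ acceptances in the round, folding the total drift---together with the gain-cancellation slack $\varepsilon$ that appears whenever a block mixes $I^+$ and $I^-$---into the error term $\eta_l$; making the level-$(l-1)$ coherence claim quantitative under the stated assumptions is the most delicate part of the argument.
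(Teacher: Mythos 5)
Your proposal takes a genuinely different, and in several respects more honest, route than the paper's own proof. The paper's argument is much softer: it Taylor-expands the loss to express the change from a block flip as $-2g(\mathbf{d})\sum_{i\in B}\nabla\mathcal{L}(\tilde{\mathbf{x}})[i]\mathbf{d}[i]$, takes absolute values to upper-bound this by $2g(\mathbf{d})I(B)$, pairs it with the Lipschitz bound $2R|B|$ from Assumption~\ref{ass1_2}, then simply \emph{asserts} that the loss change is ``a bounded and monotone function of $I(B)$'' and defines $\eta_l$ to be whatever selection gap occurs. In effect the paper makes the inequality hold by construction, folding all the difficulty into the (unquantified) monotonicity claim. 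Your proof instead isolates precisely where that monotonicity claim fails: the acceptance test $g(\mathbf{d}')<g(\mathbf{d}_{\text{best}})$ is driven by the signed quantity $I^-(B)-I^+(B)$, not by the unsigned magnitude $I(B)=I^+(B)+I^-(B)$, so a block can have maximal $I(B)$ while producing no gain if $I^+ \approx I^-$. That observation is a real contribution and more than the paper's proof provides.

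However, there is a concrete gap in your Case~2. When $B_{k^\star}^{(l)}$ is rejected you have $I^-(B_{k^\star}^{(l)})\le I^+(B_{k^\star}^{(l)})+\tfrac{1}{2}\varepsilon$, and you then invoke coarse-level alignment to claim $I^+(B_{k^\star}^{(l)})=O(\varepsilon)$. This has the roles of $I^+$ and $I^-$ reversed: if the coarser rounds have already aligned $\mathbf{d}_{\text{best}}$ with $\mathbf{g}_0$ inside the block, it is the \emph{disagreeing} mass $I^-$ that becomes $O(\varepsilon)$, while the agreeing mass $I^+$ becomes large (it approaches $I(B_{k^\star}^{(l)})$ itself). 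Neither the rejection condition nor the coherence argument therefore controls $I(B_{k^\star}^{(l)})=I^++I^-$, and the desired bound $I(B_{ks}^{(l)})\ge I(B_{k^\star}^{(l)})-\eta_l$ does not follow: in the extreme where every block is already well-aligned, no flip is accepted, $B_{ks}^{(l)}$ is empty so $I(B_{ks}^{(l)})=0$, yet $\max_k I(B_k^{(l)})$ can be arbitrarily large. The only escape is to make $\eta_l$ absorb this discrepancy, which is exactly the tautological move the paper takes and which your proof was, commendably, trying to avoid. As stated, then, your Case~2 does not close; you would need either to reformulate the lemma so that it only claims large influence among blocks \emph{whose flip is actually beneficial}, or to accept the paper's definitional choice of $\eta_l$.
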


\noindent\textit{Proof.} See Appendix~\ref{app:theo2_proof}. This result guarantees that each HRayS round identifies a high-impact subset, enabling the cumulative progress analysis in the following theorem.

\begin{tcolorbox}[colback=blue!5!white,colframe=black!75!black,
boxsep=0pt,
  left=4pt,
  right=4pt,
  top=4pt,
  bottom=4pt,
  boxrule=0.7pt,
  arc=2pt
]
\begin{theorem}[Gradient Approximation Guarantee for HRayS]
\label{theo:raysappro}
Assume that the gradient magnitude is uniformly bounded, i.e.,
$\|\nabla \mathcal{L}(\mathbf{x},y)\|_\infty \le G_{\max}$.
Let $\eta_l$ denote the maximum deviation between the directional derivative
influence of the subset $B_{ks}^{(l)}$ selected by HRayS at round $l$
and that of the optimal subset $\arg\max_{B_k^{(l)}} I(B_k^{(l)})$.
Let $\hat{\mathbf{d}}^*$ be the final direction produced by HRayS.
Then, the alignment between $\hat{\mathbf{d}}^*$ and the true gradient sign satisfies
\begin{equation}
\mathrm{sgn}(\nabla \mathcal{L}(\mathbf{x},y))^\top \hat{\mathbf{d}}^*
\;\ge\;
\frac{\|\nabla \mathcal{L}(\mathbf{x},y)\|_1}{G_{\max}}
\cdot
(1-\varpi),
\end{equation}
where
$\varpi =
\sum_{l=1}^{\lceil \log_2 d \rceil}
\eta_l \|\nabla \mathcal{L}(\mathbf{x},y)\|_1^{-1}$
denotes the cumulative relative error induced by suboptimal subset selections
across all hierarchical rounds.
\end{theorem}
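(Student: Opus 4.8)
\textit{Proof (proposal).}
The plan is to measure HRayS's progress not by raw coordinate agreement with $\mathrm{sgn}(\nabla\mathcal{L}(\mathbf{x},y))$ but by the \emph{gradient mass} of the coordinates it has correctly resolved, and then to convert that mass statement into the claimed angular bound using $\|\nabla\mathcal{L}\|_\infty\le G_{\max}$. First I would fix the current boundary point $\tilde{\mathbf{x}}=\mathbf{x}+g(\mathbf{d}_{\text{best}})\mathbf{d}_{\text{best}}$ and linearize $g$ there, so that the $g$-decrease produced by flipping a block $B_k^{(l)}$ is --- up to the regularity slack absorbed by the assumptions behind Lemma~\ref{lem:lem1}, which also covers the gap between the boundary-point gradient and $\nabla\mathcal{L}(\mathbf{x},y)$ --- monotone in the directional-derivative influence (Definition~\ref{def:def1}) carried by the coordinates that the flip pulls toward $\mathrm{sgn}(\nabla_i\mathcal{L})$. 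Thus HRayS's keep/reject test is a noisy surrogate for ``align each block with the gradient sign,'' and Lemma~\ref{lem:lem1} certifies that at round $l$ the selected union $B_{ks}^{(l)}$ falls short of the influence-optimal choice by at most $\eta_l$.

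Next I would exploit the dyadic nesting of the partitions. Because the level-$(l{+}1)$ blocks refine the level-$l$ blocks and the finest level $l=\lceil\log_2 d\rceil$ isolates single coordinates, in the error-free idealization the finest level flips every still-misaligned coordinate individually, producing $\hat{\mathbf{d}}^*=\mathrm{sgn}(\nabla\mathcal{L})$ and hence matched mass exactly $\|\nabla\mathcal{L}\|_1$. The argument is then a charging scheme over $l=1,\dots,\lceil\log_2 d\rceil$: by Lemma~\ref{lem:lem1} each round loses at most $\eta_l$ of influence relative to the best available choice, and --- using the nesting together with the fact that $g(\mathbf{d}_{\text{best}})$ is only ever decreased --- every coordinate that $\hat{\mathbf{d}}^*$ leaves mis-set is attributable to one of these per-round deficits. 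Summing, the gradient mass of the mismatched set is at most $\sum_l\eta_l$, so the matched set $M=\{i:\hat d^*_i=\mathrm{sgn}(\nabla_i\mathcal{L})\}$ satisfies $\sum_{i\in M}|\nabla_i\mathcal{L}|\ge\|\nabla\mathcal{L}\|_1-\sum_l\eta_l$.

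Finally I would pass to the angular bound: since $|\nabla_i\mathcal{L}|\le G_{\max}$ for all $i$, the mass bound forces $|M|\ge(\|\nabla\mathcal{L}\|_1-\sum_l\eta_l)/G_{\max}$, and writing $\mathrm{sgn}(\nabla\mathcal{L})^\top\hat{\mathbf{d}}^*=|M|-|\Gamma\setminus M|$ and folding the negative $\Gamma\setminus M$ contribution (whose mass is at most $\sum_l\eta_l$) into the error term gives $\mathrm{sgn}(\nabla\mathcal{L})^\top\hat{\mathbf{d}}^*\ge\frac{\|\nabla\mathcal{L}\|_1}{G_{\max}}(1-\varpi)$ with $\varpi=\sum_l\eta_l/\|\nabla\mathcal{L}\|_1$. (As a consistency check, $\|\nabla\mathcal{L}\|_1\le dG_{\max}$ keeps the right-hand side at most $d$, the value of a perfectly aligned $\pm1$ direction.)

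I expect the main obstacle to be the charging scheme in the second paragraph. The delicate point is \emph{reversal}: a coordinate flipped correctly at a coarse level can be flipped back at a finer level whenever the net $g$-change of its finer block is dominated by other still-misaligned coordinates in that block, so one must argue --- via the nesting and the monotone maintenance of $g(\mathbf{d}_{\text{best}})$ --- that every such reversal is itself charged inside the same $\sum_l\eta_l$ budget and no coordinate is double-counted. A secondary subtlety is the last step: converting a mismatched-\emph{mass} bound into a clean multiplicative factor either needs a mild lower bound on the nonzero gradient entries or, as above, absorbs the $\Gamma\setminus M$ term directly into $\varpi$ --- which is exactly why the theorem is stated with a relative error $1-\varpi$ rather than an additive one.
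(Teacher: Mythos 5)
Your charging argument --- passing from the per-round selection guarantee of Lemma~\ref{lem:lem1} to the mass bound $\sum_{i\in S}|\nabla_i\mathcal{L}| \ge \|\nabla\mathcal{L}\|_1 - \sum_l\eta_l$, then dividing by $G_{\max}$ to lower-bound the number of correctly resolved coordinates --- is exactly what the paper does in Eqs.~(\ref{eq:theo2_proof5})--(\ref{eq:theo2_proof7}), and your worry about reversal/double-counting in the dyadic recursion is a genuine weakness of the paper's own argument too (it is asserted, not worked out). The gap is in your final conversion to the inner-product bound. You write $\mathrm{sgn}(\nabla\mathcal{L})^\top\hat{\mathbf{d}}^* = |M| - |\Gamma\setminus M|$ and claim the negative term can be ``folded into $\varpi$'' because the gradient \emph{mass} of $\Gamma\setminus M$ is at most $\sum_l\eta_l$. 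But bounding $\sum_{i\in\Gamma\setminus M}|\nabla_i\mathcal{L}|$ does not bound $|\Gamma\setminus M|$: the mis-set complement can be arbitrarily large if its coordinates carry tiny gradient, and each of them still contributes $-1$ to the inner product. Concretely, $|M|-|\Gamma\setminus M|=2|M|-d$, and with $|M|\ge(\|\nabla\mathcal{L}\|_1-\sum_l\eta_l)/G_{\max}$ the right-hand side matches the claimed bound only when $|M|\ge d$, i.e.\ perfect recovery --- so the fold-in step does not go through.

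The paper sidesteps this by never decomposing the inner product into matched and mismatched counts. It declares $\hat{\mathbf{d}}^*[i]$ on $i\notin S$ to be an arbitrary/random $\pm1$ (Eq.~(\ref{eq:theo2_proof4})) and then takes expectation over that randomness, so the $S^c$ contribution vanishes and $\mathrm{sgn}(\nabla\mathcal{L})^\top\hat{\mathbf{d}}^*$ is replaced by $|S|$ alone (Eqs.~(\ref{eq:theo2_proof5})--(\ref{eq:theo2_proof6})); the $G_{\max}$ and charging steps are applied to $|S|$. In other words, the paper's bound is really a bound on the \emph{expected} alignment under randomized unresolved coordinates, and the $-|\Gamma\setminus M|$ term that you correctly identify as the obstruction simply never appears in its decomposition. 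Your instinct at the end --- that a deterministic version would need either a lower bound on the nonzero gradient magnitudes or some control on the size of $\Gamma\setminus M$ --- is right; the theorem as stated does not supply either, and neither does the paper's proof.
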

\end{tcolorbox}

%\vspace{-2mm}
\noindent\textit{Proof.} See Appendix (App.)~\ref{app:theo2_proof}. Theorem~\ref{theo:raysappro} reveals that although HRayS does not explicitly estimate the gradient, its hierarchical decision mechanism effectively mimics the selection of dimensions with the steepest ascent. By progressively refining the coordinate set, HRayS constructs a direction $\hat{\mathbf{d}}^*$ that closely aligns with $\text{sgn}(\nabla \mathcal{L}(\mathbf{x},y))$, provided the selection error $\varpi$ remains small.

%\vspace{-2mm}
\noindent\textbf{Implications for Our Approach.}
Theorems~\ref{theo:angle} and~\ref{theo:raysappro} suggest that the efficiency of hard-label attacks hinges on two factors: (1) the search space dimensionality (governing the bound in Theorem~\ref{theo:angle}), and (2) the cumulative selection error (quantified by $\varpi$ in Theorem~\ref{theo:raysappro}).
Motivated by these theoretical insights, we propose DPAttack, which bridges these gaps through two principled design choices: Frequency-Prior Initialization (Sec.~\ref{sec:betinit}): 
    To mitigate the high dimensionality penalty, we utilize frequency-domain priors to design a zero-query initialization. Unlike random noise, this direction exhibits local spatial coherence (i.e., blocks of consistent signs), effectively reducing the search complexity from the full pixel space to a structured subspace. Theorem~\ref{theo:ddm} further guarantees that this method yields a positive expected alignment with the true gradient sign. \textbf{Pattern-Driven Optimization (Sec.~\ref{sec:pdo}):} 
    To address the cumulative error $\varpi$ (Theorem~\ref{theo:raysappro}), we identify that the fixed partitioning in RayS and ADBA often disrupts the spatial coherence established by our initialization, leading to conflicting updates (gain cancellation). We thus employ a tailored pattern-driven search that groups pixels by their initial sign coherence. By respecting initialization's intrinsic structures, this strategy is proven to dominate baselines in sign alignment (Theorem~\ref{thm:main_text}) and reduce query complexity (Theorem~\ref{theo:complexity-fixed}).

%\vspace{-3mm}
\section{Our Proposed Attack}\label{sec:method}
\subsection{Threat Model}
\label{subsec:thrmodel}

We align our threat model with standard hard-label black-box attack settings~\cite{10.1145/3394486.3403225}, defined by the following aspects:

\noindent \textbf{Adversary's Goal.} 
The adversary aims to generate an untargeted adversarial example $\mathbf{x}^*$ that misclassifies the victim model (i.e., $\mathcal{F}(\mathbf{x}^*) \neq y$) while maintaining the perturbation within an imperceptible budget (e.g., satisfying $\ell_\infty$ or $\ell_2$-norm constraints). Formally, we adopt the same optimization objective as RayS (defined in Eq.~(\ref{eq:newobjRayS})), minimizing the distance to the decision boundary within the discrete search space.

\noindent \textbf{Adversary's Knowledge and Capabilities.} 
We assume a hard-label black-box scenario. The adversary can only interact with the victim model $\mathcal{F}$ via an oracle (e.g., an online API) to obtain the top-1 predicted label for a given input. Access to the model's architecture, parameters, gradients, or continuous output probabilities (soft labels) is strictly prohibited.

\begin{figure*}[t!]\centering
  \includegraphics[width=\textwidth]{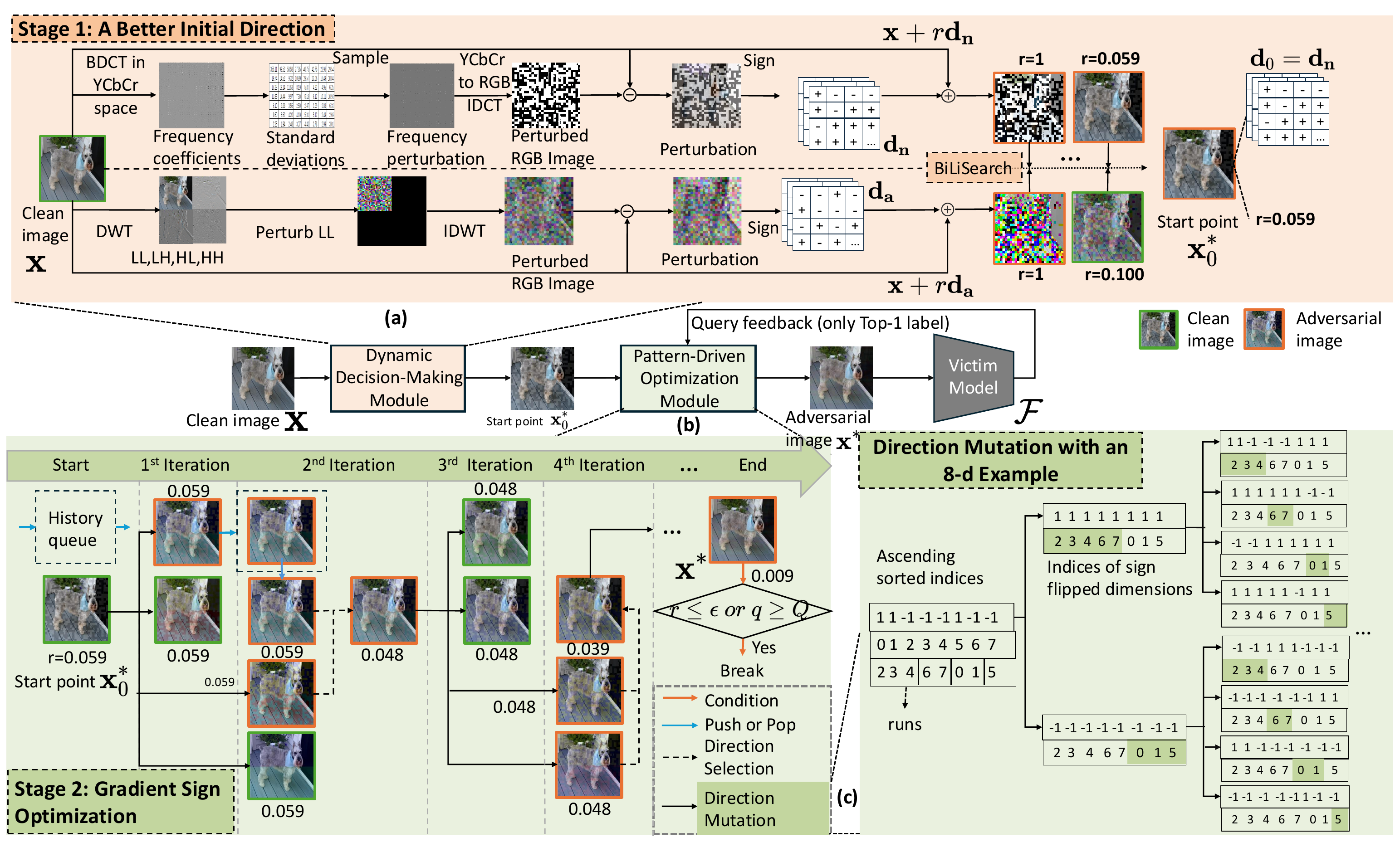}
  \caption{Framework of the proposed hard-label attack method DPAttack. (a) Stage 1: The Dynamic Decision-Making (DDM) module generates a superior initial perturbation direction $\mathbf{d}_0$. (b) Stage 2: The Pattern-Driven Optimization (PDO) module iteratively refines the direction and the perturbation magnitude $r$, yielding the final adversarial example $\mathbf{x}^*=\operatorname{clip}(\mathbf{x}+r\hat{\mathbf{d}}^*,[0,1])$.}
  \label{fig:overview}
\end{figure*}

\subsection{The Overall Attack Pipeline of DPAttack}
\label{subsec:dpaAttack}

Guided by the theoretical insights established in Sec.~\ref{sec:demon}, we propose DPAttack, a coarse-to-fine framework designed to efficiently approximate the true gradient sign in FGSM. As illustrated in Fig.~\ref{fig:overview} and Algorithm~\ref{alg:algorithmtotal}, the pipeline consists of two integrated stages: (1) The Dynamic Decision-Making (DDM) Module (Sec.~\ref{sec:betinit}), which leverages frequency priors to generate a structured and informative initialization; and (2) The Pattern-Driven Optimization (PDO) Module (Sec.~\ref{sec:pdo}), which refines this direction by exploiting spatial coherence to accelerate convergence. We detail these components below.

\subsection{Better Initialization via DDM}
\label{sec:betinit}

Our initialization approach relies on the premise that profiling model vulnerabilities provides effective prior knowledge for constructing potent initial perturbations. While previous studies~\cite{chen2022rethinking} have used Shapley values to identify frequency sensitivity, this metric is derived by masking frequency bands rather than perturbing them, making it ill-suited for capturing sensitivity to additive noise. To address this, we introduce a novel analysis method and propose the DDM module.

\subsubsection{Analysis: Block-DCT Frequency Sensitivity (BFS)}
\label{sub:bfs_analysis}

To accurately characterize model vulnerability to frequency-based noise, we propose BFS. While drawing conceptual inspiration from the frequency sensitivity analysis in~\cite{yin2019fourier}, our method significantly advances this framework by introducing two critical modifications tailored for adversarial efficiency: 
1) We employ the variation in CE loss rather than classification error rate to provide a fine-grained sensitivity metric; 2) We apply perturbations in the Block Discrete Cosine Transform (BDCT) domain~\cite{wallace1991jpeg} instead of the Fast Fourier Transform domain \cite{gonzalez2018digital}, as DNNs are typically more vulnerable to the block-wise artifacts introduced by BDCT.

\noindent{\textbf{BFS Procedure.}} 
Given an RGB image $\mathbf{x}\in\mathbb{R}^{C\times H\times W}$, we convert it to YCbCr space and divide it into non-overlapping blocks of size $w \times w$ (e.g., $w=8$). Performing DCT on these blocks yields the frequency matrix $\mathbf{I}\in\mathbb{R}^{C\times Z\times w\times w}$, where $Z=(H/w)\times(W/w)$ is the total number of blocks. 
To measure sensitivity, we define a Gaussian noise distribution $\mathcal{N}(0, \sigma^2_{\text{max}})$, where $\sigma_{\text{max}}$ is calibrated such that the perturbation exhausts the pixel-domain budget $\epsilon$. For a specific frequency coordinate $(c, i, j)$, we sample a noise vector $\hat{\mathbf{w}}_{c,i,j} \in \mathbb{R}^Z$ from this distribution and add it to the corresponding coefficients across all blocks. The perturbed image $\hat{\mathbf{x}}$ is obtained via Inverse BDCT (IBDCT) and projection:
\begin{equation}\label{eq:abft}
    \hat{\mathbf{x}}=\Pi_{\epsilon,\mathbf{x}}\big(\text{IBDCT}(\mathbf{I}[c,:,i,j]+\hat{\mathbf{w}}_{c,i,j},w)\big)
\end{equation}
where $\Pi_{\epsilon,\mathbf{x}}(\cdot)$ denotes the projection operator that maps the input onto a hypersphere centered at $\mathbf{x}$ with radius $\epsilon$.

\noindent{\textbf{Observation: The Correlation with Clean Image Statistics.}} 
We applied BFS to 1,000 ImageNet validation images across diverse architectures. Results on ResNet-50, ViT-B-32, and adversarially trained WideResNet-50 (Fig.~\ref{fig:dctfreqsens}) reveal that model sensitivity (CE loss) exhibits an oscillatory decay from low to high frequencies. Crucially, we observe that this sensitivity pattern is closely related to the frequency-domain variance of clean images. We compute the standard deviation of frequency coefficients across blocks for clean images as:
\begin{equation}\label{eq:d1_1}
    \sigma_{c,i,j}=\sqrt{\frac{1}{Z}\sum\limits_{z=0}^{Z-1}|\mathbf{I}[c,z,i,j]-\mu_{c,i,j}|^2}.
\end{equation}
As shown in Fig.~\ref{fig:logvar}, the distribution of $\sigma_{c,i,j}$ exhibits a similar decaying trend. 
To quantitatively assess this relationship, we compute the Pearson correlation between the channel-wise BFS sensitivity curves and the corresponding clean-image frequency variance profiles. As summarized in Table~\ref{tab:pearson}, we observe a strong and statistically significant positive correlation across standard architectures such as ViT-B-32 and ResNet-50.
For adversarially trained models, this positive correlation remains evident in the luminance channel (Y) (e.g., $\bar{r} \approx 0.7$), while the chrominance channels exhibit more heterogeneous or mildly negative correlations.
Importantly, this does not alter the overall trend, since the sensitivity of chrominance channels is consistently much lower than that of the luminance channel across all models, as reflected by their substantially smaller CE responses in Fig.~\ref{fig:dctfreqsens}.

While targeting model-specific sensitivity peaks in BFS curves (BFS-peak) may ideally seem optimal, identifying the exact band is almost infeasible in black-box settings where the architecture is unknown. Moreover, empirical evidence (App.~\ref{app:freqprior}) reveals that compared to clean-image statistics, BFS-peak yields only marginal gains on matched models while suffering severe degradation on mismatched ones. Therefore, we leverage the intrinsic frequency properties of clean images to establish a practical, model-agnostic prior.

\begin{algorithm}[t!]
\small
\caption{\textbf{DPAttack} Process}
\label{alg:algorithmtotal}
\textbf{Input}: Target classifier $\mathcal{F}$, clean RGB image $\mathbf{x}$, GT label $y$, perturbation constraint $\epsilon$, candidate block size set $\mathbb{W}$.\\
\textbf{Output}: Adversarial example $\mathbf{x}^*$.
\begin{algorithmic}[1]
%\STATE \textbf{Initialization Phase:}
\STATE Obtain optimal block size $w$ and initialization direction $\mathbf{d}_\mathbf{n}$ via Dynamic Blocksize Selection:
\quad $(w, \mathbf{d}_\mathbf{n}, r) \gets \operatorname{DBS}(\mathcal{F}, \mathbf{x}, y, \mathbb{W})$ \hfill \COMMENT{\textit{refer to Alg.~\ref{alg:algorithmDBS}}}
\STATE Initialize low-pass filtered direction $\phi(\hat{\mathbf{d}}_\mathbf{r})$ with $h\leftarrow w$.
\STATE Perform initial direction selection and boundary search:
\STATE \quad $(\mathbf{d}_0, r_0) \gets \operatorname{BiLiSearch}(\mathcal{F}, \mathbf{d}_\mathbf{n}, \phi(\hat{\mathbf{d}}_\mathbf{r}), \mathbf{x}, y,r)$\hfill \COMMENT{\textit{refer to Alg.~\ref{alg:bilisearch}}}
\STATE Set current best parameters: $\mathbf{d}_{\text{best}} \gets \mathbf{d}_0$ and $r \gets r_0$.

%\STATE \textbf{Optimization Phase:}
\WHILE{Query Budget $> 0$ \textbf{and} $r > \epsilon$}
    \STATE Update $\mathbf{d}_{\text{best}}$ and $r$ via Pattern-Driven Optimization (Sec.~\ref{sec:pdo}).
\ENDWHILE

\RETURN $\mathbf{x}^* \gets \operatorname{clip}(\mathbf{x} + r \cdot \mathbf{d}_{\text{best}}, [0, 1])$
\end{algorithmic}
\end{algorithm}

\begin{figure}
\centering
  \includegraphics[width=\linewidth]{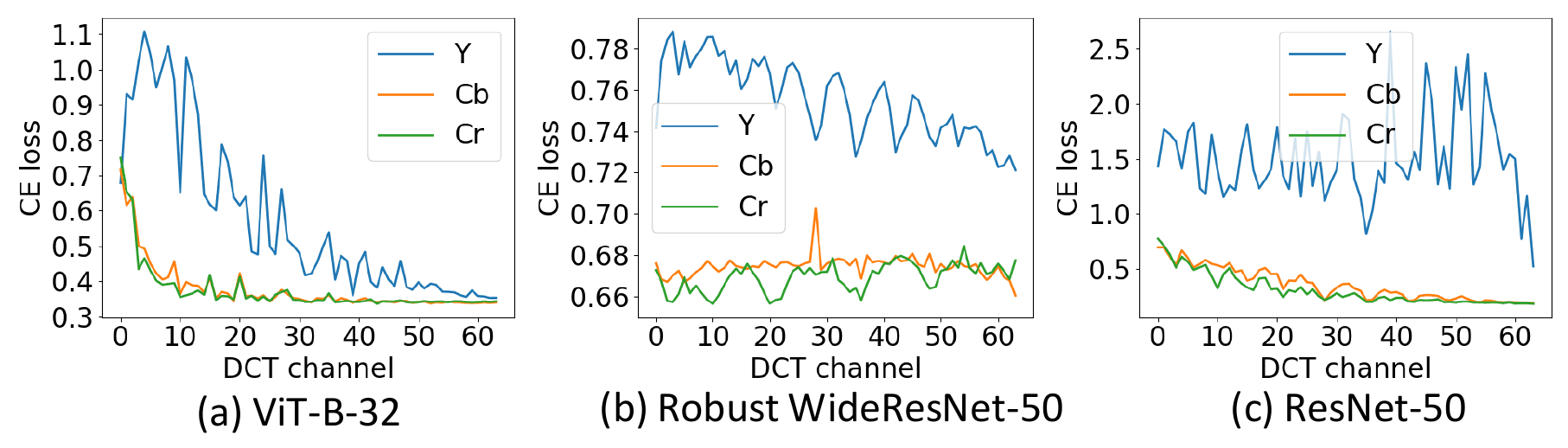}
  \caption{Classification sensitivity analysis of frequency bands via the proposed BFS. Higher CE loss indicates greater sensitivity of the corresponding BDCT frequency band.}
 
  \label{fig:dctfreqsens}
\end{figure}

\begin{figure}
\centering
  \includegraphics[width=\linewidth]{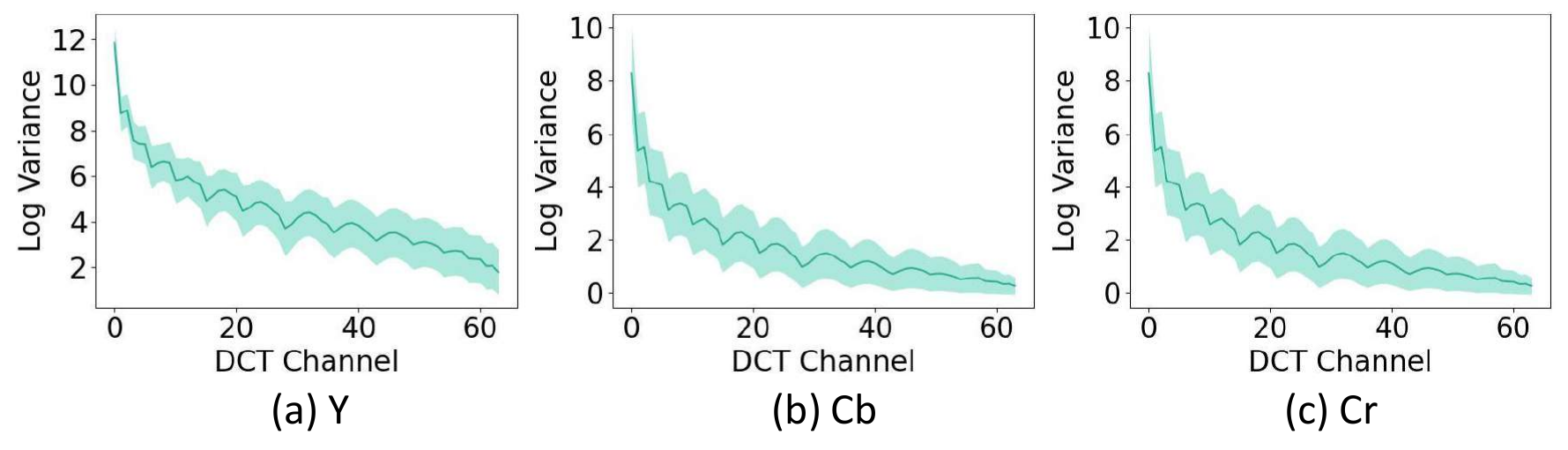}
  \caption{Clean image frequency statistics. The solid line and the shaded region represent the mean and standard deviation of the log-variance $\log(\sigma_{c,i,j}^2+1)$ (defined in Eq.~(\ref{eq:d1_1})).}
  \label{fig:logvar}
\end{figure}

% Table generated by Excel2LaTeX from sheet 'Pearson'
\begin{table}[t!]
\footnotesize
\tabcolsep=0.15cm
\caption{Pearson correlation analysis ($\bar{r}^{p-\text{value}}$) between the proposed BFS profile and the clean image frequency variance across different models and color channels.}
\centering 
\begin{threeparttable}
\begin{tabular}{c|ccc}
    \Xhline{0.8pt}
     Channel & ViT-B-32   & WRS-50 & ResNet-50 \\
    \hline
     Y     & 0.8074$^{7.69E-16}$ & 0.7024$^{9.87E-11}$ & 0.0327$^{7.97E-01}$ \\
 Cb    & 0.9274$^{3.61E-28}$ & -0.1714$^{1.76E-01}$ & 0.9137$^{6.18E-26}$ \\
 Cr    & 0.8844$^{3.44E-22}$ & -0.4087$^{7.99E-04}$ & 0.9618$^{1.40E-36}$ \\
    \Xhline{0.8pt}
    \end{tabular}     
         \begin{tablenotes}   
        \footnotesize   
         \item[1] $\bar{r}$: the correlation coefficient; $p$-value: the statistical significance; WRS-50: Robust WideResNet-50. 
        % \item[1] Ch.: Channel; wrs50: Robust WideResNet-50; $\bar{r}$: the correlation coefficient; $p$-value: the statistical significance.         %
      \end{tablenotes}           %这行要添加
    \end{threeparttable}\label{tab:pearson}       %这行要添加，到这里结束
    %}

\end{table}

\subsubsection{Methodology: Constructing Initial Directions}\label{sec:dirs}

Based on the observation above, we propose using the clean image frequency variance $\sigma_{c,i,j}$ directly as a prior to construct an initialization direction $\mathbf{d_n}$. For comparison and diversity, we also design two alternative directions, $\mathbf{d_b}$ and $\mathbf{d_r}$, based on gradient local similarity~\cite{ilyas2019prior} and color sensitivity~\cite{andriushchenko2020square}.

\noindent{\textbf{1. Variance-Based Direction ($\mathbf{d_n}$).}} We construct the frequency perturbation matrix $\tilde{\Delta}$ by sampling noise coefficients $\mathbf{w}_{c,i,j}\sim\mathcal{N}(0,\sigma_{c,i,j}^2)$ for each frequency band, where $\sigma_{c,i,j}^2$ is computed solely from the clean image $\mathbf{x}$ at attack time. The spatial perturbation direction is then derived by:
\begin{equation}\label{eq:d1}
    \mathbf{x_n^\prime}= \operatorname{IBDCT}(\mathbf{I}+\tilde{\Delta},w), \enspace \mathbf{d_n} = \text{sgn}(\mathbf{x^\prime_n}-\mathbf{x}), \enspace \text{with } \text{sgn}(0):=1.
\end{equation}
Zero elements in $\mathbf{d_n}$ are replaced with $1$ to maintain a dense search space. Generating $\mathbf{d_n}$ requires no validation set, model query, or pre-training.

\noindent{\textbf{2. Gradient-Similarity-Based Direction ($\mathbf{d_b}$).}}
Inspired by HRayS, we verify that gradient spatial correlations persist even after sign quantization (Fig.~\ref{fig:gradcossim}(a)). To leverage this sign-level coherence, we design $\mathbf{d_b}$ as a block-wise structure consisting of $\hat{m}$ segments of length $\hat{n}$, where each segment is assigned a uniform sign:
\begin{equation}\label{eq:bardef}
\begin{gathered}
\mathbf{d_b}=\Big((-1)^{\lfloor(i-1)/\hat{n}\rfloor}\Big)_{i=1}^{\hat{m}}.
\end{gathered}
\end{equation}

\noindent{\textbf{3. Square-Color-Based Direction ($\mathbf{d_r}$).}} 
Leveraging the sensitivity of DNNs to square-shaped color perturbations~\cite{andriushchenko2020square}, we generate $\mathbf{d_r}$ as the sign of the residual between the clean image $\mathbf{x}$ and a randomly square-perturbed image $\mathbf{x}'_r$, where the square has side length $h$. That is:
\begin{equation}\label{eq:rcolor5}
\mathbf{d_r}=\text{sgn}(\mathbf{x}^\prime_r-\mathbf{x}),\quad \text{with } \text{sgn}(0):=1.
\end{equation}
Details for generating $\mathbf{x}^\prime_r$ are presented in App.~\ref{app:dr}.

\noindent{\textbf{Low-Frequency Focusing Method ($\phi$).}} 
To mitigate inter-band interference, we introduce a wrapper $\phi(\cdot)$ that confines perturbations to the low-frequency component $\mathbf{x}_{\text{LL}}$ via Discrete Wavelet Transform (DWT)~\cite{gonzalez2018digital}. We prioritize DWT over BDCT to minimize hyperparameters, reserving BDCT for generating pixel-domain block-wise artifacts.
We extract the low-pass component $\mathbf{x}_{\text{LL}}\in\mathbb{R}^{C\times\hat{H}\times\hat{W}}$ via:
\begin{equation}\label{eq:d2_1}
\mathbf{x}_{\text{LL}},\mathbf{x}_{\text{others}}=\operatorname{DWT}(\mathbf{x},dl),
\end{equation}
where $\hat{H}=H/2^{dl}$ and $\hat{W}=W/2^{dl}$. The decomposition level $dl$ is bounded by $\lfloor\log_2(\bar{w})\rfloor$, where $\bar{w}$ is the pattern size of the underlying strategy (i.e., $w$ for $\mathbf{d_n}$, $\hat{n}$ for $\mathbf{d_b}$, or $h$ for $\mathbf{d_r}$). We isolate low frequencies by setting the residuals $\mathbf{x}_{\text{others}}=\mathbf{0}$ and applying a perturbation $\hat{\mathbf{d}}_\mathbf{a}\in\{\hat{\mathbf{d}}_\mathbf{n},\hat{\mathbf{d}}_\mathbf{b},\hat{\mathbf{d}}_\mathbf{r}\}$ to $\mathbf{x}_{\text{LL}}$. Note that $\hat{\mathbf{d}}_\mathbf{a}$ is generated with pattern size scaled by $2^{-dl}$ (e.g., $w/2^{dl}$ for $\hat{\mathbf{d}}_\mathbf{n}$). The perturbed low-frequency component $\hat{\mathbf{x}}_{\text{LL}}=\mathbf{x}_{\text{LL}}+\hat{\mathbf{d}}_\mathbf{a}$ is then projected back to the pixel space via Inverse DWT:
\begin{equation}\label{eq:d2_end}
\mathbf{x}^\prime_\mathbf{a}=\text{IDWT}(\hat{\mathbf{x}}_{\text{LL}},\mathbf{0},dl).
\end{equation}
Encapsulating this process as $\phi(\hat{\mathbf{d}}_\mathbf{a})$, we derive the final direction $\mathbf{d_a}$ by taking the sign of the difference, ensuring non-zero entries:
\begin{equation}\label{eq:d2_end_}
\mathbf{d_a}=\text{sgn}(\mathbf{x}^\prime_\mathbf{a}-\mathbf{x}), \quad \text{with } \text{sgn}(0):=1.
\end{equation}

\begin{figure}
\centering
  \includegraphics[width=\linewidth]{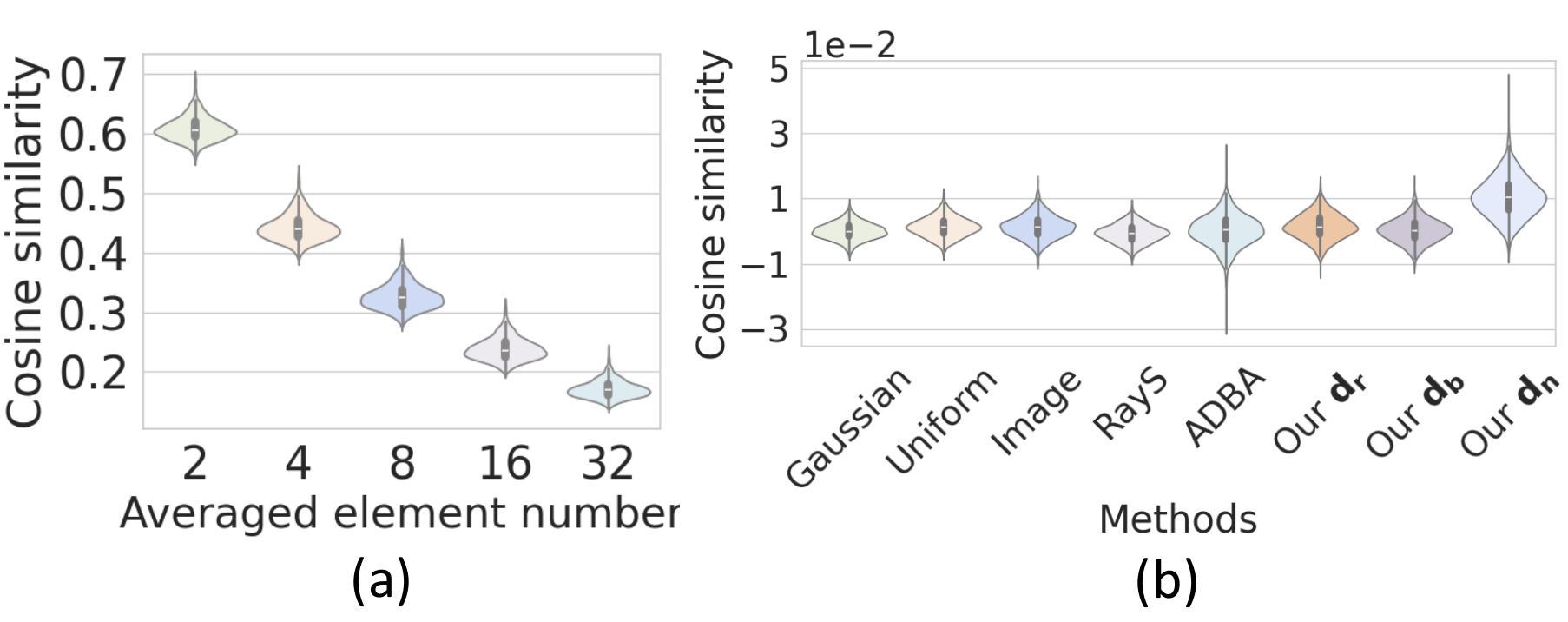}
  \caption{Cosine similarity between the true gradient sign and: (a) its block-wise averaged approximations; (b) various initialization directions.}
  % \Description{Enjoying the baseball game from the third-base
  % seats. Ichiro Suzuki preparing to bat.}
  \label{fig:gradcossim}
\end{figure}

% Table generated by Excel2LaTeX from sheet '分析'
\begin{table}[t!]
\tabcolsep=0.06cm
\small
  \centering
  \caption{Comparison of curvature ($\kappa$) and perturbation distances ($\ell_\infty$, $\ell_2$) across initialization methods. ``Image'' denotes a random sample from a different class.}
    \begin{tabular}{c|cc|cc|cc}
    \Xhline{0.8pt}
    Metrics & \multicolumn{2}{c|}{Curvature $\kappa$$\uparrow$}  & \multicolumn{2}{c|}{$\ell_\infty$$\downarrow$} & \multicolumn{2}{c}{$\ell_2$$\downarrow$} \\
    \hline
    Methods & Avg.   & Med.  & Avg.   & Med.& Avg.   & Med. \\
    \hline
    Gaussian &           58,736  &           49,404   &        0.363  &        0.341  &       49.4  &       46.7  \\
    Uniform &           58,273  &           48,932   &        0.315  &        0.318  &       49.7  &       48.6  \\
    Image &           57,873  &           47,068  &         0.362  &        0.364  &       55.2 &       54.2  \\
    RayS  &           58,167  &           45,454  &        0.518  &        0.533  &     147.8  &     150.1  \\
    ADBA  &           39,408  &           33,592  &        0.354  &        0.339  &     114.9  &     121.2  \\
    Our $\mathbf{d_r}$ &           58,308  &           47,349   &        \underline{0.091}  &        0.083  &       \underline{35.1}  &       32.1  \\
    Our $\mathbf{d_b}$ &     60,934 &    50,192   &        0.108  & \underline{     0.072 } &       39.3  & \underline{     27.2 } \\
  
      Our $\mathbf{d_n}$  &           \textbf{68,502}  &           \textbf{54,743}  &      \textbf{     0.078 } &        \textbf{0.069}  & \textbf{     28.9 } &  \textbf{26.1}  \\
  
    \Xhline{0.8pt}
    \end{tabular}%
  \label{tab:startpointmetrics}%
\end{table}%

\subsubsection{Validation and Dynamic Selection}

We benchmark our initialization directions against standard baselines (Gaussian, Uniform, RayS, ADBA, and random images) using ResNet-50. The resulting starting points are evaluated visually (App.~\ref{appsec:initres}) and quantitatively via three metrics: 1) Perturbation distance ($\ell_2/\ell_\infty$); 2) Local curvature $\kappa$ (indicating optimization speed);\footnote{Optimization behaviors and $\kappa$ estimation are detailed in Apps.~\ref{app:randkchange} and \ref{app:anainitd}.} and 3) Cosine similarity with the true gradient sign. As detailed in Table~\ref{tab:startpointmetrics} and Fig.~\ref{fig:gradcossim}, $\mathbf{d_n}$ consistently demonstrates superior geometric properties: it achieves the highest gradient sign alignment (Fig.~\ref{fig:gradcossim}(b)) and significantly smaller boundary distances with higher local curvature compared to all baselines. These advantages are critical for optimization; indeed, our ablation study (Sec.~\ref{sec:ablation}) confirms that replacing $\mathbf{d_n}$ with other initializations severely degrades the attack success rate. 

% To theoretically justify this advantage, we model initialization as a sign alignment problem. We prove that sampling from a BDCT frequency distribution positively correlated with model sensitivity guarantees a positive expected alignment with the true gradient sign (as formally established in \textbf{Theorem~\ref{theo:ddm}}), a property lacking in random noise. This result is fundamentally rooted in the Arcsine Law~\cite{1446497} (see App.~\ref{app:theo3_proof} for the complete proof), which states that for jointly Gaussian variables, a positive correlation in their values
% induces a positive correlation in their signs.
% In our setting, the frequency-variance prior biases the distribution of $\mathbf{d_n}$
% toward frequency bands that are, on average, more sensitive to the model.
% This bias is sufficient to induce a positive expected alignment with the
% true gradient sign. In contrast, isotropic random initialization distributes energy uniformly across all
% frequencies and therefore yields zero expected alignment.
% Under the assumption of a local linear decision boundary, this positive alignment further implies a
% smaller expected boundary distance, as formalized in
% Corollary~\ref{cor:boundary} of App.~\ref{subsec:theorem3}, providing a theoretical explanation for the empirical
% advantages of $\mathbf{d_n}$ observed in Table~\ref{tab:startpointmetrics}.

To theoretically justify this advantage, we model initialization as a sign-alignment problem. We show that, under a frequency-variance prior and an informative surrogate-gradient condition, sampling from a BDCT frequency distribution positively correlated with model sensitivity yields positive expected alignment with the true gradient sign up to a surrogate-transfer error $\varepsilon_{\mathrm{sur}}$ (as formalized in \textbf{Theorem~\ref{theo:ddm}}). This analysis is rooted in the Arcsine Law~\cite{1446497} (see App.~\ref{app:theo3_proof}), which connects positive value correlation to positive sign correlation for jointly Gaussian variables. In our setting, the frequency-variance prior biases $\mathbf{d_n}$ toward frequency bands that are, on average, more sensitive to the model; when this induced alignment dominates the transfer error ($\gamma>\varepsilon_{\mathrm{sur}}$), positive expected alignment follows. In contrast, isotropic random initialization distributes energy uniformly across frequencies and yields zero expected alignment in expectation. Under a local linear decision-boundary assumption, this positive alignment further implies a smaller expected boundary distance (Corollary~\ref{cor:boundary} in App.~\ref{subsec:theorem3}), explaining the empirical advantages of $\mathbf{d_n}$ in Table~\ref{tab:startpointmetrics}.

\noindent\textbf{Dynamic Block Size Selection.}
While our theoretical analysis supports the use of frequency-based
initialization, empirical results (Sec.~\ref{sec:ablation}) indicate that the
attack performance exhibits mild sensitivity to the choice of the BDCT block
size for a given victim model.
This sensitivity arises because the block size controls the granularity of the
frequency decomposition, which in turn affects both the correlation between the
initial direction and the true gradient sign, as well as the efficiency of the
subsequent optimization process.
To account for this variability without manual tuning, we introduce a
Dynamic Block Size Selection (DBS) strategy that query-efficiently identifies
a suitable block size $w$ (see App.~\ref{sec:DBS}).

\noindent{\textbf{BiLiSearch Directions.}} 
Even with optimized block sizes, relying on a single initialization is not universally optimal against unknown defenses (e.g., adversarial training). While $\mathbf{d_n}$ is generally superior, we observe that the color-based alternative $\phi(\hat{\mathbf{d}}_\mathbf{r})$ yields marginal gains on adversarially trained models (see Table~\ref{tab:ddmpdoabla}). To ensure robustness, we propose BiLiSearch (App.~\ref{app:algo}), a hybrid strategy combining binary and line search to efficiently select the direction $\mathbf{d}_0$ with the minimal boundary distance $r_0$:
\begin{equation}\label{eq:x0star}
    \mathbf{d}_0,r_0=\text{BiLiSearch}(\mathbf{d_n},\phi(\hat{\mathbf{d}}_\mathbf{r})), \enspace \mathbf{x}_0^*=\operatorname{clip}(\mathbf{x}+r_0\mathbf{d}_0,[0,1]).
\end{equation}

\begin{tcolorbox}[colback=blue!5!white,colframe=black!75!black
,
boxsep=0pt,
  left=4pt,
  right=4pt,
  top=4pt,
  bottom=4pt,
  boxrule=0.7pt,
  arc=2pt
]
\begin{theorem}[Positive Expected Alignment under Frequency-Variance Prior]
\label{theo:ddm}
Let $\mathbf{g}=\nabla_{\mathbf{x}}\mathcal{L}(\mathbf{x},y)$ and 
$\mathbf{u}=\mathrm{sgn}(\mathbf{g})\in\{-1,1\}^{d}$ denote the true gradient sign.
Let the initialization be $\mathbf{x}_0=U\mathbf{z}$, where
$U\in\mathbb{R}^{d\times d}$ is the orthonormal BDCT basis and
$\mathbf{z}$ is a zero-mean Gaussian vector with frequency-band variances
$\{v_q\}_{q=1}^{Q}$. The initialization direction is
$\mathbf{d_n}=\mathrm{sgn}(\mathbf{x}_0)$. Let $\mathbf{h}=U\tilde{\mathbf{h}}$ be a Gaussian surrogate gradient signal with
frequency-band variances $\{s_q\}_{q=1}^{Q}$. For
$\rho_i=\mathrm{Corr}((\mathbf{x}_0)_i,\mathbf{h}_i)$, let
$\gamma := d^{-1}\sum_{i=1}^{d}\frac{2}{\pi}\arcsin(\rho_i)$. If $\gamma>0$
and the surrogate-to-true-gradient transfer error is bounded by
$\varepsilon_{\mathrm{sur}}<\gamma$, then
\begin{equation}
\frac{1}{d}\mathbb{E}\!\left[\langle \mathbf{d_n},\mathbf{u}\rangle\right]
\ge
\gamma-\varepsilon_{\mathrm{sur}}
>0.
\end{equation}
In contrast, for isotropic random initialization
$\mathbf{d}_{\mathrm{rand}}$ that is coordinate-wise symmetric and independent
of $\mathbf{u}$, we have $d^{-1}\mathbb{E}\!\left[\langle
\mathbf{d}_{\mathrm{rand}},\mathbf{u}\rangle\right]=0$.

\end{theorem}
\end{tcolorbox}

\subsection{Pattern-Driven Optimization (PDO)}\label{sec:pdo}

Having established the initial direction $\mathbf{d}_0$, the next challenge is efficient direction refinement.
Unlike random noise, $\mathbf{d}_0$-predominantly sourced from the BDCT-based $\mathbf{d_n}$-naturally exhibits strong spatial coherence in the pixel space, forming contiguous blocks of identical signs (``runs''), as illustrated in Figs.~\ref{fig:overview}(a) and (c).
However, SOTA hard-label attacks such as RayS and ADBA adopt rigid, pattern-agnostic dyadic bisection, which blindly partitions the pixel space and fragments these coherent structures.
This mismatch disrupts the intrinsic geometry of $\mathbf{d}_0$ and leads to \emph{gain cancellation} (formally defined in Lemma~\ref{lem:gain} of App.~\ref{sec:app_theo4}), where conflicting gradient signals within a partition offset each other, resulting in query inefficiency.

To address this issue, we propose the \emph{Pattern-Driven Optimization (PDO)} module, which explicitly aligns the search process with the spatial patterns of $\mathbf{d}_0$.
Instead of operating in the full pixel space of dimension $d$, PDO treats each sign-consistent run as an atomic search unit, effectively reducing the optimization problem to a lower-dimensional \emph{pattern space} with dimensionality equal to the number of runs $M \ll d$.
For example, on ImageNet ($d \approx 150\text{k}$), using block sizes $w \in \{4,8,16\}$ reduces $M$ to approximately $20\text{k}$, $11\text{k}$, and $6\text{k}$, respectively.
By preserving spatial coherence and avoiding destructive partitioning, PDO enables more efficient gradient sign approximation that scales with the compressed pattern structure rather than the raw pixel dimensionality.

\subsubsection{Algorithm Implementation}\label{sec:pdoimp}
To operationalize this, PDO constructs a search tree $\mathsf{T}_{\mathrm{pat}}$ aligned with the pattern of $\mathbf{d}_0$, via three phases:

\noindent\textbf{Phase 1: Pattern-Driven Tree Construction.}
\begin{itemize}
    \item Sorting and indexing: Define a permutation $\Gamma$ sorting $\mathbf{d}_0$ such that $\mathbf{d}_0[\Gamma(1)] \leq \dots \leq \mathbf{d}_0[\Gamma(d)]$.
     \item Atomic partitioning: Partition $\Gamma$ into atomic units $\{B_1, \dots, B_M\}$ grouping contiguous indices, effectively capturing the ``runs'' in $\mathbf{d}_0$.
     \item Hierarchical grouping: Build a binary hierarchy where, at level $s \in [1, \lceil \log_2 M \rceil]$, atomic units are merged into $2^s$ groups $\{\mathcal{G}_j\}_{j=1}^{2^s}$ for coarse-to-fine search.
\end{itemize}
    %  \noindent\textbf{1) 
    % \noindent\textbf{2) 
    
    % \noindent\textbf{3) 

\noindent\textbf{Phase 2: Sign Flipping and Direction Update.}
We refine $\mathbf{d}_{\text{best}}$ (initialized as $\mathbf{d}_0$) by iteratively flipping signs within adjacent group pairs $(\mathcal{G}_{2l-1}, \mathcal{G}_{2l})$ to generate candidates $\bar{\mathbf{d}}_1$ and $\bar{\mathbf{d}}_2$. To maximize efficiency, we employ a history-aware update rule:
\begin{small}
\begin{equation}\label{eq:Decision2_ours}
\mathbf{d}_{\text{best}}=\begin{cases}
 \mathbf{d}_{\text{best}} & \text{ if } \mathcal{F}(\mathbf{x}+r\bar{\mathbf{d}}_1 )=y\ \&  \ \mathcal{F}(\mathbf{x}+r\bar{\mathbf{d}}_2 )=y \\
 \Lambda (\bar{\mathbf{d}}_1,\bar{\mathbf{d}}_{\text{his}}) & \text{ if } \mathcal{F}(\mathbf{x}+r\bar{\mathbf{d}}_1 )\neq y\ \&  \ \mathcal{F}(\mathbf{x}+r\bar{\mathbf{d}}_2 )=y   \\
 \Lambda (\bar{\mathbf{d}}_2,\bar{\mathbf{d}}_{\text{his}}) & \text{ if } \mathcal{F}(\mathbf{x}+r\bar{\mathbf{d}}_1 )= y\ \&  \ \mathcal{F}(\mathbf{x}+r\bar{\mathbf{d}}_2 )\neq y  \\
 \Lambda (\bar{\mathbf{d}}_1,\bar{\mathbf{d}}_2) & \text{Otherwise.}
\end{cases}
\end{equation}
\end{small}%
Here, $\Lambda$ selects the direction with the smaller boundary distance. Notably, we use a buffer $\bar{\mathbf{d}}_{\text{his}}$ to cache solitary successful directions, preserving promising candidates that baseline methods (e.g., ADBA) would discard.

\noindent\textbf{Phase 3: Refinement and Loop.}
We increment the granularity level $s$ to progressively refine the search.\footnote{A fallback that splits atomic units or re-initializes $\mathbf{d}_0$ is only triggered in rare low-dimensional cases (e.g., CIFAR-10) and was not observed on higher-resolution datasets such as ImageNet.} The complete workflow is visualized in Fig.~\ref{fig:overview}(b).

\subsubsection{Theoretical Guarantees}
Having defined the algorithm, we now utilize a signal recovery framework to justify why aligning the search tree with initialization patterns ($\mathsf{T}_{\mathrm{pat}}$) outperforms standard dyadic trees ($\mathsf{T}_{\mathrm{dyad}}$) employed by HRayS and ADBA.

\begin{tcolorbox}[colback=blue!5!white,colframe=black!75!black
,
boxsep=0pt,
  left=4pt,
  right=4pt,
  top=4pt,
  bottom=4pt,
  boxrule=0.7pt,
  arc=2pt
]
\begin{theorem}[Per-query Sign Alignment Dominance]\label{thm:main_text}
Let $\mathbf{u} = \operatorname{sgn}(\nabla_{\mathbf{x}}\mathcal{L})$ be the true gradient sign, and let
$\mathbf{d}_t\in\{\pm1\}^d$ be the perturbation direction after $t$ queries.
We measure sign alignment using the agreement
$A(\mathbf{d}_t,\mathbf{u})
:=\frac{1}{d}\sum_{i=1}^d \mathbb{I}\{\mathbf{d}_{t,i}=\mathbf{u}_i\}
$. Under the assumptions of local block coherence and a weakly positive initial correlation (Asmps.~\ref{asmp:block} and \ref{asmp:purity} in App.~\ref{sec:app_theo4}), the expected agreement $A_t$ after $t$ queries satisfies:
\begin{equation}
\mathbb{E}\big[A_t^{\mathrm{pat}}\big] \ge \mathbb{E}\big[A_t^{\mathrm{dyad}}\big], \quad \forall t \in \mathbb{N},
\end{equation}
where $\mathrm{pat}$ and $\mathrm{dyad}$ denote the pattern-driven strategy and the blind dyadic baseline, respectively.
\end{theorem}
\end{tcolorbox}
\noindent Proof in App. \ref{sec:app_theo4}. The inequality degenerates to an equality only if $\mathbf{d}_0$ lacks spatial variation (e.g., $\mathbf{d}_0=\mathbf{1}$) or aligns perfectly with the dyadic grid. We provide empirical verification of this dominance in Sec.~\ref{sec:pdores}. This per-step advantage scales into a significant improvement in asymptotic query complexity, as formalized in \textbf{Theorem~\ref{theo:complexity-fixed}} (see App.~\ref{sec:app_theo5} for the full proof). Throughout Theorem~\ref{theo:complexity-fixed}, we use the number of tree expansions as a proxy for query complexity. This is justified because, in RayS-style hard-label attacks (including our PDO framework),
each node expansion incurs at most a constant number of model queries.
Therefore, the asymptotic bounds in Theorem~\ref{theo:complexity-fixed} translate directly to
query complexity up to constant factors. Also, the complexity $T$ quantifies the cost for \textit{full} sign recovery. In practice, attacks succeed upon achieving \textit{sufficient alignment} (partial recovery); thus, actual consumption is usually lower than $T$. We attribute this complexity gap to a structural phenomenon we term \textit{Intrinsic Gradient Shattering}, a detailed empirical analysis of which is provided in the following validation.

\begin{tcolorbox}[colback=blue!5!white,colframe=black!75!black
,
boxsep=0pt,
  left=4pt,
  right=4pt,
  top=4pt,
  bottom=4pt,
  boxrule=0.7pt,
  arc=2pt
]
\begin{theorem}[Query Complexity under Block Sign-Coherence]\label{theo:complexity-fixed}
Under Asmps.~\ref{asmp:block} and~\ref{asmp:purity}, let the true gradient sign $\mathbf{u}$ consist of $K$ spatially coherent blocks $\{B_k\}_{k=1}^K$. To identify descent directions aligned with $\mathbf{u}$, the expected query complexities for \emph{dyadic} search ($T_{\mathrm{dyad}}$) and \emph{pattern-driven} search ($T_{\mathrm{pat}}$) satisfy:
\begin{equation}
    T_{\mathrm{dyad}} = \Omega\left(\sum_{k=1}^K \log_2\frac{d}{|B_k|}\right), \quad T_{\mathrm{pat}} = O\left(\sum_{k=1}^K \gamma_k\right),
\end{equation}
where $\Omega(\cdot)$ and $O(\cdot)$ denote asymptotic lower and upper bounds. $\gamma_k$ is the number of $\mathbf{d}_0$-runs intersecting block $B_k$. In particular, the pattern-driven strategy is asymptotically more efficient in expectation when $\gamma_k < \log_2(d/|B_k|)$ on average.
\end{theorem}
\end{tcolorbox}

\begin{figure}
\centering
  \includegraphics[width=\linewidth]{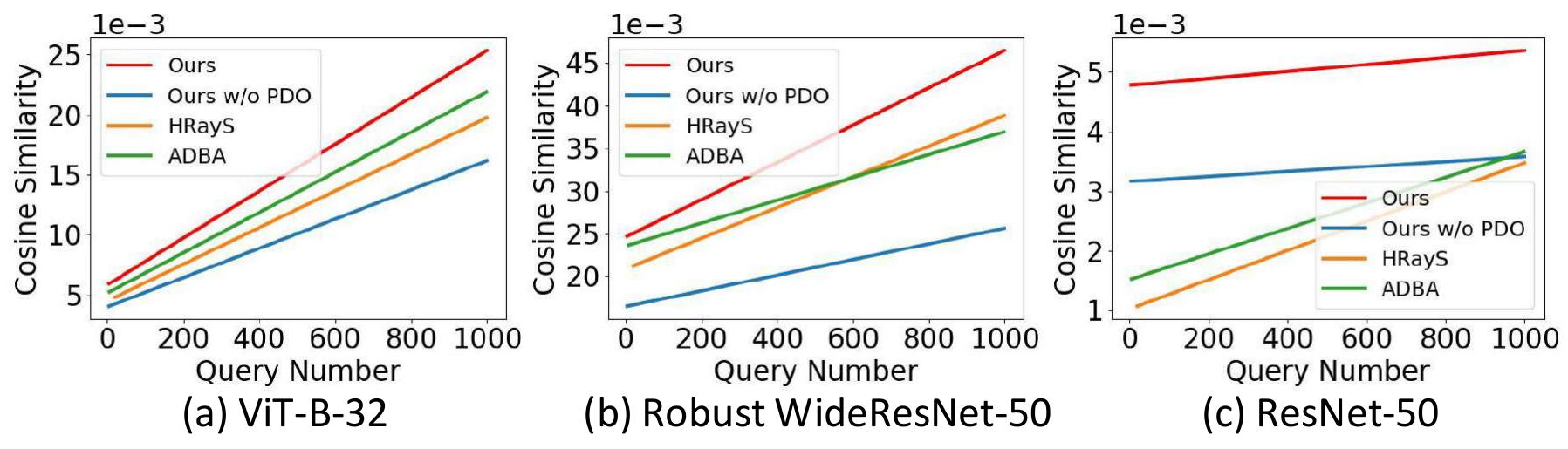}
  \caption{Evolution of cosine similarity between the true gradient sign and the perturbation direction.}\label{fig:gradsimoverlquery}
\end{figure}

\begin{figure}
\centering
  \includegraphics[width=\linewidth]{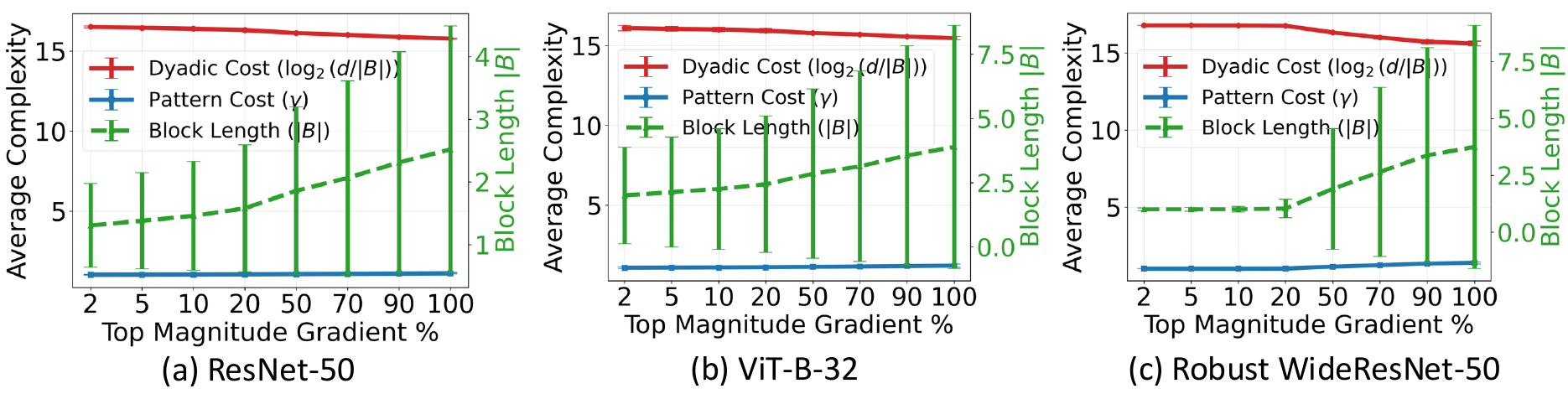}
  \caption{Empirical validation of Theorem~\ref{theo:complexity-fixed}. Results for additional models are provided in App. Fig.~\ref{fig:gradsignstru_others}.} 
  \label{fig:gradsignstru}
\end{figure}

\subsubsection{Empirical Validation}\label{sec:pdores}
To empirically verify Theorems~\ref{thm:main_text} and~\ref{theo:complexity-fixed}, we track the cosine similarity between the search direction $\mathbf{d}_{\text{best}}$ and the true gradient sign $\text{sgn}(\nabla\mathcal{L}(\mathbf{x},y))$ (averaged over 1,000 ImageNet images). The results are visualized in Fig.~\ref{fig:gradsimoverlquery}. Note that to improve clarity, curves undergo a linearly fitted process, which causes a minor visual divergence at $t=0$ between ``Ours'' and ``Ours w/o PDO'' (using standard dyadic split), even though they originate from the exact same initial direction. The observed trends strongly confirm our theoretical claims:

    \noindent\textbf{1)} The ``Ours'' curve consistently upper-bounds the baseline ``Ours w/o PDO'' across architectures. This empirical gap confirms Theorem~\ref{thm:main_text}, demonstrating that incorporating initial pattern priors generally yields a better descent direction for any given query budget $t$.

   \noindent\textbf{2)} The substantial gap between the slow ascent of ``Ours w/o PDO'' and the steep rise of ``Ours'' also confirms the complexity divergence predicted by Theorem~\ref{theo:complexity-fixed}. We attribute this to \textit{Intrinsic Gradient Shattering}—a phenomenon where top-magnitude gradient components, critical for optimization, exhibit rapid spatial oscillation. As visualized by the green lines in Fig.~\ref{fig:gradsignstru}, the average sign-consistent block size $|B|$ for these top-$2\%$ magnitude gradients collapses to $\approx1$ in CNNs and $\approx2$ in ViTs. This inherent fragmentation triggers two distinct regimes: (i) \textbf{Dyadic Collapse:} The rigid partitioning of dyadic strategies is forced into a worst-case scenario by these pixel-level oscillations. This drives the search depth toward the theoretical limit ($\approx \log_2 d$), incurring a query complexity of $\approx 16 \sim 17$ per block—matching the value of dyadic costs in Fig.~\ref{fig:gradsignstru}. (ii) \textbf{Pattern Resilience:} In contrast, PDO sidesteps this shattering by adaptively aligning with the run-length boundaries of $\mathbf{d}_0$. By treating these oscillations as coherent atomic units rather than disjoint pixels, PDO maintains a near-optimal fragmentation factor $\gamma\approx1$ (represented by the blue lines in Fig.~\ref{fig:gradsignstru}), significantly accelerating convergence. 
 
    \noindent\textbf{3)} 
    In Fig.~\ref{fig:gradsimoverlquery}, our method consistently outperforms HRayS and ADBA. Additionally, the increasing HRayS trend validates Theorem~\ref{theo:raysappro}, confirming that while less efficient than our method, the standard dyadic hierarchy still guarantees a progressive reduction in approximation error.

\begin{table*}[t!]
\footnotesize
\tabcolsep=0.03cm
  \centering
  \caption{The untargeted attack performance on ImageNet (ResNet-50) under $\ell_\infty=0.05$ constraint.}
  \scalebox{0.8}{
    \begin{tabular}{c|cccccccc|cccccccc|cccccccc}
    \Xhline{0.8pt}
    Metrics & \multicolumn{8}{c|}{ASR}      & \multicolumn{8}{c|}{Avg.Q} & \multicolumn{8}{c}{Med.Q} \\
    \hline
    Max$Q$  & HSJA  & BouR &  BouT &TtBA&HRayS  & ADBA  & \textbf{Ours$_\text{opt}$}& \textbf{Ours$_\text{dyn}$}  & HSJA  & BouR & BouT&TtBA & HRayS  & ADBA  & \textbf{Ours$_\text{opt}$}& \textbf{Ours$_\text{dyn}$}  & HSJA & BouR  & BouT&TtBA & HRayS  & ADBA  & \textbf{Ours$_\text{opt}$}& \textbf{Ours$_\text{dyn}$} \\
    \hline

    5     &      0.0   &      0.0   &        \underline{3.5}  &   0.0      &   2.0  &   0.0    & \textbf{20.5} &0.0&   0.0    &   -    &              5  &-&              5  &  -     &              5  &  -&  -   &    -   &              5  &  -  &          5  & -      &              5&-  \\
    
    10    &          0.0    &    0.0   &    \underline{9.5}  &   0.0      &   2.0  &      0.0   & \textbf{37.0} &-&   -    &   -    &              6  &          &  10  &    -   &              6  &-&  -     &    -   &              6  & -   &        10  &   -    &              5  &-\\
    
    20    &           6.5  &           6.5  &           {9.5}  &    0.0    &    2.0  &           3.5  & \underline{37.0}&\textbf{39.5} &            14  &            14  &              6  &   -    &     20  &            17  &            6&19  &            14  &            14  &              6  &  -     &     20  &            17  &              5&19  \\
    
    50    &           6.0  &           6.0  &         17.0  &     0.0    & 16.5  &  {22.5}  & \underline{51.5} &\textbf{55.0}&            13  &            13  &            20  &    -    &    42  &            32  &            19 &23 &            13  &            13  &              6  &  -     &     40  &            31  &            13&20  \\
    
    80    &         14.0  &         14.0  &         17.5  &   6.5   &    21.0  & {38.0}  & \underline{57.0}&\textbf{64.0} &            40  &            40  &           21  &    75    &    52  &            46  &            24  &29&            59  &            59  &              6  &     75     &  56  &            42  &            15&20  \\
    
    100   &         15.0  &         16.0  &         16.0  & 7.5   &    25.0  & {49.5}  & \underline{62.0} &\textbf{66.0}&            40  &            37  &            23  &   77   &      60  &            56  &            29  &31&            59  &            38  &              6  &   75    &     57  &            59  &            15 &20 \\
    
    200   &         17.5  &         17.5  &         22.0  &   10.5   &   51.0  & {71.0}  & \textbf{75.5} & \underline{75.0} &            75  &            64  &            72  &   87  &     109  &            87  &            49 &46 &            99  &            99  &            48  &    75    &  104  &            80  &            26  &22\\
    
    300   &         20.0  &         18.5  &         21.0  &    14.0  &   54.0  & \underline{79.5}  & \textbf{83.5}& \textbf{83.5} &            77  &            71  &            54  &  126   &     136  &          104  &            67 &66 &            99  &            99  &              6  &    82  &    121  &            84  &            34&22  \\
    
    400   &         16.5  &         16.0  &         23.5  &   16.0   &   69.0  &  {85.0}  & \underline{88.5}&\textbf{89.0} &            61  &            60  &            72  &   158   &    160  &          117  &            83  & 85&           99  &            99  &              7  &  144      &  143  &            85  &            38&24  \\
    
    500   &         20.5  &         18.0  &         25.5  &    19.0  &   70.5  & {87.5}  & \underline{90.0}& \textbf{92.0} &            92  &            81  &            85  &  184  &      198  &          128  &            88&97  &            99  &            99  &            88  &    146   &   174  &            88  &            38 &29 \\
    
    1k  &         20.5  &         18.5  &         22.0  &    32.5    & 89.5  &  {93.5}  & \underline{95.5}& \textbf{97.5} &          164  &          110  &          184  &    414   &   312  &          163  &          127&          131  &          150  &          149  &              8  &    404    &  226  &            97  &            44 &36 \\
    
    3k  &         23.0  &         19.5  &         23.0  &    59.5  &   97.5  & {99.0}  & \underline{99.5} & \textbf{100.0} &          352  &          209  &          196  &   1,117   &    431  &          236  &          199&          180  &          150  &          149  &          139  &    965  &    252  &          100  &            48  &            39  \\
    
    5k  &         21.0  &         19.0  &         22.5 &67.5 & \textbf{ 100.0} & \textbf{ 100.0} & \textbf{ 100.0} & \textbf{ 100.0} &          210  &          140  &          319  &   1,441 &      522  &          249  &          222 &180 &          149  &          149  &          138  &    978   &   263  &          102  &            49 &39 \\
    \Xhline{0.8pt}
    \end{tabular}%
    }
  \label{tab:stres50}%
\end{table*}%

\section{Evaluation}\label{sec:exp}

\subsection{Experimental Setup}

\noindent{\textbf{Datasets and Victim Models.}} We evaluate our method across a diverse range of tasks. For standard classification on {CIFAR-10}~\cite{krizhevsky2009learning} and {ImageNet-1K}~\cite{deng2009imagenet}, we target representative architectures (e.g., ResNet-18/50, ViT-B-32, DenseNet-121) alongside adversarially trained WideResNet variants~\cite{croce2021robustbench,robustness}. We also assess the {ViT-H-14 CLIP}~\cite{ilharco_gabriel_2021_5143773} foundation model and test robustness against context shifts on {ObjectNet}~\cite{barbu2019objectnet}. For dense prediction, we employ {MS-COCO}~\cite{lin2014microsoft} (attacking FasterRCNN~\cite{Torchvision}) and {SA-1B}~\cite{kirillov2023segment} (attacking SAM~\cite{kirillov2023segment}). Furthermore, we include {ImageNet-C}~\cite{hendrycks2019benchmarking} and {PathMNIST}~\cite{yang2023medmnist} to evaluate performance on corrupted and non-natural (medical) domains using their respective top-performing models. See App.~\ref{app:expsetting} for the full list of over 15 evaluated architectures and image sizes for each dataset.

\noindent{\textbf{Metrics.}} We evaluate attack effectiveness using Attack Success Rate (ASR) within a query budget (Max.Q). We compute ASR$=\frac{1}{N}\sum_{i=0}^{N-1}\mathbb{I}(\mathcal{F}(\mathbf{x}^*_i)\neq y_i)$ and set $N=200$ following \cite{wan2024bounceattack}. To assess efficiency, we further report the average (Avg.Q) and median (Med.Q) queries among successful attacks.

  \noindent\textbf{Comparative Methods.} We evaluate two variants of our method: (1) {Ours$_{\text{opt}}$}, which uses the fixed block size that achieves the highest average ASR across query budgets, and (2) {Ours$_{\text{dyn}}$}, which employs our dynamic block size selection strategy. We compare them against SOTA query-based hard-label attacks: {TtBA}~\cite{wangttba}, {HRayS}~\cite{10.1145/3394486.3403225}, {ADBA}~\cite{Wang_Zuo_Huang_Chen_2025}, {HSJA}~\cite{chen2020hopskipjumpattack}, and {Bounce}~\cite{wan2024bounceattack} with both random ({BouR}) and targeted ({BouT}) initializations. For $\ell_2$ comparisons, we include {SurFree}~\cite{maho2021surfree}, {Tangent}~\cite{ma2021finding}, and CGBA(-H)~\cite{reza2023cgba}. Implementation details, including the setting of decomposition level $dl$ and hyperparameter configurations, are provided in App.~\ref{app:expsetting}.

\subsection{Performance on Standard Classifiers}
\label{sec:expstdmodel}

We evaluate the efficacy of DPAttack against standard classifiers trained on CIFAR-10, ImageNet-1K, ImageNet-C and PathMNIST. Primary results are summarized in Tables~\ref{tab:stres50}, \ref{tab:cifarvgg}-\ref{tab:otherdataset}, where extensive evaluations across diverse architectures and domains are detailed. \textbf{ASR.} 
Across nearly all scenarios, our method consistently outperforms SOTA baselines. The gap is particularly pronounced under tight query budgets. For instance, on ResNet-50 with Max.Q$=50$, our$_{\text{dyn}}$ surpasses the SOTA ADBA by $32.5\%$ in ASR while consuming only approximately two-thirds of its Avg.Q and Med.Q. Note that for extremely low budgets (Max.Q$<20$), the ASR of Ours$_{\text{dyn}}$ drops due to the DBS warm-up overhead, which costs about 15/25 queries per image on standard/robust models; this can be mitigated by tuning the block size on a few images first and then using the most frequently selected block size for subsequent attacks, although per-image DBS remains preferable under larger budgets. \textbf{Query Efficiency.} 
DPAttack achieves competitive or lower query consumption compared
to prior works at most cases—significantly outperforming the runner-up ADBA—while maintaining substantially higher ASR. Although our Avg.Q on ImageNet occasionally exceeds that of BouT, this stems from our significantly higher ASR: we successfully crack ``hard'' samples that require more queries, whereas baseline methods simply fail on them (yielding no query count contribution). \textbf{Additional Results.} 
Comparisons under varying perturbation magnitudes and $\ell_2$ constraints are provided in Appendix~\ref{app:differentPerturbation500} and Appendix~\ref{sec:l2app}, respectively. These results consistently demonstrate our method's superior ASR and query efficiency over SOTA baselines; notably, Ours$_\text{dyn}$ excels in $\ell_2$ scenarios, often significantly surpassing the Ours$_\text{opt}$ version.

% Table generated by Excel2LaTeX from sheet 'forpaperAll'
\begin{table*}[t!]
\footnotesize
\tabcolsep=0.03cm
  \centering
  \caption{The untargeted attack performance on ObjectNet (ViT-H-14-CLIPA-336) under $\ell_\infty=0.05$ constraint.} 
  \scalebox{0.85}{
  \begin{tabular}{c|cccccccc|cccccccc|cccccccc}
    \Xhline{0.8pt}
    Metrics & \multicolumn{8}{c|}{ASR}                      & \multicolumn{8}{c|}{Avg.Q}                    & \multicolumn{8}{c}{Med.Q} \\
   \hline
    Max$Q$  & HSJA  & BouR  & BouT & TtBA  & HRayS  & ADBA  & Ours$_\text{opt}$& Ours$_\text{dyn}$  & HSJA  & BouR  & BouT & TtBA & HRayS  & ADBA  & Ours$_\text{opt}$& Ours$_\text{dyn}$  & HSJA  & BouR  & BouT & TtBA & HRayS  & ADBA  & Ours$_\text{opt}$& Ours$_\text{dyn}$ \\
   \hline
    10    &            0.0    &   0.0    &            \underline{10.0} & 0.0 &   0.0     &  0.0      & \textbf{21.5} &0.0&   -    &  -     &               6   &-& -     &     -  &               5  &&    -    &  -     &              5  &- &   -   &     -  &              5 & \\
   
    50    &           5.5  &  5.5     &           {15.0} &0.0 &           5.0  &            7.5  & \underline{37.0}& \textbf{39.5}&             13  &   13    &             20  &    -   &      41  &                              41  &             16 & 22&            12  &  12     &              6  &   -   &      39  &            42  &            10 &20 \\
   
    100   &         20.5  &                18.0  &            {22.5}  & 9.5 &        13.0  &          20.5  & \underline{44.5} &\textbf{50.0}&             40  &            42  &             35  & 77&            72  &                              61  &             25 &33 &            61  &            58  &              6  & 76  &         81  &            58  &            15 &20 \\
   
    500   &         23.5  &   23.5    & 23.0      & 22.0    &     38.0  &          {49.5}  & \underline{65.5}& \textbf{67.5} &             95  &  91     &  72   & 207  &           178  &                            156  &             98 &92 &          102  &  98     & 88      & 190   &      157  &          118  &            34  &28\\
   
    1k  &         25.0  &    23.5   & 25.5      & 32.5  &     51.0  &          {61.0}  & \underline{77.0} &\textbf{78.5}&           136  &  129     &  137  & 363 &           326  &                            257  &           194 & 186&          151  &   149    &  6     &  313  &      220  &          162  &            55 & 49\\
   
    3k  &         26.5  &   22.5    &  26.5     &  58.5     &  73.0  &          \underline{82.5}  & \textbf{88.5}&\textbf{88.5} &           242  &   141    &   202    & 1,105 &         770  &                            636  &           371& 364 &          152  &   149    &  137     &  850   &     430  &          290  &            86 & 65\\
   
    6k  &         27.0  &    25.0   &            29.5  &   73.0 &     82.0  &          \underline{88.0}  & \textbf{92.5}&\textbf{92.5} &           637  &   130    &           197  &  1,692  &    1,171  &                            872  &           536 &523 &          152  &  149     &              8  &   1,356 &      565  &          313  &            98  &82\\
   
    15k &         33.0  &   24.5    &  27.5     &  83.0   &    91.5  &          {95.5}  & \textbf{98.5}&\underline{98.0} &        1,123  & 412      &   137    & 2,437 &      2,135  &                         1,527  &        1,039&1,022  &          154  &   148    & 7      &   1,650   &    853  &          362  &          124 &92 \\
   
    30k &         37.5  &   22.5    &  28.5     &  89.5   &    {96.5}  & \underline{99.0} & \underline{99.0}&\textbf{99.5} &        3,880  &    107   &  203   & 3,498 &        3,024  &                         2,244  &        1,366&1,399  &          155  &  148     &  138     & 1,952  &       940  &          509  &          171 &99 \\

     40k &   40.0        &24.5   & 30.0 & 89.5   & 95.5  & 99.5 &99.5 & \textbf{100.0}&        6,552  &    114   &  225   & 4,460 &        1,211  &                         2,426  &        1,367&1,555  &          153  &  149     &  8     & 2,125  &       651  &          518  &          141 &103 \\
    \Xhline{0.8pt}
    \end{tabular}%
    }
  \label{tab:attclip}%
\end{table*}%

\subsection{Performance on CLIP and APIs}
\label{sec:expapis}
\noindent{\textbf{CLIP}.} CLIP model is a versatile and prevailing foundation vision-language model for many downstream tasks and is suitable for zero-shot image classification, which achieves $77.4\%$ Top-1 classification accuracy on ObjectNet. We evaluate our method against the CLIP model on ObjectNet to test its zero-shot robustness. To strictly simulate a hard-label black-box setting, we assume the adversary can only access the index of the predicted category (the class with the highest cosine similarity), without access to the specific similarity values or gradients. As shown in Table~\ref{tab:attclip}, our method demonstrates superior ASR and query efficiency compared to SOTA baselines. Notably, under a tight query budget (Max.Q$=50$), Ours$_\text{dyn}$ outperforms ADBA by $32\%$ in ASR while halving the queries. Even with a generous budget ($>3,000$), while the ASR gap narrows ($<10\%$), our method remains significantly more efficient, reducing Med.Q by approximately $80\%$ and Avg.Q by $40\%$ compared to ADBA.

\noindent{\textbf{APIs}.} We further validate our approach on four popular commercial image labeling services: Imagga~\cite{Imagga}, Google Cloud Vision~\cite{GoogleVision}, Tencent DetectLabel Pro~\cite{Tencent}, and Baidu General Image Recognition~\cite{Baidu}. The test images are randomly sampled from the ImageNet validation set. Due to the high cost of commercial API calls, we compare exclusively against ADBA, the strongest baseline identified in previous experiments. The results, summarized in Table~\ref{tab:api}, indicate that our algorithm consistently outperforms ADBA across all APIs and perturbation limits. For instance, on the Tencent API with $\ell=0.1$, Ours$_\text{opt}$ achieves an ASR improvement of over $44\%$ while consuming significantly fewer queries. Our method drastically reduces the query complexity from the tens or hundreds required by ADBA to merely $10-20$. This significant reduction in attack cost exposes the severe vulnerability of online APIs to low-budget adversaries, highlighting the urgent need for more robust defenses.

\subsection{Performance on Defensive Methods}
\label{sec:expdef}

We evaluate our attack against both static (adversarial training) and dynamic stateful defense mechanisms (Blacklight).

\noindent\textbf{Adversarial Training.} 
We test attack effectiveness on robust WideResNet models (CIFAR-10 and ImageNet). As summarized in Table~\ref{tab:defense}, our method achieves the highest ASR with superior query efficiency. Notably, we outperform HRayS by over $6\%$ ASR on ImageNet. In contrast, ADBA shows limited improvement ($0.5\%$ on ImageNet) and even degradation ($-0.5\%$ on CIFAR-10) versus HRayS. Additional results under varying constraints are detailed in App.~\ref{app:wrs50}.

\noindent\textbf{Stateful Detection.} 
We extend our evaluation to Blacklight~\cite{li2022blacklight}, a SOTA defense that detects attacks by identifying highly correlated queries via hashing-based fingerprinting. Since Blacklight flags attacks when hash collisions within a sliding window exceed a threshold, we propose a randomized variant of DPAttack to break this correlation. 
Specifically, we inject Gaussian noise $\xi$ into the update direction: $\mathbf{x} + r(\text{clip}(\bar{\mathbf{d}}_1 + \xi, -\mathbf{1}, \mathbf{1}))$. 
This stochasticity sufficiently diversifies the query patterns, preventing the generation of identical fingerprints and thereby evading the collision-based detection mechanism. Adhering to the official settings for ImageNet, we use a detection threshold of $25$ and a window size of $50$. Any attack triggering this threshold is marked as a failure, and the query count before detection is recorded as $1^{\textnormal{st}}$D$Q$. 
As shown in Table~\ref{tab:blacklight}, this strategy is remarkably effective: our method yields a $0\%$ Detection Rate (Det$R$). In sharp contrast, competing SOTA methods are rendered ineffective, triggering the defense with a $100\%$ Det$R$ (except BouT at $95\%$). We further compare with the randomized certifiable attack CA~\cite{hong2024certifiable} (the surrogate-free binary-search variant) in App.~\ref{app:ca}; under the same settings, DPAttack reaches the same ASR/Det$R$ while requiring smaller perturbations and far fewer queries.

\begin{table}[t!]
\footnotesize
\tabcolsep=0.05cm
  \centering
  \caption{The untargeted attack performance on real-world APIs under different perturbation constraints. Abbreviations: ASR: A., Med.Q: M.Q, Avg.Q: A.Q.}
  \scalebox{0.9}{
    \begin{tabular}{c|c|ccc|ccc|ccc|ccc}
    \Xhline{0.8pt}
    \multirow{2}[1]{*}{$\ell_\infty$} & APIs  & \multicolumn{3}{c|}{Imagga} & \multicolumn{3}{c|}{Google}& \multicolumn{3}{c|}{Tencent}& \multicolumn{3}{c}{Baidu} \\
\cline{2-14}          & Methods & A.   & M.Q& A.Q  & A.   & M.Q & A.Q& A. & M.Q  & A.Q & A. & M.Q  & A.Q  \\
    \hline
    {\multirow{3}[1]{*}{$0.05$}} & ADBA  & 83    & 61& 105    & 53   &            26& 28 &33&25&26&65&25&26  \\
       & \textbf{Ours}$_\text{opt}$ & \underline{90} & \textbf{6} & \underline{38} & \underline{91}  &  \underline{22} &          \underline{23} & \underline{67}& \textbf{10}& \textbf{15}&\textbf{88}&\textbf{9}&\textbf{13} \\

       & \textbf{Ours}$_\text{dyn}$ & \textbf{100} & \underline{23} & \textbf{20} & \textbf{100}  &  \textbf{21} &          \textbf{19} & \textbf{75}& \underline{23}& \underline{20}&\textbf{90}&\underline{21}&\underline{20} \\
    \hline
    {\multirow{3}[1]{*}{$0.1$}} & ADBA  & 99 & 41   &  70    & 68  &  24 & 26 &56&33&35&80&24&27 \\
          & \textbf{Ours}$_\text{opt}$ & \textbf{100}  & \textbf{4}& \textbf{8} & \underline{94} & \underline{7}& \underline{14} & \textbf{100}& \textbf{14}& \textbf{14} &\underline{98}&\textbf{7}&\textbf{14}\\
           & \textbf{Ours}$_\text{dyn}$ & \textbf{100}  & \underline{16}& \underline{16} & \textbf{100} & \textbf{15}& \textbf{16} & \underline{97}& \underline{17}& \underline{16} &\textbf{99}&\underline{16}&\underline{16}\\
    \Xhline{0.8pt}
    \end{tabular}%
    }
  \label{tab:api}%
\end{table}%

% Table generated by Excel2LaTeX from sheet 'forpaperSucc'
\begin{table}[t!]
\footnotesize
\tabcolsep=0.05cm
  \centering
  \caption{The untargeted attack performance on adversarially trained WideResNet. Max.Q: 1,000.}
  \scalebox{0.9}{
    \begin{tabular}{c|c|cccccccc}
    \Xhline{0.8pt}
    Dataset & Metrics & \multicolumn{1}{c}{HSJA} & \multicolumn{1}{c}{BouR} & \multicolumn{1}{c}{BouT} & \multicolumn{1}{c}{TtBA} & \multicolumn{1}{c}{HRayS} & \multicolumn{1}{c}{ADBA} & \multicolumn{1}{c}{\textbf{Ours}$_\text{opt}$}& \multicolumn{1}{c}{\textbf{Ours}$_\text{dyn}$} \\
    \hline
    \multirow{3}[2]{*}{CIFAR-10} & ASR   & 4.5   & 3.5   & 3.5   & 1.5   & {20.5}  & 20.0    & \textbf{22.5} &\underline{21.0}\\
          & Avg.Q & 140   & 141   & 58    & 361   & 290   & 211   & 234&205 \\
          & Med.Q & 140   & 140   & 8     & 398   & 222   & 101   & 143 &83\\
    \hline
    \multirow{3}[2]{*}{ImageNet} & ASR   & 10.5      &   10.5    & 11.0      & 3.5      &  44.5     & {45.0} &     
    \textbf{50.5} & \underline{48.5} \\
          & Avg.Q &  112     &  111     &   84    &   302    &  189     &  143     &143 &191 \\
          & Med.Q &  149     &  146     & 6      &  223     & 91      &   58    & 73&90 \\
    \Xhline{0.8pt}
    \end{tabular}%
    }
  \label{tab:defense}%
\end{table}%

% Table generated by Excel2LaTeX from sheet 'forpaperSucc'
\begin{table}[t!]
\footnotesize
\tabcolsep=0.05cm
  \centering
  \caption{The untargeted attack performance on Blacklight against ViT-B-32. Max.Q: 100. Outside/\textcolor{gray}{inside} parentheses: DPAttack with/without Gaussian-noise injection.}
  \scalebox{1}{
    \begin{tabular}{c|ccccccccc}
    %\begin{tabular}{l|lllllllll}
    \Xhline{0.8pt}
     Metrics & \multicolumn{1}{c}{HSJA} & \multicolumn{1}{c}{BouR} & \multicolumn{1}{c}{BouT} & \multicolumn{1}{c}{TtBA} & \multicolumn{1}{c}{HRayS} & \multicolumn{1}{c}{ADBA} & \multicolumn{1}{c}{\textbf{Ours}$_\text{opt}$}& \multicolumn{1}{c}{\textbf{Ours}$_\text{dyn}$} \\
    \hline
     ASR$\uparrow$   & 0   & 0   & 5   & 0   & 0  & 0    & \underline{41}\,\textcolor{gray}{(22)} &\textbf{43}\,\textcolor{gray}{(24.5)}\\
          Avg.Q$\downarrow$ & -   & -   & 5    & -   & -   & -   & \textbf{39} \,\textcolor{gray}{(11)}&\underline{40}\,\textcolor{gray}{(17)} \\
          Med.Q$\downarrow$ & -   & -   & 5    & -   & -   & -   & \underline{31}\,\textcolor{gray}{(10)}&\textbf{26}\,\textcolor{gray}{(17)}\\
     Det$R$$\downarrow$   & 100      &   100    & 95     & 100      & 100     & 100 &     
   \textbf{0}\,\textcolor{gray}{(78)} & \textbf{0}\,\textcolor{gray}{(76.5)} \\
          $1^{\textnormal{st}}$D$Q$ $\uparrow$&  9     &  9     &   8    &   11    &  5     &  2     &-\,\textcolor{gray}{(15.7)} &-\,\textcolor{gray}{(20.3)} \\
         
    \Xhline{0.8pt}
    \end{tabular}%
    }
  \label{tab:blacklight}%
\end{table}%

% Table generated by Excel2LaTeX from sheet 'objectDetection'
\begin{table}[t!]
\footnotesize
\tabcolsep=0.08cm
  \centering
  \caption{The untargeted attack performance on dense prediction tasks. Max.Q: 50.}
    \begin{tabular}{c|ccc|ccc}
    \Xhline{0.8pt}
    \multirow{2}[1]{*}{Methods} & \multicolumn{3}{c|}{Object Detection} & \multicolumn{3}{c}{Segmentation} \\
\cline{2-7}          & ASR   & Avg.Q & Med.Q & ASR   & Avg.Q & Med.Q \\
    \hline
    ADBA  &                   17.0  &                        30  &                        29  &                      4.5  &                        36  &                        35  \\
    %\hline
    \textbf{Ours} $\mathbf{d_n}$ & \textbf{                  43.5 } & \textbf{                       18 } & \textbf{                          6 } & \textbf{                  36.0 } & \textbf{                       14 } & \textbf{                          6 } \\
    %\hline
    \textbf{Ours}$_\text{opt}$ &                   35.5  &                        {16}  &                        10  &                   \underline{35.5}  &                        17  &                        10  \\
    %\hline
    \textbf{Ours}$_\text{dyn}$ &                   \underline{39.5}  &                        22  &                        20  &                   33.5  &                        20  &                        18  \\
    \Xhline{0.8pt}
    \end{tabular}%
  \label{tab:odandseg}%
\end{table}%

% Table generated by Excel2LaTeX from sheet 'DDMAblation'
\begin{table}[t!]
  \centering
  \footnotesize
  \tabcolsep=0.08cm
  \caption{The ablation study on the initialization methods and PDO module. Abbreviation: WideResNet-50 (WRS-50); all one vector ($\bm{1}$); dyadic search used in ADBA (ADBAS), see Step 2 of ADBA in Sec.~\ref{subsec:discrete_attacks}. Max.Q: 500.}
    \begin{tabular}{c|c|cccccccc}
    \Xhline{0.8pt}
    Model & Search & {Gauss.} & {Unif.} & {Img.} & \multicolumn{1}{c}{$\bm{1}$} & \multicolumn{1}{c}{$\mathbf{d_b}$} & \multicolumn{1}{c}{$\mathbf{d_r}$} & \multicolumn{1}{c}{$\mathbf{d_n}$} &\multicolumn{1}{c}{DDM}\\
    \hline
    \multirow{2}[1]{*}{ViT} & ADBAS & 52.5  & 62.5  & 76.5  & 75.5  & 63.5  & 75.5  & 75.5  & 77.0 \\
          & PDO   & 70.5  & 72.0    & 76.0    & 75.5  & 82.5  & 81.5  & \textbf{85.5} & \textbf{85.5} \\
    \hline
    \multirow{2}[1]{*}{WRS-50} & ADBAS & 16.5  & 18.0    & 36.5  & 45.0    & 27.5  & 35.0    & 28.5  & 37.0 \\
         & PDO   & 42.0    & 43.5  & 43.0    & 45.0    & 44.0    & 48.0    & 47.0    & \textbf{50.5} \\
    \Xhline{0.8pt}
    \end{tabular}%
  \label{tab:ddmpdoabla}%
\end{table}%

\subsection{Generalization to Dense Prediction Tasks} \label{sec:exp_odseg}

To evaluate the generalization of DPAttack across diverse vision tasks, we evaluate its performance on object detection and segmentation against the runner-up, ADBA (detailed attack objectives in App.~\ref{app:expsetting}). As summarized in Table~\ref{tab:odandseg}, our method demonstrates superior efficacy and efficiency in both tasks. Notably, for segmentation, Ours $\mathbf{d_n}$ outperforms ADBA by over $31.5\%$ in ASR. This significant margin validates that our frequency-based priors are highly effective even for complex dense prediction models. App. Table~\ref{tab:dense1000} shows that our $\mathbf{d_n}$ still sustains an ASR improvement of over $31\%$ (OD) and $24.5\%$ (Segmentation) with a larger $1,000$-query budget.

\subsection{Ablation Study}\label{sec:ablation}

\noindent\textbf{DDM.}
To evaluate our DDM initialization, we compare it against standard baselines (Gaussian, Uniform, Random Images, and All-ones vector $\mathbf{1}$) while keeping the subsequent PDO search fixed. As shown in Table~\ref{tab:ddmpdoabla}, DDM significantly outperforms traditional priors. For instance, on ViT, DDM achieves an ASR of $85.5\%$, outstripping Gaussian and Uniform initializations by $15.0\%$ and $13.5\%$, respectively. On the robust WideResNet-50, DDM still provides a substantial gain ($+8.5\%$ over Gaussian). Notably, the advantage of DDM over individual components ($\mathbf{d_b}, \mathbf{d_r}, \mathbf{d_n}$) highlights the necessity of our integrated initialization pipeline. These results confirm that DDM provides a \textit{structurally informative prior} that aligns with the target model's spectral sensitivity, thereby boosting search efficiency. Complementary studies on the individual contributions of $\mathbf{d_n}/\phi(\hat{\mathbf{d}}_\mathbf{r})$ in DDM are detailed in App.~\ref{app:ablation}. Results show that using our variance-based $\mathbf{d_n}$ alone can even yield higher ASR on standard models compared to DDM. Specifically, $\mathbf{d_n}$ is
critical for enhancing overall ASR, while $\phi(\hat{\mathbf{d}}_\mathbf{r})$ balances performance across diverse models.

\noindent\textbf{PDO.} 
The PDO module is designed to refine the initial direction. To assess its impact, we replace PDO with the basic structured dyadic search (denoted as ADBAS, the search strategy used in ADBA). Focusing on the last column in Table~\ref{tab:ddmpdoabla}, we observe that under DDM initialization, PDO outperforms ADBAS by 8.5\% on ViT and 13.5\% on WideResNet-50. This empirically validates Theorems~\ref{thm:main_text} and~\ref{theo:complexity-fixed}, proving that PDO's sign-consistent run partitioning is superior to fixed dyadic splitting. Moreover, PDO consistently enhances ASR across virtually all initialization methods. Notably, when initialized with an unstructured vector ($\mathbf{1}$), the ASRs of PDO and ADBAS are identical (e.g., 45.0\% on WRS-50). This demonstrates that without an initial spatial structure, PDO effectively degrades to a standard dyadic search, whereas its true power is unlocked when paired with DDM's structural priors.

\begin{figure}
\centering
  \includegraphics[width=\linewidth]{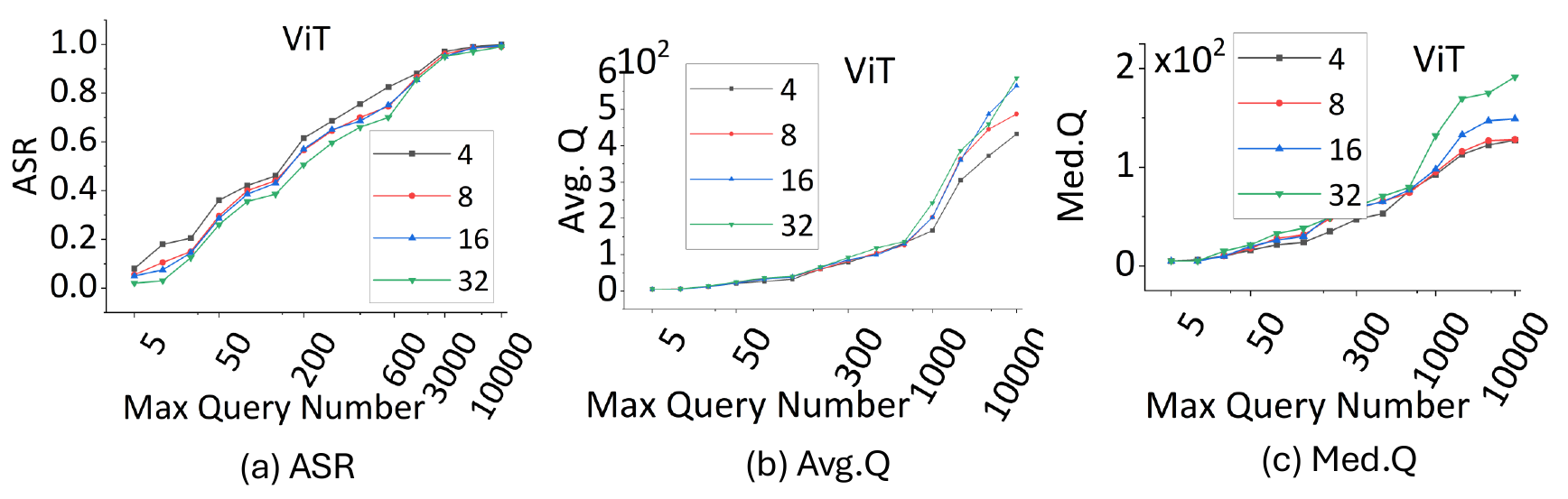}
  \caption{The influence of block size $w$ used in BDCT.}
  % \Description{Enjoying the baseball game from the third-base
  % seats. Ichiro Suzuki preparing to bat.}
  \label{fig:blocksizeResnet50}
\end{figure}

\noindent\textbf{Block Size $w$.}
We evaluate the sensitivity of the block size $w \in \{4, 8, 16, 32\}$ on ImageNet. As illustrated in Fig.~\ref{fig:blocksizeResnet50}, while the overall ASR remains relatively stable, the choice of $w$ leads to fluctuations in query efficiency. Furthermore, the optimal $w$ varies across architectures and settings. For instance, on ImageNet, WideResNet-50 and ViT-B-32 peak at $w=16$ and $w=4$, respectively, while ResNet-18 favors $w=2$ on CIFAR-10. This ``one-size-does-not-fit-all'' observation underscores the necessity of our dynamic block size selection algorithm, which adaptively identifies a suitable $w$ for a given target model to achieve better ASR and query efficiency.

\section{Discussion}
Targeted attacks are inherently more challenging than untargeted attacks due to their narrower adversarial regions and higher-curvature decision boundaries. Although our BDCT initialization effectively captures frequency-band sensitivity and block-wise structural priors, it does not explicitly preserve Fourier phase information, which may limit its ability to guide perturbations toward a specific target class. A promising future direction is to combine the BFS amplitude prior with target-image FFT phase information. 

Regarding perceptibility, our method achieves higher ASR while maintaining comparable PSNR, SSIM, and LPIPS values to baseline methods under the $\ell_2$ constraint. Under the $\ell_\infty$ setting, the visual quality of our perturbations is similar to other sign-flip-based methods such as RayS and ADBA despite substantially improved ASR. Compared with other baselines, our perturbations introduce only slight additional visual degradation while providing significantly stronger attack effectiveness.

\section{Conclusion}

In this work, we propose DPAttack, a highly efficient hard-label black-box attack framework that bridges spectral sensitivity with spatial structural search. By leveraging a systematic analysis of adversarial sensitivity across BDCT frequency coefficients, we develop a customized initialization strategy that provides a structurally informative starting point. To exploit this prior, we introduce a pattern-driven direction optimization module, which significantly accelerates gradient sign estimation by searching along sign-consistent runs. Furthermore, we establish a novel theoretical framework for hard-label sign-flipping attacks, providing proofs for the superiority of our initialization and search strategies. Extensive experiments across various architectures, datasets, tasks and real-world APIs demonstrate that DPAttack consistently surpasses SOTA methods in both attack success rate and query efficiency.

\section*{Acknowledgments}
We are grateful to the anonymous reviewers
for their valuable guidance and insightful comments. J. Zhou's, F. Li's, and part of J. Liu's work was supported in part by Macau Science and Technology Development Fund under 001/2024/SKL, 0119/2024/RIB2 and 0110/2025/R1B2; in part by Research Committee at University of Macau under MYRG-CRG2025-00031-FST and MYRG-GRG2025-00086-FST; in part by the Guangdong Basic and Applied Basic Research Foundation under Grant 2024A1515012536. I. Echizen's and part of J. Liu's work was supported by JSPS KAKENHI Grant JP24H00732; by JST CREST Grants JPMJCR20D3 and JPMJCR2562 including AIP challenge program; and by JST K Program Grant JPMJKP24C2 Japan.
\appendix
\section*{Ethical Considerations}

We structure our ethical analysis based on the stakeholder framework and principles outlined in the Menlo Report~\cite{bailey2012menlo} and the USENIX Security Ethics Guidelines. Our research involves the development of a query-efficient hard-label black-box attack. We have identified the following key stakeholders: (1) Commercial Machine Learning as a Service (MLaaS) providers (Google, Baidu, Tencent, Imagga); (2) Authors of academic defense models (e.g., Blacklight); (3) Human subjects in the datasets; and (4) The broader research community and society.

\noindent\textbf{Interaction with Commercial APIs (MLaaS Providers).}
To evaluate the practical efficiency of our attack and the robustness of real-world APIs, we tested our method against live APIs provided by Google, Baidu, Tencent, and Imagga.
\begin{itemize}[leftmargin=*]
    \item \textbf{Minimizing harm (beneficence):} We strictly limited the query volume to the minimum necessary to validate the attack's capability. Unlike Denial-of-Service (DoS) attacks or resource exhaustion exploits, our method focuses on finding adversarial examples with a low query budget. We maintained a conservative request rate to minimize the risk of degrading service quality for other users or incurring high computational costs.
    \item \textbf{Terms of Service (ToS) and legal compliance:} We acknowledge that probing APIs for vulnerabilities or adversarial robustness may technically violate certain standard ToS regarding reverse engineering or automated testing. However, consistent with the standard practice in security research, we deemed this limited violation necessary to understand the vulnerability of real-world systems to hard-label attacks. We utilized legitimately registered accounts and, where applicable, paid the required fees for API usage, ensuring no financial loss to the providers. We did not attempt to exfiltrate model weights, proprietary data, or compromise the underlying infrastructure.
\end{itemize}

\noindent\textbf{Responsible Disclosure.}
Regarding the academic defense model \textit{Blacklight}, we plan to share our detailed analysis with the authors upon publication. For the commercial APIs, since our attack demonstrates a general susceptibility of neural networks to adversarial examples rather than a specific implementation bug or infrastructure vulnerability (e.g., memory corruption), a standard Coordinated Vulnerability Disclosure process is less applicable. However, we are in the process of notifying the security teams of the respective vendors about the general robustness implications found in our study upon the acceptance of this work.

\noindent\textbf{Dataset Usage and Privacy (Respect for Persons).}
Our experiments utilize several public datasets.
\begin{itemize}[leftmargin=*]
    \item \textbf{Biomedical data (PathMNIST):} We treated the PathMNIST dataset with high caution given its biomedical nature. We utilized only the publicly available of the dataset released by the original authors. We did not attempt to de-anonymize any data subjects or use the data for any purpose other than evaluating algorithmic effectiveness.
    \item \textbf{Large-scale data:} For datasets containing human figures (e.g., SA-1B, MS-COCO, ImageNet), we adhered to their respective licenses. Our research focuses on pixel-level perturbations and does not generate offensive content or violate the dignity of the individuals depicted.
\end{itemize}

\noindent\textbf{Risk-Benefit Analysis and Mitigation.}
\begin{itemize}[leftmargin=*]
    \item \textbf{Potential misuse:} We recognize that publishing efficient hard-label attack code could potentially be used by malicious actors to evade content filters or integrity checks.
    \item \textbf{Justification for publication:} However, hard-label attacks represent a realistic threat model where attackers have limited access. By demonstrating that SOTA defenses, commercial APIs and foundation models are still vulnerable to query-efficient attacks, our work highlights a critical security gap. The benefits of motivating stronger defenses and exposing the limitations of current ``security through obscurity'' (reliance on label-only access) outweigh the risks.
    \item \textbf{Mitigation:} To mitigate potential harm, we will release our code with a responsible usage license.
\end{itemize}

\noindent\textbf{Team Wellbeing.}
Our research team was not exposed to psychologically disturbing content during the handling of standard vision datasets. Furthermore, all API testing was performed using dedicated research infrastructure to ensure operational security and isolate experimental traffic.

% optional clearing of the page
%\cleardoublepage

\section*{Open Science}
In alignment with the USENIX Security Open Science Policy and our commitment to reproducibility, we make all research artifacts associated with this paper publicly available. These artifacts, including data and code, can be accessed via the following repository: \url{https://doi.org/10.5281/zenodo.20322560}. The repository contains the datasets used in our evaluation, the source code for the attack algorithms and experimental analysis, as well as the corresponding model weights necessary to reproduce our results.

% optional clearing of the page
%\cleardoublepage
\bibliographystyle{plainurl}
\bibliography{reference}
\appendix
\section{Proof of Theorem \ref{theo:angle}}\label{app:theo1_proof}

\renewcommand{\thetheorem}{1}
\begin{tcolorbox}[colback=blue!5!white,colframe=black!75!black
,
boxsep=0pt,
  left=4pt,
  right=4pt,
  top=4pt,
  bottom=4pt,
  boxrule=0.7pt,
  arc=2pt
]
\begin{theorem}[Approximation Lower Bound for NRayS]
\label{theo:angle_app}
    % Let $\mathbf{u}\sim\mathcal{N}(0,\Sigma_d)$ be a unit random vector drawn from  a Gaussian distribution with mean $0$ and covariance matrix $\Sigma_d$. 
    Let $\mathbf{u}\in\mathbb{S}^{d-1}$ be an arbitrary unit vector.
    Let $\{\mathbf{d}_j\}_{j=1}^m$ be $m$ vectors sampled independently and uniformly from the binary hypercube $\mathcal{H}\equiv\{-1,+1\}^d$, and let $\hat{\mathbf{d}}_j=\mathbf{d}_j/{\sqrt{d}}$ be their normalized counterparts.
    For any precision $\zeta \in (0,1)$ and failure probability $\delta \in (0,1)$, there exists a constant $C \ge 1$ such that if the sample size satisfies $m \geq -(\ln{\delta})e^{Cd\zeta^2}$, then with probability at least $1-\delta$, there exists at least one index $j \in \{1,\dots,m\}$ satisfying:
      \begin{equation}\label{eq:theo1_1_main_app}
    \begin{gathered}
\mathbf{u}^\top\hat{\mathbf{d}}_j \geq \zeta.
\end{gathered}
    \end{equation}
\end{theorem}
\end{tcolorbox}

\begin{proof}
For any unit vector $\mathbf{u}\in\mathbb{S}^{d-1}$, consider
\begin{equation}\label{eq:theorangle_proof1}
X=\mathbf{u}^\top\hat{\mathbf{d}}
=\frac{1}{\sqrt{d}}\sum_{i=1}^d \mathbf{u}[i]\mathbf{d}[i],
\end{equation}
where $\mathbf{d}[i]\in\{\pm1\}$ are independent Rademacher random variables \cite{40597} (i.e., uniformly sampled from $\{\pm1\}$).
Conditioned on the fixed vector $\mathbf{u}$, the randomness of $X$ arises
solely from the random signs $\mathbf{d}[i]$. Thus, $X$ is a sum of independent, zero-mean random variables with respect to
the randomness of $\mathbf{d}$.
Define
\begin{equation}\label{eq:theorangle_proof2}
X_i := \frac{\mathbf{u}[i]\mathbf{d}[i]}{\sqrt{d}},
\end{equation}
so that $X=\sum_{i=1}^d X_i$, where each $X_i$ satisfies
\[
X_i \in \left[-\frac{|\mathbf{u}[i]|}{\sqrt{d}},
\frac{|\mathbf{u}[i]|}{\sqrt{d}}\right].
\]
After applying \textit{Hoeffding's inequality} \cite{40597}, $\forall~t>0$,
      \begin{equation}\label{eq:theorangle_proof3}
    \begin{gathered}
\mathbb{P}(X\geq\mathbb{E}[X]+t)\leq\exp{\Big(-\frac{2t^2}{\sum_{i=1}^d({2|\mathbf{u}[i]|}/{\sqrt{d}})^2}\Big)}\\\leq\exp{\big(-0.5dt^2\left\|\mathbf{u}\right\|_2^{-2}\big)}.
        \end{gathered}
    \end{equation}
Since the variables $\mathbf{u}[i]$ and $\mathbf{d}[i]$ are independent and $\mathbb{E}[\mathbf{d}[i]]=0$, we have:
  \begin{equation}\label{eq:theorangle_proof3_1}
    \begin{gathered}
\mathbb{E}[X]=\sum_{i=1}^d\mathbb{E}[X_i]=\frac{1}{\sqrt{d}}\sum_{i=1}^d\mathbb{E}[\mathbf{u}[i]\mathbf{d}[i]]\\=\frac{1}{\sqrt{d}}\sum_{i=1}^d\mathbb{E}[\mathbf{u}[i]]\mathbb{E}[\mathbf{d}[i]]=0.
        \end{gathered}
    \end{equation}
Therefore, in Eq.~(\ref{eq:theorangle_proof3}), by replacing $\mathbb{E}[X]=0$, $||\mathbf{u}||_2^2=1$ and let $t$ be a positive cosine similarity value $\zeta\in(0,1)$, we obtain:
  \begin{equation}\label{eq:theorangle_proof4}
    \begin{gathered}
\mathbb{P}(X\geq\zeta)\leq\exp{\big(-0.5{\zeta^2d}\big)}.
        \end{gathered}
    \end{equation}
We define the probability $p$ of the event $X\geq\zeta$ occurs as:
  \begin{equation}\label{eq:theorangle_proof6_app}
    \begin{gathered}
p=\mathbb{P}(X\geq\zeta):=\exp{\big(-\frac{C\zeta^2d}{2}\big)},
        \end{gathered}
    \end{equation} 
    where the constant $C\geq1$ that makes $\exp{\big(-{C\zeta^2d}/{2}\big)}\leq\exp{\big(-{\zeta^2d}/{2}\big)}$ and satisfies Eq.~(\ref{eq:theorangle_proof4}).
We independently sample $m$ vectors $\mathbf{d}_j$ from $\mathcal{H}$, recalling $\hat{\mathbf{d}}_j=\mathbf{d}_j/\sqrt{d}$. Each $X_j:=\mathbf{u}^\top\hat{\mathbf{d}}_j$ satisfies:
  \begin{equation}\label{eq:theorangle_proof6}
    \begin{gathered}
\mathbb{P}(X_j<\zeta)=1-p.
        \end{gathered}
    \end{equation} 
Then the probability of all $m$ failures is $(1-p)^m$.
Since we have $-p\geq-1$, by applying Bernoulli's inequality, we have:
  \begin{equation}\label{eq:theorangle_proof6}
    \begin{gathered}
(1-p)^m\leq\exp(-pm).
        \end{gathered}
    \end{equation} 
    To ensure that the probability of all $m$ samples failing is at most $\delta$ ($0<\delta<1$),
it suffices to require
\begin{equation}\label{eq:theorangle_proof7}
    \begin{gathered}
\exp(-pm)\leq\delta\Rightarrow m\geq-(\ln{\delta})\exp{(C\zeta^2d)}.
        \end{gathered}
    \end{equation} 
Under this condition, the probability that at least one sampled direction
$\mathbf{d}_j \in \{\pm1\}^d$ satisfies $X_j \ge \zeta$ is at least $1-\delta$, i.e.,
\begin{equation}\label{eq:theorangle_proof9}
    \mathbb{P}\!\left(
    \exists\, j \in \{1,\dots,m\} \text{ such that }
    \mathbf{u}^\top \hat{\mathbf{d}}_j \ge \zeta
    \right)
    \ge 1-\delta.
\end{equation}
Hence, it completes the proof.
\end{proof}

\section{Proof of Theorem \ref{theo:raysappro}}\label{app:theo2_proof}

\begin{assumption}\label{ass1} We assume that the CE loss $\mathcal{L}(\mathbf{x},y)$ is continuous and locally smooth \cite{madry2018towards,cheng2019query,cheng2020sign} to ensure that first-order Taylor approximations of
the loss are locally valid. 
 \end{assumption}

\begin{assumption}\label{ass1_2} The loss function $\mathcal{L}(\mathbf{x},y):\mathbb{R}^{d}\rightarrow\mathbb{R}$ satisfies subset-wise directional Lipschitz continuity with a constant $R>0$. This means that for any vector $\mathbf{d}\in\mathcal{H}$, any subset $B_k^{(l)}$ (shortened as $B$), and the vector $\mathbf{d}_k^{(l)}$ (abbreviated as $\mathbf{d}_B$) obtained by flipping the signs of $\mathbf{d}$'s elements whose indices are within subset $B$, the following holds:
   
         \begin{equation}\label{eq:lip2}
    \begin{gathered}
|\mathcal{L}\big(\mathbf{x}+g({\mathbf{d}})\cdot{\mathbf{d}}_B\big)-\mathcal{L}\big(\mathbf{x}+g({\mathbf{d}})\cdot{\mathbf{d}}\big)|\leq R\left\|{\mathbf{d}}_B-{\mathbf{d}}\right\|_1.
        \end{gathered}
    \end{equation} 
    
\end{assumption}

\noindent\textit{Remark.}
    As gradients exhibit local similarity \cite{ilyas2019prior}, Assumption \ref{ass1_2} holds in most cases. We provide a more detailed discussion in Appendix \ref{app:ass1_2}. Moreover, while the $\ell_2$ norm is standard for Lipschitz analysis, in this specific scenario, the $\ell_1$ norm offers a more straightforward and practically meaningful measure, especially when dealing with vectors with values in $\{\pm1\}$ and subset operations. It allows the Lipschitz constant $R$ to be more connected to the length of the flipped subset.

Assumption~\ref{ass1_2} guarantees that
loss variations induced by subset-wise sign flips are well-controlled, while Assumption~\ref{ass1} ensures that first-order Taylor approximations of
the loss are locally valid.
Together, these regularity conditions allow observed loss differences to serve
as reliable proxies for directional derivative influences, leading to the
following subset selection guarantee. We now formalize this notion by introducing the following subset-wise influence
measure.

\renewcommand{\thedefinition}{1}
\begin{definition}[Subset-wise Directional Derivative Influence]
\label{def:def1_app}
For a selected index subset $B_{ks}^{(l)}$ at round $l$ of HRayS (following the definition in Step 2 of HRayS; see Sec.~\ref{subsec:discrete_attacks}), we define its influence as the cumulative magnitude of the gradient components within that subset:
\begin{equation}\label{eq:theo2_proof2_app}
    I(B_{ks}^{(l)}) := \sum_{i\in B_{ks}^{(l)}} \left| \frac{\partial \mathcal{L}(\tilde{\mathbf{x}},y)}{\partial \tilde{\mathbf{x}}[i]} \bigg|_{\tilde{\mathbf{x}}=\mathbf{x}+g(\mathbf{d}_{\text{best}})\mathbf{d}_{\text{best}}} \right|.
\end{equation}
\end{definition}
This metric quantifies the collective contribution of the dimensions in
$B_{ks}^{(l)}$ to the local increase of the loss. A larger $I(\cdot)$ implies the subset contains high-impact dimensions. Based on this influence measure, we establish the following subset selection
guarantee for HRayS.

\renewcommand{\thelemma}{1}
\begin{lemma}
\label{lem:lem1_app}
Under Assumptions~\ref{ass1} and~\ref{ass1_2},
, the subset $B^{(l)}_{ks}$ selected by HRayS at round $l$ satisfies:
    \begin{equation}
    \label{eq:theo2_proof3}
    I(B_{ks}^{(l)}) \geq \max\nolimits_{B_k^{(l)}} I(B_{k}^{(l)}) - \eta_l,
    \end{equation}
    where $\eta_l$ represents the selection error bound at round $l$.
\end{lemma}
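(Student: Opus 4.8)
\noindent\textit{Proof proposal.}
The plan is to convert HRayS's boundary-distance acceptance rule (Step~2 of HRayS) into a statement about the directional-derivative influence $I(\cdot)$ of Definition~\ref{def:def1_app}, and then to exploit the fact that a single round of HRayS scans \emph{all} $2^l$ subsets. The bridge is a first-order Taylor expansion of $\mathcal{L}$ at the current boundary point $\tilde{\mathbf{x}} = \mathbf{x} + g(\mathbf{d}_{\text{best}})\mathbf{d}_{\text{best}}$, whose loss value is, to the accuracy afforded by Assumption~\ref{ass1}, the boundary level set. For a candidate subset $B_k^{(l)}$, the flipped direction moves the probe point by $\mathbf{v}_k := g(\mathbf{d}_{\text{best}})(\mathbf{d}_k^{(l)}-\mathbf{d}_{\text{best}})$, a vector supported on $B_k^{(l)}$ with entries $-2g(\mathbf{d}_{\text{best}})\,\mathbf{d}_{\text{best}}[i]$; HRayS accepts the flip (that is, $g(\mathbf{d}_k^{(l)}) < g(\mathbf{d}_{\text{best}})$) iff moving along $\mathbf{d}_k^{(l)}$ overshoots the boundary at radius $g(\mathbf{d}_{\text{best}})$, i.e.\ iff $\nabla\mathcal{L}(\tilde{\mathbf{x}},y)^\top\mathbf{v}_k + R_k > 0$, where $R_k$ is the Taylor remainder.

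Concretely, I would proceed in four steps. (i) \emph{Linearize the test}: write the acceptance condition as above and note $\nabla\mathcal{L}(\tilde{\mathbf{x}},y)^\top\mathbf{v}_k = -2g(\mathbf{d}_{\text{best}})\sum_{i\in B_k^{(l)}}\mathbf{d}_{\text{best}}[i]\,\partial_i\mathcal{L}(\tilde{\mathbf{x}},y)$, whose magnitude is at most $2g(\mathbf{d}_{\text{best}})\,I(B_k^{(l)})$, with equality when each sign $\mathbf{d}_{\text{best}}[i]$ on $B_k^{(l)}$ is misaligned with $\partial_i\mathcal{L}$. (ii) \emph{Bound $R_k$}: apply Assumption~\ref{ass1_2} with $\mathbf{d}=\mathbf{d}_{\text{best}}$ and $B=B_k^{(l)}$ to get $|\mathcal{L}(\tilde{\mathbf{x}}+\mathbf{v}_k,y)-\mathcal{L}(\tilde{\mathbf{x}},y)|\le R\|\mathbf{d}_k^{(l)}-\mathbf{d}_{\text{best}}\|_1 = 2R\,|B_k^{(l)}|$, which, combined with local smoothness, yields a uniform remainder bound $|R_k|\le\varepsilon_l$ depending only on $l$ (growing with the subset size $d/2^l$ and with $g(\mathbf{d}_{\text{best}})$). (iii) \emph{Relate gain to influence}: invoking the local gradient-similarity discussed after Assumption~\ref{ass1_2}, the sign pattern of $\partial_i\mathcal{L}(\tilde{\mathbf{x}},y)$ relative to $\mathbf{d}_{\text{best}}$ is coherent within each length-$d/2^l$ subset, so the realized first-order gain is, up to an additive coherence correction, proportional to $I(B_k^{(l)})$. (iv) \emph{Greedy selection}: let $B^*_l=\arg\max_{B_k^{(l)}}I(B_k^{(l)})$. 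If HRayS accepts the flip of $B^*_l$, then $B^*_l\subseteq B_{ks}^{(l)}$ and, since $I$ is a sum of nonnegative magnitudes, $I(B_{ks}^{(l)})\ge I(B^*_l)=\max_{B_k^{(l)}}I(B_k^{(l)})$. If it is rejected, Steps~(i)--(iii) force the coherence-corrected gain $2g(\mathbf{d}_{\text{best}})I(B^*_l)$ to be at most $2\varepsilon_l$, so $I(B^*_l)$ is bounded by a quantity $\eta_l$ that lumps together $\varepsilon_l/g(\mathbf{d}_{\text{best}})$ and the coherence correction, and then $I(B_{ks}^{(l)})\ge 0 \ge I(B^*_l)-\eta_l$. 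Defining $\eta_l$ as this round-$l$ selection-error bound closes both cases.

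The main obstacle is Step~(iii): the acceptance test is driven by the \emph{signed} directional gain $\sum_{i\in B_k^{(l)}}\mathbf{d}_{\text{best}}[i]\,\partial_i\mathcal{L}$, whereas $I(B_k^{(l)})$ is the \emph{unsigned} sum $\sum_{i\in B_k^{(l)}}|\partial_i\mathcal{L}|$, and the two can differ by exactly the ``gain cancellation'' this paper studies; worse, a high-influence subset whose signs happen to be already aligned with $\mathbf{d}_{\text{best}}$ is (correctly) rejected while contributing nothing to $B_{ks}^{(l)}$. Making the argument honest therefore requires quantifying intra-subset sign coherence at the current boundary point and folding any residual mismatch into $\eta_l$; the local gradient-similarity hypothesis underpinning Assumption~\ref{ass1_2} is precisely what is needed to keep that term controllable, and the lemma should be read as holding with $\eta_l$ equal to this data-dependent selection error rather than a universal constant. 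A second, milder subtlety is that $\tilde{\mathbf{x}}$ shifts whenever an earlier flip within the same round $l$ is accepted, so the Taylor base point is not literally fixed during the scan; this is handled either by re-expanding at the updated boundary point and noting that the cumulative drift over a round is itself $O(\varepsilon_l)$, or by decomposing round $l$ into single-flip sub-steps.
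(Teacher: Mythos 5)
Your proposal is correct in its essentials, but it takes a genuinely different — and in one important respect more careful — route than the paper's own proof. The paper's argument establishes the two two-sided bounds $|\mathcal{L}(\tilde{\mathbf{x}}+\Upsilon)-\mathcal{L}(\tilde{\mathbf{x}})|\le 2g(\mathbf{d})I(B)$ and $\le 2R|B|$, then simply \emph{asserts} that ``the loss variation induced by flipping a subset $B$ is a bounded and monotone function of its directional derivative influence $I(B)$'' and transfers HRayS's near-greedy behavior on loss differences to $I(\cdot)$ via this claimed monotonicity. That monotonicity claim is exactly what your ``main obstacle'' paragraph (Step~(iii)) identifies as false in general: the acceptance test is governed by the \emph{signed} quantity $\sum_{i\in B}\mathbf{d}_{\text{best}}[i]\,\partial_i\mathcal{L}$, whereas $I(B)$ is unsigned, so two subsets with identical $I$ can induce wildly different loss changes, and a high-$I$ subset already aligned with $\mathbf{d}_{\text{best}}$ is (correctly) rejected while contributing nothing to $B_{ks}^{(l)}$. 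Your case analysis in Step~(iv) — if $B^*_l$ is accepted then $B^*_l\subseteq B_{ks}^{(l)}$ gives $I(B_{ks}^{(l)})\ge I(B^*_l)$ by monotonicity of unsigned sums over set inclusion, and if $B^*_l$ is rejected the signed gain is small so any residual $I(B^*_l)$ must be charged to $\eta_l$ — is cleaner, and it makes explicit what both proofs ultimately lean on: $\eta_l$ is a \emph{data-dependent} selection error whose boundedness is an assumption (Assumption~\ref{ass2}) rather than a derived fact. Indeed the paper's Theorem~\ref{theo:raysappro} statement effectively \emph{defines} $\eta_l := \max_k I(B_k^{(l)}) - I(B_{ks}^{(l)})$, which renders Lemma~\ref{lem:lem1} nearly tautological; your proposal reads it the same way but is more upfront about why this is so and where sign-coherence (local gradient similarity) is what keeps $\eta_l$ from being vacuous. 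Your two final ``subtlety'' points — the signed-versus-unsigned mismatch and the drift of the Taylor base point within a round — are both real issues that the paper's proof does not address, and your suggested treatments (absorb coherence slack into $\eta_l$; re-expand per sub-flip with cumulative $O(\varepsilon_l)$ drift) are reasonable patches. In short, your route reaches the same endpoint but surfaces a gap the paper papers over with an unjustified monotonicity assertion, and would be the sounder version to commit to the manuscript.
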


\begin{proof}
We prove Lemma~\ref{lem:lem1_app} by relating the loss variation induced by a subset
flip to its directional derivative influence.

Let $\tilde{\mathbf{x}}=\mathbf{x}+g(\mathbf{d})\mathbf{d}$ and
$\Upsilon=g(\mathbf{d})(\mathbf{d}_B-\mathbf{d})$.
By a first-order Taylor expansion of $\mathcal{L}$ around $\tilde{\mathbf{x}}$,
we have
\begin{equation}\label{eq:l1prf_new1}
\mathcal{L}(\tilde{\mathbf{x}}+\Upsilon)-\mathcal{L}(\tilde{\mathbf{x}})
= \nabla\mathcal{L}(\tilde{\mathbf{x}})^\top\Upsilon + o(\|\Upsilon\|).
\end{equation}
Ignoring higher-order terms, this gives
\begin{equation}\label{eq:l1prf_new2}
\mathcal{L}(\tilde{\mathbf{x}}+\Upsilon)-\mathcal{L}(\tilde{\mathbf{x}})
\approx g(\mathbf{d}) \sum\nolimits_{i\in B}
\nabla\mathcal{L}(\tilde{\mathbf{x}})[i]\,
(\mathbf{d}_B[i]-\mathbf{d}[i]).
\end{equation}
Since $\mathbf{d}_B[i]=-\mathbf{d}[i]$ for all $i\in B$,
we have $\mathbf{d}_B[i]-\mathbf{d}[i]=-2\mathbf{d}[i]$, and therefore
\begin{equation}\label{eq:l1prf_new3}
\mathcal{L}(\tilde{\mathbf{x}}+\Upsilon)-\mathcal{L}(\tilde{\mathbf{x}})
\approx
-2g(\mathbf{d}) \sum\nolimits_{i\in B}
\nabla\mathcal{L}(\tilde{\mathbf{x}})[i]\mathbf{d}[i].
\end{equation}
Taking absolute values and applying the triangle inequality yields
\begin{equation}\label{eq:l1prf_new4}\small
\big|\mathcal{L}(\tilde{\mathbf{x}}+\Upsilon)-\mathcal{L}(\tilde{\mathbf{x}})\big|
\;\le\;
2g(\mathbf{d}) \sum\nolimits_{i\in B}
\big|\nabla\mathcal{L}(\tilde{\mathbf{x}})[i]\big|
=2g(\mathbf{d})\, I(B).
\end{equation}

On the other hand, by Assumption~\ref{ass1_2} (subset-wise Lipschitz continuity),
we have
\begin{equation}\label{eq:l1prf_new5}
\big|\mathcal{L}(\tilde{\mathbf{x}}+\Upsilon)-\mathcal{L}(\tilde{\mathbf{x}})\big|
\le
R\|\mathbf{d}_B-\mathbf{d}\|_1
=2R|B|.
\end{equation}
Equations~(\ref{eq:l1prf_new4}) and~(\ref{eq:l1prf_new5}) together imply that
the loss variation induced by flipping a subset $B$ is a bounded and monotone
function of its directional derivative influence $I(B)$.

At round $l$, HRayS selects a subset $B_{ks}^{(l)}$ whose induced loss change
is within $\eta_l$ of the maximum achievable loss change over all candidate
subsets $\{B_k^{(l)}\}$. Since the loss change is monotone in $I(B)$,
this directly implies
\begin{equation}
I(B_{ks}^{(l)})
\ge
\max\nolimits_{B_k^{(l)}} I(B_k^{(l)}) - \eta_l,
\end{equation}
which completes the proof.
\end{proof}
\noindent\textit{Remark.}
Assumption~\ref{ass1_2} is not used to derive a tight bound on $I(B)$,
but rather to ensure that loss variations induced by subset-wise sign flips
are comparable across different subsets. This regularity justifies using empirical loss differences as a stable
ranking criterion in HRayS’s greedy subset selection.

%\noindent\textbf{Connecting Subset Influence to Boundary Distance.}
Lemma~\ref{lem:lem1_app} establishes that HRayS preserves high-influence
gradient coordinates through greedy subset selection. To translate this influence-based property into a statement about the boundary
distance function $g(\mathbf{d})$, we next relate $g(\mathbf{d})$ to the change
in cross-entropy loss under local linearity.

At each round $l$, HRayS's behavior can be interpreted as evaluating whether a perturbation in a particular direction causes a significant decrease in $g({\mathbf{d}}_{best})$ within a subspace (subset $B_{ks}^{(l)}$). Given the current best direction $\mathbf{d}_{\text{best}}$, we flip the coordinates in the subset $B_{ks}^{(l)}$ to obtain the direction $\mathbf{d}_{ks}^{(l)}$ (referring to Step 2 of HRayS in Sec.~\ref{subsec:discrete_attacks}). We define:
     \begin{equation}\label{eq:theo2_proof1}
    \begin{gathered}
\Delta_{B_{ks}^{(l)}}:=g({\mathbf{d}}_{best})-g({\mathbf{d}}_{ks}^{(l)})
        \end{gathered}
    \end{equation}
HRayS's strategy prioritizes greedily searching for the subsets $B_{ks}^{(l)}$ that maximize $\Delta_{B_{ks}^{(l)}}$, and focuses finer-grained search in the next round. We next investigate the relationship between $g(\mathbf{d})$ and the loss function $\mathcal{L}(\mathbf{x}+g(\mathbf{d})\mathbf{d},y)$, then combine this connection with Eq.~(\ref{eq:theo2_proof1}) to derive the direction update mechanism of HRayS in relation to the CE loss.

\renewcommand{\thelemma}{2}
\begin{lemma}\label{lemmagdandgrad}
    Under the assumption of a locally linear decision boundary, i.e., the classifier's logit function is $f(\mathbf{x})=\mathbf{W}^\top\mathbf{x}+\mathbf{b}$, the boundary distance function $g(\mathbf{d})$ is approximately inversely proportional to the single-step change in softmax CE loss, $\Delta\mathcal{L}=\mathcal{L}(\mathbf{x}+g(\mathbf{d})\mathbf{d},y)-\mathcal{L}(\mathbf{x},y)$. That is:
         \begin{equation}\label{eq:theo2_lemma1}
    \begin{gathered}
g(\mathbf{d})\propto\frac{1}{\mathcal{L}(\mathbf{x}+g(\mathbf{d})\mathbf{d},y)-\mathcal{L}(\mathbf{x},y)} 
\end{gathered}
    \end{equation}
\end{lemma}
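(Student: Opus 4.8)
The plan is to tie both sides of the claimed relation to a single scalar — the overlap $\langle\mathbf{v},\mathbf{d}\rangle$ of $\mathbf{d}$ with the normal $\mathbf{v}$ of the locally active decision hyperplane — and to show that $g(\mathbf{d})$ and the realized loss ascent are reciprocal in that scalar, up to a factor independent of $\mathbf{d}$. Under the hypothesis of a locally linear decision boundary I would first reduce the multi-class affine model $f(\mathbf{x})=\mathbf{W}^\top\mathbf{x}+\mathbf{b}$ to the binary sub-problem actually traversed on the segment $[\mathbf{x},\,\mathbf{x}+g(\mathbf{d})\mathbf{d}]$: only the ground-truth class $y$ and its nearest competitor $y^{\star}$ are active there, so the relevant logit gap is the affine margin $m(\mathbf{z})=\langle\mathbf{v},\mathbf{z}\rangle+c$ with $\mathbf{v}=\mathbf{w}_y-\mathbf{w}_{y^{\star}}$ the difference of the corresponding columns of $\mathbf{W}$ and $m(\mathbf{x})=m_0>0$, and the softmax cross-entropy collapses to the logistic form $\mathcal{L}(\mathbf{z},y)=\log\!\big(1+e^{-m(\mathbf{z})}\big)$. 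Throughout I restrict to loss-increasing directions $\langle\mathbf{v},\mathbf{d}\rangle<0$, which are precisely those for which $g(\mathbf{d})$ is finite and the statement is non-vacuous.

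The first computation is the closed form for $g(\mathbf{d})$: since $m$ is affine, $m(\mathbf{x}+r\mathbf{d})=m_0+r\langle\mathbf{v},\mathbf{d}\rangle$ first vanishes at $r=g(\mathbf{d})=m_0/|\langle\mathbf{v},\mathbf{d}\rangle|$, i.e.\ the direction-free identity $g(\mathbf{d})\,|\langle\mathbf{v},\mathbf{d}\rangle|=m_0$. The second is the local loss ascent: differentiating the logistic form gives $\nabla_{\mathbf{x}}\mathcal{L}(\mathbf{x},y)=-\frac{1}{1+e^{m_0}}\mathbf{v}$, hence the directional derivative $\langle\nabla_{\mathbf{x}}\mathcal{L}(\mathbf{x},y),\mathbf{d}\rangle=|\langle\mathbf{v},\mathbf{d}\rangle|/(1+e^{m_0})$. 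By Assumption~\ref{ass1} (local smoothness and validity of the first-order Taylor expansion), the single-step loss change $\Delta\mathcal{L}$ is, to leading order, proportional to this rate with a $\mathbf{d}$-independent step factor. Multiplying the two closed forms eliminates $\langle\mathbf{v},\mathbf{d}\rangle$:
\[
g(\mathbf{d})\cdot\langle\nabla_{\mathbf{x}}\mathcal{L}(\mathbf{x},y),\mathbf{d}\rangle
=\frac{m_0}{|\langle\mathbf{v},\mathbf{d}\rangle|}\cdot\frac{|\langle\mathbf{v},\mathbf{d}\rangle|}{1+e^{m_0}}
=\frac{m_0}{1+e^{m_0}},
\]
whose right-hand side depends only on the clean point $\mathbf{x}$. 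Therefore $g(\mathbf{d})$ is reciprocal to the loss ascent, and hence to $\Delta\mathcal{L}$, up to the direction-free constant $m_0/(1+e^{m_0})$, which is the asserted inverse proportionality; under the local linearity assumption a smaller boundary distance further implies a larger expected $\Delta\mathcal{L}$, closing the link back to the HRayS greedy criterion in Eq.~(\ref{eq:theo2_proof1}).

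The main obstacle is not the algebra but reconciling the statement's finite-difference form of $\Delta\mathcal{L}$ with the quantity that is genuinely reciprocal to $g(\mathbf{d})$. Taken literally as the loss gap between $\mathbf{x}$ and the exact boundary point, $\Delta\mathcal{L}$ saturates — in the logistic reduction it equals $\log 2-\log(1+e^{-m_0})$, depending on the clean margin but not on $\mathbf{d}$ — so read that way it cannot be inversely proportional to the $\mathbf{d}$-varying $g(\mathbf{d})$. The proportionality must therefore be taken at the level of the dominant first-order ascent term $\langle\nabla_{\mathbf{x}}\mathcal{L}(\mathbf{x},y),\mathbf{d}\rangle$, i.e.\ the loss gained per unit displacement along $\mathbf{d}$ (equivalently, the loss gained by a single discrete-search step of fixed probe magnitude $\rho$, for which $\Delta\mathcal{L}\approx\rho\,\langle\nabla_{\mathbf{x}}\mathcal{L}(\mathbf{x},y),\mathbf{d}\rangle$), since it is exactly this term that scales as $|\langle\mathbf{v},\mathbf{d}\rangle|=m_0/g(\mathbf{d})$. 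I would state this interpretation explicitly at the start of the proof, and relegate to the appendix the routine verifications: that dropping the sub-dominant logits perturbs $\nabla_{\mathbf{x}}\mathcal{L}$ and $g(\mathbf{d})$ only at lower order, and that the Taylor remainder over the probe segment is negligible under Assumption~\ref{ass1}.
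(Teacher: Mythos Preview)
Your argument is correct and follows essentially the same route as the paper: reduce to the two-class margin $m=f_y-f_{y^\star}$ with normal $\mathbf{v}=\mathbf{w}_y-\mathbf{w}_{y^\star}$, read off $g(\mathbf{d})=m_0/|\langle\mathbf{v},\mathbf{d}\rangle|$ from the affine boundary condition, Taylor-expand the loss, and eliminate $\langle\mathbf{v},\mathbf{d}\rangle$. The only stylistic difference is that you pass to the binary logistic form $\log(1+e^{-m})$ up front, whereas the paper first writes the full softmax gradient $\nabla\mathcal{L}=\sum_j p_j(\mathbf{w}_j-\mathbf{w}_y)$ and then truncates to $\{y,k\}$; your constant $1/(1+e^{m_0})$ is exactly the paper's $p_k(\mathbf{x})$.

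Where your treatment is sharper than the paper's is the last paragraph. You correctly observe that the literal $\Delta\mathcal{L}=\mathcal{L}(\mathbf{x}+g(\mathbf{d})\mathbf{d},y)-\mathcal{L}(\mathbf{x},y)$ saturates to a $\mathbf{d}$-independent constant in the two-class reduction, so the proportionality must be read at the level of a fixed-probe step. The paper makes the same move---silently replacing $g(\mathbf{d})$ by a small step $\alpha$ in its Taylor expansion (Eq.~(\ref{eq:theo2_lemma1_p5}))---but does not flag the discrepancy with the lemma statement; your explicit acknowledgment of this interpretive point is an improvement.
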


\begin{proof} For any class $m\in\{1,\cdots,M\}$, the logit value is $f_m(\mathbf{x})$. We assume $\mathbf{x}$ with true class $y$ is correctly classified and that the nearest class at the decision boundary is $k$ where $k\neq y$. The linear decision boundary $\partial\mathcal{B}_{yk}$ between $\mathbf{x}$ and the nearest class $k$ is defined by the equation $f_y(\mathbf{x})-f_k(\mathbf{x})=0$, which is:
             \begin{equation}\label{eq:theo2_lemma1_p1}
    \begin{gathered}
(\mathbf{w}_y-\mathbf{w}_k)^\top\mathbf{x}+(b_y-b_k)=0,
\end{gathered}
    \end{equation}
  where $\mathbf{w}_y$ represents the $y$-th column of the weight matrix $\mathbf{W}$. $g(\mathbf{d})$ is the distance from point $\mathbf{x}$ along the direction vector $\mathbf{d}$ to the decision boundary $\partial\mathcal{B}_{yk}$. The point $\mathbf{x}+g(\mathbf{d})\mathbf{d}$ lies on the boundary:
               \begin{equation}\label{eq:theo2_lemma1_p2}
    \begin{gathered}
(\mathbf{w}_y-\mathbf{w}_k)^\top(\mathbf{x}+g(\mathbf{d})\mathbf{d})+(b_y-b_k)=0.
\end{gathered}
    \end{equation}
Derived from this, we obtain the formulation of $g(\mathbf{d})$ is:
         \begin{equation}\label{eq:theo2_lemma1_p4}
    \begin{gathered}
g(\mathbf{d}) =\frac{-\big((\mathbf{w}_y-\mathbf{w}_k)^\top\mathbf{x}+(b_y-b_k)\big)}{(\mathbf{w}_y-\mathbf{w}_k)^\top\mathbf{d}}\\
=\frac{-(f_y(\mathbf{x})-f_k(\mathbf{x}))}{(\mathbf{w}_y-\mathbf{w}_k)^\top\mathbf{d}}.
\end{gathered}
    \end{equation}
The intuition of this formula is: the distance to the boundary is proportional to the current logit margin and inversely proportional to the projection of direction $\mathbf{d}$ onto the normal vector of the decision boundary $\mathbf{w}_y-\mathbf{w}_k$.

On the other hand, $\Delta\mathcal{L}$ is the change in the softmax CE loss after moving a small step $\alpha$ from $\mathbf{x}$ along the direction $\mathbf{d}$. The softmax CE loss is defined as:            \begin{equation}\label{eq:theo2_lemma1_p3}
    \begin{gathered}
\mathcal{L}(\mathbf{x},y)=-\log{\frac{e^{f_y(\mathbf{x})}}{\sum_{m=1}^M e^{f_m(\mathbf{x})}}}
=-f_y(\mathbf{x})+\log\sum_{m=1}^M e^{f_m(\mathbf{x})}.
\end{gathered}
    \end{equation}
    For a small step size $\alpha$, we can use a first-order Taylor expansion to approximate the change in the loss function:
                   \begin{equation}\label{eq:theo2_lemma1_p5}
    \begin{gathered}
\Delta\mathcal{L}=\mathcal{L}(\mathbf{x}+\alpha\mathbf{d},y)-\mathcal{L}(\mathbf{x},y)\approx\alpha\cdot\nabla\mathcal{L}(\mathbf{x},y)^\top\mathbf{d}.
\end{gathered}
    \end{equation}
    Now let's compute the gradient of the loss function $\nabla\mathcal{L}(\mathbf{x},y)$ according to the chain rule.
    \begin{equation}\label{eq:theo2_lemma1_p6}
    \begin{gathered}
\nabla\mathcal{L}(\mathbf{x},y)=\nabla_\mathbf{x}\Big(-f_y(\mathbf{x})+\log\sum\nolimits_{m=1}^M{e^{f_m(\mathbf{x})}}\Big)\\=-\mathbf{w}_y+\frac{\sum\nolimits_{m=1}^M(e^{f_m(\mathbf{x})}\mathbf{w}_m)}{\sum\nolimits_{m=1}^Me^{f_m(\mathbf{x})}}\\=\frac{\sum\nolimits_{m=1}^M(e^{f_m(\mathbf{x})}(\mathbf{w}_m-\mathbf{w}_y))}{\sum\nolimits_{m=1}^Me^{f_m(\mathbf{x})}}.
\end{gathered}
    \end{equation}
Let $p_j(\mathbf{x})=\frac{e^{f_j(\mathbf{x})}}{\sum\nolimits_{m=1}^Me^{f_m(\mathbf{x})}}$ be the softmax probability for class $j$. We have:     \begin{equation}\label{eq:theo2_lemma1_p7}
    \begin{gathered}
\nabla\mathcal{L}(\mathbf{x},y)=\sum_jp_j(\mathbf{x})(\mathbf{w}_j-\mathbf{w}_y).
\end{gathered}
    \end{equation}
Substituting this gradient back into Eq.~(\ref{eq:theo2_lemma1_p5}):
                   \begin{equation}\label{eq:theo2_lemma1_p8}
    \begin{gathered}
\Delta\mathcal{L}\approx\alpha\sum_jp_j(\mathbf{x})(\mathbf{w}_j-\mathbf{w}_y) ^\top\mathbf{d}.
\end{gathered}
    \end{equation}
To simplify the analysis, we make an assumption common in adversarial attack literature: when a point $\mathbf{x}$ is near a decision boundary, its softmax probabilities are dominated by the two most relevant classes (the true class $y$ and the nearest class $k$). Thus, $p_j(\mathbf{x})\approx0$ for $j\notin\{y,k\}$. Then,
                   \begin{equation}\label{eq:theo2_lemma1_p9}
    \begin{gathered}
\Delta\mathcal{L}\approx\alpha \big(p_y(\mathbf{x})(\mathbf{w}_y-\mathbf{w}_y) ^\top\mathbf{d}+p_k(\mathbf{x})(\mathbf{w}_k-\mathbf{w}_y) ^\top\mathbf{d}\big)\\
\approx\alpha p_k(\mathbf{x})(\mathbf{w}_k-\mathbf{w}_y) ^\top\mathbf{d}.
\end{gathered}
    \end{equation}
By considering Eqs.~(\ref{eq:theo2_lemma1_p4}) and (\ref{eq:theo2_lemma1_p9}) together, we obtain:
                   \begin{equation}\label{eq:theo2_lemma1_p10}
    \begin{gathered}
g(\mathbf{d})\approx\frac{\alpha\cdot p_k(\mathbf{x})\big(f_y(\mathbf{x})-f_k(\mathbf{x})\big)}{\Delta\mathcal{L}}.
\end{gathered}
    \end{equation}
    For a fixed point x and a small step size $\alpha$, the numerator is a positive constant that does not depend on the direction $\mathbf{d}$. Thus, we arrive at the final inverse proportionality: 
                       \begin{equation}\label{eq:theo2_lemma1_p11}
    \begin{gathered}
g(\mathbf{d})\propto\frac{1}{\Delta\mathcal{L}}.
\end{gathered}
    \end{equation}
\end{proof}

Lemma~\ref{lem:lem1_app} ensures that HRayS identifies subsets with
near-maximal directional derivative influence, while
Lemma~\ref{lemmagdandgrad} links this influence to loss variation and
boundary proximity.
Together, these results provide the analytical foundation for relating
HRayS’s hierarchical subset search to gradient-aligned direction recovery. We now state an algorithmic assumption that characterizes the approximate
greedy behavior of HRayS. This assumption is not required for the preceding
analytical lemmas, but is essential for controlling error accumulation in the
global guarantee of Theorem~\ref{theo:raysappro}.

\begin{assumption}\label{ass2}
HRayS approximately selects subsets corresponding to steep ascent at the $l$-th
round, i.e., the selection error $\eta_l$ is bounded within a small range.
\end{assumption}

\renewcommand{\thetheorem}{2}
\begin{tcolorbox}[colback=blue!5!white,colframe=black!75!black
,
boxsep=0pt,
  left=4pt,
  right=4pt,
  top=4pt,
  bottom=4pt,
  boxrule=0.7pt,
  arc=2pt
]
\begin{theorem}[Gradient Approximation Guarantee for HRayS]
\label{theo:raysappro_app}
Assume that the gradient magnitude is uniformly bounded, i.e.,
$\|\nabla \mathcal{L}(\mathbf{x},y)\|_\infty \le G_{\max}$.
Let $\eta_l$ denote the maximum deviation between the directional derivative
influence of the subset $B_{ks}^{(l)}$ selected by HRayS at round $l$
and that of the optimal subset $\arg\max_{B_k^{(l)}} I(B_k^{(l)})$.
Let $\hat{\mathbf{d}}^*$ be the final direction produced by HRayS.
Then, the alignment between $\hat{\mathbf{d}}^*$ and the true gradient sign satisfies
\begin{equation}
\mathrm{sgn}(\nabla \mathcal{L}(\mathbf{x},y))^\top \hat{\mathbf{d}}^*
\;\ge\;
\frac{\|\nabla \mathcal{L}(\mathbf{x},y)\|_1}{G_{\max}}
\cdot
(1-\varpi),
\end{equation}
where
$\varpi =
\sum_{l=1}^{\lceil \log_2 d \rceil}
\eta_l \|\nabla \mathcal{L}(\mathbf{x},y)\|_1^{-1}$
denotes the cumulative relative error induced by suboptimal subset selections
across all hierarchical rounds.
\end{theorem}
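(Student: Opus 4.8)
\noindent\emph{Proof strategy.}
The plan is to read HRayS as a greedy sign-correction process on the dyadic hierarchy and to track, as a potential, how much gradient $\ell_1$-mass it has already aligned with $\mathbf{u}:=\operatorname{sgn}(\nabla\mathcal{L}(\mathbf{x},y))$ after all $L:=\lceil\log_2 d\rceil$ rounds. For $\mathbf{d}\in\mathcal{H}$ put $\Phi(\mathbf{d})=\sum_{i:\,\mathbf{d}_i\ne\mathbf{u}_i}|\partial_i\mathcal{L}(\mathbf{x},y)|$, the residual (uncorrected) gradient mass, so $\Phi(\mathbf{d}_0)\le\|\nabla\mathcal{L}\|_1$. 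The entire argument then reduces to showing $\Phi(\hat{\mathbf{d}}^*)\le\sum_{l=1}^{L}\eta_l=\varpi\|\nabla\mathcal{L}\|_1$: once this holds, the aligned mass satisfies $\sum_{i:\hat{\mathbf{d}}^*_i=\mathbf{u}_i}|\partial_i\mathcal{L}|=\|\nabla\mathcal{L}\|_1-\Phi(\hat{\mathbf{d}}^*)\ge\|\nabla\mathcal{L}\|_1(1-\varpi)$, and since the bound $\|\nabla\mathcal{L}\|_\infty\le G_{\max}$ caps the mass carried by any single coordinate, the number of sign-agreeing coordinates — which is what the left-hand side of Theorem~\ref{theo:raysappro} counts — is at least $\|\nabla\mathcal{L}\|_1(1-\varpi)/G_{\max}$, giving the claim.

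The first technical step is the per-round accounting. By Lemma~\ref{lemmagdandgrad}, under a locally linear boundary a decrease of $g$ is, up to a positive direction-independent constant, a proportional increase of $\Delta\mathcal{L}$, so HRayS's acceptance rule (Eq.~(\ref{eq:Decision1})) is equivalent to "accept a subset flip iff it increases the loss". Equation~(\ref{eq:l1prf_new3}) in the proof of Lemma~\ref{lem:lem1} identifies that increase, to first order, with $2g(\mathbf{d})$ times the \emph{net corrected mass} of the flipped subset $B$ — the mass of coordinates in $B$ that the flip fixes minus the mass of those it breaks — so accepting the HRayS move at round $l$ decreases $\Phi$ by exactly this net corrected mass on $B_{ks}^{(l)}$. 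Assumptions~\ref{ass1} and~\ref{ass1_2} are precisely what make the first-order identity and the monotone-in-$I(\cdot)$ passage legitimate, and Assumption~\ref{ass2} keeps each shortfall $\eta_l$ small; Lemma~\ref{lem:lem1} then supplies the per-round guarantee that the net corrected mass collected at level $l$ is within $\eta_l$ of the best achievable over the level-$l$ dyadic partition.

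The second step is to telescope over $l=1,\dots,L$. The structural observation is that because the dyadic partition is refined all the way to singletons at level $L$, the flips available across the whole hierarchy can collectively toggle an arbitrary coordinate set, so the sum over levels of the \emph{optimal} level-$l$ net corrected mass is at least $\Phi(\mathbf{d}_0)$: every initially misaligned coordinate can be charged to the unique level at which the greedy path fixes it, and the "accept only if $g$ strictly decreases" rule (Eq.~(\ref{eq:Decision1})) prevents a coordinate from being flipped and then wastefully flipped back. Subtracting the shortfalls yields $\Phi(\mathbf{d}_0)-\Phi(\hat{\mathbf{d}}^*)\ge\Phi(\mathbf{d}_0)-\sum_l\eta_l$, i.e. $\Phi(\hat{\mathbf{d}}^*)\le\sum_l\eta_l=\varpi\|\nabla\mathcal{L}\|_1$, completing the reduction set up in the first paragraph.

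The step I expect to be the real obstacle is this telescoping / error-accumulation argument: one has to prove that the greedy dyadic refinement does not "waste" levels, so that the cumulative corrected mass reconstructs $\Phi(\mathbf{d}_0)$ up to the \emph{additive} budget $\sum_l\eta_l$ rather than degrading by a multiplicative factor at each of the $L$ levels, and this requires carefully exploiting the nesting of the dyadic blocks together with the monotone acceptance rule to assign each uncorrected coordinate a well-defined responsible level. A secondary difficulty is that Lemma~\ref{lem:lem1} gives only an approximate, monotone (not exactly proportional) link between the observed $g$-decrease and the influence $I(B)$, so the conversion constants of Lemma~\ref{lemmagdandgrad} and the $o(\cdot)$ remainders in (\ref{eq:l1prf_new1})--(\ref{eq:l1prf_new4}) must be absorbed \emph{uniformly} across all $L$ rounds into the definition of the $\eta_l$; it is precisely this uniform absorption that leans on Assumptions~\ref{ass1}--\ref{ass2}.
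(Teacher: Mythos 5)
Your overall strategy — view HRayS as a greedy accumulator of gradient $\ell_1$-mass, use Lemma~\ref{lemmagdandgrad} and Lemma~\ref{lem:lem1} to convert the observed decreases in $g(\cdot)$ into near-optimal per-round gains in captured influence, and then convert captured mass into a count of coordinates via $\|\nabla\mathcal{L}\|_\infty\le G_{\max}$ — is the same skeleton the paper uses, and the telescoping step you single out as "the real obstacle" is also the step the paper's own proof handles most informally. The bookkeeping device differs: you track a potential $\Phi(\mathbf{d})=\sum_{i:\mathbf{d}_i\ne\mathbf{u}_i}|\partial_i\mathcal{L}|$, whereas the paper tracks the complementary object, a "selected set" $S$ of coordinates whose signs HRayS has correctly fixed, with $\hat{\mathbf{d}}^*_i=\mathbf{u}_i$ for $i\in S$ and $\hat{\mathbf{d}}^*_i$ treated as arbitrary for $i\notin S$.

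That difference in bookkeeping is exactly where your reduction breaks. Your final sentence asserts that the left-hand side $\mathrm{sgn}(\nabla\mathcal{L})^\top\hat{\mathbf{d}}^*$ "counts the number of sign-agreeing coordinates." It does not: writing $a$ for the number of agreeing coordinates, the inner product equals $a-(d-a)=2a-d$, and a lower bound $a\ge(1-\varpi)\|\nabla\mathcal{L}\|_1/G_{\max}$ only gives $\mathrm{sgn}(\nabla\mathcal{L})^\top\hat{\mathbf{d}}^*\ge 2(1-\varpi)\|\nabla\mathcal{L}\|_1/G_{\max}-d$, which is strictly weaker than the claimed bound (indeed, since $\|\nabla\mathcal{L}\|_1\le dG_{\max}$, it can be negative). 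Your potential $\Phi$ upper-bounds the \emph{mass} of disagreeing coordinates, but does not upper-bound their \emph{number}, so the negative contribution of disagreeing coordinates to the inner product is uncontrolled. The paper sidesteps this by declaring the unselected coordinates' signs random and taking an expectation so that the $i\notin S$ terms in $\mathrm{sgn}(\nabla\mathcal{L})^\top\hat{\mathbf{d}}^*$ vanish, leaving $\sum_{i\in S}1=|S|$ and then applying the $G_{\max}$ bound to $\sum_{i\in S}|\partial_i\mathcal{L}|$. Your framing has no analogous "unselected / random" set — every coordinate is either aligned or misaligned with $\mathbf{u}$ — so you would need to either (i) import the paper's expectation argument by partitioning coordinates into a certified set $S$ and a residual set with zero-mean signs, or (ii) separately upper-bound the count (not just the mass) of misaligned coordinates. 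Without one of those, the last line of your reduction does not follow from the potential-drop claim.

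A secondary, smaller concern is that your justification that a coordinate "cannot be flipped and then wastefully flipped back" by the monotone acceptance rule in Eq.~(\ref{eq:Decision1}) is not quite right: a coordinate can enter a large accepted flip at level $l$ and later be part of a finer accepted flip at level $l'>l$ whose net effect reverses it, because the decision at level $l'$ is made on $g$ of the whole subset, not coordinate-wise. Your charging scheme therefore needs to assign each coordinate to the level of the \emph{last} flip that touches it, not the first, and argue the telescoping over that assignment — but this is a repair to an argument whose basic direction is sound, unlike the final-step issue above.
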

\end{tcolorbox}

\begin{proof} According to Eq.~(\ref{eq:theo2_proof1}), for an invariant $g({\mathbf{d}}_{best})$, maximizing $\Delta_{B_{ks}^{(l)}}$ is approximately corresponds to minimize $g({\mathbf{d}}_{ks}^{(l)})$ within all $2^l$ subsets. Based on Lemma \ref{lemmagdandgrad}, minimizing $g({\mathbf{d}}_{ks}^{(l)})$ is approximately equivalent to maximizing $\mathcal{L}(\mathbf{x}+g({{\mathbf{d}}_\text{best}}){{\mathbf{d}}_{ks}^{(l)}})-\mathcal{L}(\mathbf{x}+g({{\mathbf{d}}_{\text{best}}}){{\mathbf{d}}_{best}})$ within all $2^l$ subsets. Thus, Eq.~(\ref{eq:theo2_proof1}) can be approximated as:
     \begin{equation}\label{eq:theo2_proof1_2}
    \begin{gathered}
\Delta\propto\mathcal{L}(\mathbf{x}+g({{\mathbf{d}}_\text{best}}){{\mathbf{d}}^{(l)}})-\mathcal{L}(\mathbf{x}+g({{\mathbf{d}}_\text{best}}){{\mathbf{d}}_{best}}),
        \end{gathered}
    \end{equation} where $\Delta$ and ${{\mathbf{d}}^{(l)}}$ are the abbreviation of $\Delta_{B_{ks}^{(l)}}$ and ${{\mathbf{d}}_{ks}^{(l)}}$. Consequently, when HRayS approximately greedily searches for the direction $\mathbf{d}_{ks}^{(l)}$ that maximizes the CE loss value $\mathcal{L}(\mathbf{x}+g(\mathbf{d}_{\text{best}}){{\mathbf{d}}_{ks}^{(l)}})$. This provides an intuitive explanation of Theorem \ref{theo:raysappro}. We now proceed with a formal proof.

 Recalling Definition \ref{def:def1_app} and Lemma \ref{lem:lem1_app}, at each level $l$, HRayS retains the set of dimensions within each subset that exhibit the maximal sum of derivatives $I(B_{ks}^{(l)})$. This dimension set is progressively refined across rounds, ultimately yielding a selected coordinate set $S\subset\{1,\cdots,d\}$ that tends to capture dimensions with large gradient magnitudes. Consequently, we can construct a $\pm1$ sign vector $\hat{\mathbf{d}}^*$ where each element
         \begin{equation}\label{eq:theo2_proof4}
    \begin{gathered}
\hat{\mathbf{d}}^*[i]\approx\begin{cases}
    \text{sgn}(\frac{\partial \mathcal{L}(\mathbf{x},y)}{\partial\mathbf{x}[i]}) & \text{if}\ i\in S,\\
    1 \ \text{or} \ -1 & \text{otherwise},
\end{cases}
        \end{gathered}
\end{equation}
 whose alignment with the true gradient's sign vector $\text{sgn}\nabla\mathcal{L}(\mathbf{x},y)$ (shortened as $\text{sgn}\nabla\mathcal{L}(\mathbf{x})$) can be measured by their inner product.
%\textbf{5. Analysis on Error Bound of Inner Product.} 
 This inner product can be expanded as follows:
          \begin{equation}\label{eq:theo2_proof5}
    \begin{gathered}
    \text{sgn}\nabla \mathcal{L}(\mathbf{x})^\top\hat{\mathbf{d}}^*=\sum_{i\in S}\text{sgn}\Big(\frac{\partial \mathcal{L}(\mathbf{x})}{\partial\mathbf{x}[i]}\Big)\hat{\mathbf{d}}^*[i]+ \\ \sum_{i\notin S}\text{sgn}\Big(\frac{\partial \mathcal{L}(\mathbf{x})}{\partial\mathbf{x}[i]}\Big)\hat{\mathbf{d}}^*[i].
        \end{gathered}
\end{equation}Taking expectation over the randomness of the unselected coordinates,
the second term vanishes. Recalling Eqs.~(\ref{eq:theo2_proof2})
and (\ref{eq:theo2_proof3}), we have
\begin{equation}\label{eq:theo2_proof6}
\begin{aligned}
\mathrm{sgn}(\nabla \mathcal{L}(\mathbf{x},y))^\top\hat{\mathbf{d}}^*
&=
\sum_{i\in S} 1 \\
&\ge
\frac{1}{G_{\max}}
\sum_{i\in S}
\left|
\frac{\partial \mathcal{L}(\mathbf{x},y)}{\partial \mathbf{x}[i]}
\right| \\
&\ge
\frac{1}{G_{\max}}
\left(
\|\nabla \mathcal{L}(\mathbf{x},y)\|_1
-
\sum_{l=1}^{\lceil\log_2 d\rceil}\eta_l
\right).
\end{aligned}
\end{equation}
Rearranging the right-hand side of this inequality yields the following expression:
 \begin{equation}\label{eq:theo2_proof7}
\mathrm{sgn}(\nabla \mathcal{L}(\mathbf{x},y))^\top\hat{\mathbf{d}}^*
\;\ge\;
\frac{\|\nabla \mathcal{L}(\mathbf{x},y)\|_1}{G_{\max}}
\cdot
(1-\varpi),
\end{equation}
where $\varpi=\sum_{l=1}^{\lceil\log_2 d\rceil}\eta_l||\nabla \mathcal{L}(\mathbf{x},y)||_1^{-1}$. Hence, it completes the proof.
\end{proof}

\section{Discussion on Assumption \ref{ass1_2}}\label{app:ass1_2}
In this section, we justify Assumption \ref{ass1_2} by leveraging the local similarity of gradient components. For vectors with $\{\pm1\}$ entries, we have $\left\|\mathbf{d}_B-\mathbf{d}\right\|_1=2|B|$, where $|B|$ denotes the cardinality of subset $B$. Therefore, Eq.~(\ref{eq:lip2}) can be rewritten as:
         \begin{equation}\label{eq:lip3}
    \begin{gathered}
    |\mathcal{L}(\mathbf{x}+g(\mathbf{d}){{\mathbf{d}}_B})-\mathcal{L}(\mathbf{x}+g(\mathbf{d}){{\mathbf{d}}})|\leq 2R|B|.
        \end{gathered}
    \end{equation} 
Since $g(\mathbf{d})$ is fixed once $\mathbf{d}$ is determined,
we set $g(\mathbf{d})=1$ for notational simplicity. Following~\cite{ilyas2019prior},
we assume that the gradient
$\nabla \mathcal{L}(\mathbf{x},y)$ exhibits \emph{spatial local similarity},
meaning that there exists a constant $\gamma>0$ such that for any subset
$B$ consisting of consecutive indices,
\begin{equation}\label{eq:gradsim}
\left|
\frac{\partial \mathcal{L}(\tilde{\mathbf{x}},y)}{\partial \tilde{\mathbf{x}}[i]}
-
\frac{\partial \mathcal{L}(\tilde{\mathbf{x}},y)}{\partial \tilde{\mathbf{x}}[j]}
\right|
\le \gamma,
\quad \forall i,j\in B,
\end{equation}
where $\tilde{\mathbf{x}}=\mathbf{x}+\mathbf{d}$. Let
\begin{equation}
s := \frac{1}{|B|}\sum_{i\in B}
\frac{\partial \mathcal{L}(\tilde{\mathbf{x}},y)}{\partial \tilde{\mathbf{x}}[i]}
\end{equation}
denote the average gradient value over subset $B$.
By Eq.~(\ref{eq:gradsim}), each gradient component in $B$ satisfies
\begin{equation}\label{eq:grad_bound}
\left|
\frac{\partial \mathcal{L}(\tilde{\mathbf{x}},y)}{\partial \tilde{\mathbf{x}}[i]}
\right|
\le |s|+\gamma,
\quad \forall i\in B.
\end{equation}
Using the first-order Taylor expansion,
\begin{equation}\label{eq:taylor}
\mathcal{L}(\mathbf{x}+\mathbf{d}_B)
-\mathcal{L}(\mathbf{x}+\mathbf{d})
\approx
(\nabla \mathcal{L}(\tilde{\mathbf{x}},y))^\top(\mathbf{d}_B-\mathbf{d}),
\end{equation}
and noting that $\mathbf{d}_B[i]-\mathbf{d}[i]=-2\mathbf{d}[i]$ for $i\in B$,
we obtain
\begin{equation}\label{eq:taylor2}
\big|\mathcal{L}(\mathbf{x}+\mathbf{d}_B)
-\mathcal{L}(\mathbf{x}+\mathbf{d})\big|
\le
2\sum_{i\in B}
\left|
\frac{\partial \mathcal{L}(\tilde{\mathbf{x}},y)}{\partial \tilde{\mathbf{x}}[i]}
\right|.
\end{equation}
Applying Eq.~(\ref{eq:grad_bound}) yields
\begin{equation}
\big|\mathcal{L}(\mathbf{x}+\mathbf{d}_B)
-\mathcal{L}(\mathbf{x}+\mathbf{d})\big|
\le 2(|s|+\gamma)|B|.
\end{equation}
Defining $R := |s|+\gamma$ completes the justification of
Assumption~\ref{ass1_2}.

\begin{table}[t!]
\footnotesize\tabcolsep=0.02cm
  \centering
  \caption{The ablation study on $\mathbf{d_n}$, $\phi(\hat{\mathbf{d}}_\mathbf{r})$ and PDO. Ours$_\text{opt}$ serves as the baseline ASR; Increased: \textcolor{SeaGreen}{\textbf{green}} background; Decreased: \textcolor{gray}{\textbf{gray}} background.}
  \scalebox{0.9}{
    \begin{tabular}{c|c|ccc|c|c|ccc}
    \Xhline{0.8pt}
    Datasets & \multicolumn{4}{c|}{  ImageNet (ViT-B-32)} & \multicolumn{5}{c}{ CIFAR-10 (VGG-16-BN )} \\
    \hline
    Max$Q$  & \textbf{Ours$_\text{opt}$}  & 
    w/ $\mathbf{d_n}$ & w/ $\phi(\hat{\mathbf{d}}_\mathbf{r})$    & w/o P.  & Max$Q$  & \textbf{Ours$_\text{opt}$} & w/ $\mathbf{d_n}$ & w/ $\phi(\hat{\mathbf{d}}_\mathbf{r})$    & w/o P. \\
    \hline
    5     & 8.0   & 0.0   & \cellcolor{SeaGreen}{0.5}   & 0.0    & 5     & 4.0   & 0.0   & \cellcolor{lightgray}{-4.0} & 0.0   \\
    
    10    & 18.5  & 0.0   & \cellcolor{lightgray}{-4.5} & 0.0     & 10    & 14.5  & 0.0   & \cellcolor{lightgray}{-10.5} & 0.0     \\
    
    50    & 30.0  & \cellcolor{lightgray}{-0.5}  & \cellcolor{lightgray}{-5.0} & \cellcolor{lightgray}{-0.5}     & 50    & 30.0  & \cellcolor{lightgray}{-2.5}   & \cellcolor{lightgray}{-13.0} & \cellcolor{lightgray}{-4.5}    \\
    
    80    & 41.5  & \cellcolor{lightgray}{-0.5}  & \cellcolor{lightgray}{-6.0} & \cellcolor{lightgray}{-2.5}      & 80    & 45.5  & \cellcolor{lightgray}{-3.5}   & \cellcolor{lightgray}{-14.0} & \cellcolor{lightgray}{-1.5}    \\
    
    100   & 44.5  & \cellcolor{SeaGreen}{4.0}   & \cellcolor{lightgray}{-4.5} & 0.0     & 100   & 49.5  & 0.0   & \cellcolor{lightgray}{-12.5} & \cellcolor{SeaGreen}{2.5}   \\
    
    200   & 64.5  & \cellcolor{SeaGreen}{4.0}   & \cellcolor{lightgray}{-5.5} & \cellcolor{lightgray}{-6.5}  & 150   & 62.0  & \cellcolor{lightgray}{-2.0} & \cellcolor{lightgray}{-13.0} & 0.0  \\
    
    300   & 74.5  & \cellcolor{SeaGreen}{1.5}   & \cellcolor{lightgray}{-5.0} & \cellcolor{lightgray}{-11.0}  & 200   & 76.0  & \cellcolor{lightgray}{-3.0} & \cellcolor{lightgray}{-16.0} & \cellcolor{lightgray}{-4.0}    \\
    
    400   & 82.0  & \cellcolor{lightgray}{-1.0} & \cellcolor{lightgray}{-6.0} & \cellcolor{lightgray}{-10.0}  & 250   & 80.0  & \cellcolor{lightgray}{-1.5} & \cellcolor{lightgray}{-13.0} & \cellcolor{lightgray}{-2.5}  \\
    
    500   & 85.5  & 0.0 & \cellcolor{lightgray}{-7.5} & \cellcolor{lightgray}{-8.5}  & 300   & 82.5  & \cellcolor{lightgray}{-0.5} & \cellcolor{lightgray}{-7.0} & \cellcolor{lightgray}{-1.5}    \\
    
    1k    & 89.5  & \cellcolor{SeaGreen}{3.0}   & 0.0 & \cellcolor{lightgray}{-5.0}  & 400   & 87.5  & \cellcolor{lightgray}{-1.0} & \cellcolor{lightgray}{-7.0} & \cellcolor{SeaGreen}{1.0}  \\
    
    3k    & 98.0  & 0.0   & \cellcolor{lightgray}{-1.0} & \cellcolor{lightgray}{-3.5}  & 500   & 92.5  & \cellcolor{lightgray}{-3.5} & \cellcolor{lightgray}{-8.0} & \cellcolor{lightgray}{-3.0}  \\
    
    5k    &  99.5 & 0.0   & \cellcolor{lightgray}{-0.5}    & \cellcolor{lightgray}{-2.5}  & 1k    & 98.5  & \cellcolor{lightgray}{-2.5} & \cellcolor{lightgray}{-5.0} & \cellcolor{lightgray}{-2.5}  \\
    
    10k    & 100.0  & 0.0   & \cellcolor{lightgray}{-0.5} & \cellcolor{lightgray}{-0.5} & 2k    & 100.0  & \cellcolor{lightgray}{-1.5} & \cellcolor{lightgray}{-1.0} & \cellcolor{lightgray}{-1.5} \\
    \Xhline{0.8pt}
    \end{tabular}%
    }
  \label{tab:ablacitandvgg}%
\end{table}%

\begin{figure*}[t!]
  \includegraphics[width=\textwidth]{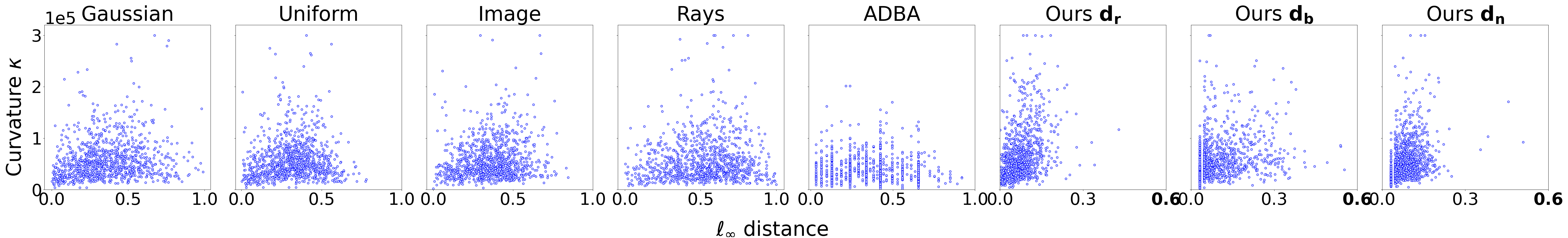}
  \caption{The relationship between boundary distance ($\ell_\infty$) and curvature $\kappa$ across different initializations.}  \label{fig:startpointscatter}
\end{figure*}

\begin{figure*}[t!]
  \includegraphics[width=\textwidth]{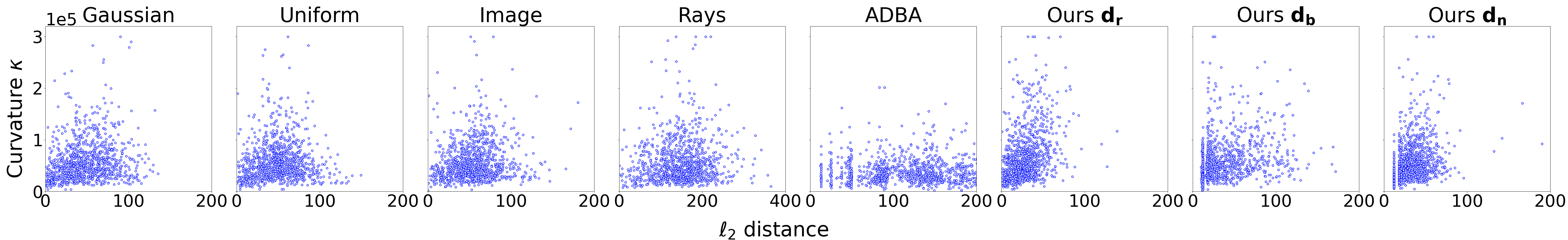}
  \caption{The relationship between boundary distance ($\ell_2$) and curvature $\kappa$ across different initializations.}
  \label{fig:startpointscatter_l2}
\end{figure*}

\section{Additional Ablation Studies}\label{app:ablation}

\subsection{Effect of Frequency Sampling Priors}\label{app:freqprior}
To validate the robustness of our universal frequency-variance curve, we compare it against two model-specific strategies: (1) \textbf{Peak-Only Strategy}, which perturbs only the single optimal DCT channel (the peak of the BFS curve) for the target model; and (2) \textbf{Model-Tailored Strategy}, which employs the exact, model-specific BFS curve to generate the perturbation direction $\mathbf{d_n}$ (see generation procedures in Sec.~\ref{sec:dirs}). Results indicate a clear trade-off between specificity and transferability. While the Peak-Only strategy boosts ASR on its matched model (e.g., ResNet-50: $+8.5\%/+5\%$ at Max.Q=$50/500$), it suffers a severe performance drop on unmatched architectures (e.g., ViT:  $-21\%$ drop). Similarly, the Model-Tailored strategy degrades performance on unseen models (ViT: $-8\%/-5\%$). This confirms that our balanced strategy effectively mitigates the risk of overfitting to specific spectral signatures, ensuring robust performance across diverse, unknown architectures.

\subsection{The contribution of Directions $\mathbf{d_n}$ and $\mathbf{d_a}$}
To further clarify the individual contributions of $\mathbf{d_n}$ and $\mathbf{d_a}=\phi(\hat{\mathbf{d}}_\mathbf{r})$—both designed to improve initial direction in DDM—we introduce two new attack variants that apply only one of them during initialization, keeping all other settings unchanged. These variants, denoted as w/ $\mathbf{d_n}$ and w/ $\mathbf{d_a}$, are evaluated in terms of ASR, as shown in Table \ref{tab:ablacitandvgg}. We perform the ablation study using Ours$_\text{opt}$ (i.e. fixed optimal block size with DDM) as the baseline. Query counts are omitted due to their comparable performance. We can observe that, using only $\mathbf{d_n}$ without $\mathbf{d_a}$ even leads to an improvement in ASR for the ViT-B-32. This may be attributed to the high compatibility between the frequency-domain sensitivity of the ViT architecture and the frequency-based sampling strategy of $\mathbf{d_n}$. In contrast, for lower-resolution datasets such as CIFAR-10, incorporating $\mathbf{d_a}$ is crucial for improving the ASR. But when using only $\mathbf{d_a}$ without $\mathbf{d_n}$, the ASR drops significantly across models. These observations indicate that $\mathbf{d_n}$ plays a critical role in enhancing the overall ASR, while $\mathbf{d_a}$ which focuses on low-frequency components, serves to balance performance across diverse black-box models, without substantially compromising the ASR on models where $\mathbf{d_n}$ alone already performs well.

\section{Initialization of Direction $\mathbf{d_r}$}\label{app:dr}
The process for initializing $\mathbf{d_r}$ is formalized as follows:
let the image height and width be $H$ and $W$, the side length of color squares be $h$. The image with randomly colored squares can be formally represented as:
\begin{equation}\label{eq:rcolor1}
 \begin{gathered}
\mathbf{x}^\prime_r = \left( \mathbf{a}_{ij} \right)_{i=1,j=1}^{H,W} \quad \text{where} \quad 
\mathbf{a}_{i,j} = \mathbf{c}_{\lfloor i/h \rfloor, \lfloor i/h \rfloor}.
\end{gathered}
\end{equation}
Spatial blocking is achieved through $\lfloor i/h\rfloor$ and $\lfloor j/h\rfloor$. The color square matrix $\mathbf{c}$ is defined as:
\begin{equation}\label{eq:rcolor2}
 \begin{gathered}
\mathbf{c} \in \mathbb{Z}^{K \times L \times 3} \quad (K = \lceil H/h \rceil, L = \lceil W/h \rceil).
\end{gathered}
\end{equation}
Each element $\mathbf{c}_{k,l}(R_{k,l},G_{k,l},B_{k,l})$ represents the RGB values of the block at position $(k,l)$. Color components satisfy
\begin{equation}\label{eq:rcolor3}
 \begin{gathered}
R_{k,l}, G_{k,l}, B_{k,l} \in \{0,1,\dots,255\} \\ \text{with} \quad \forall k,l: \mathbf{C}_{k,l} \sim U(\{0,1,\dots,255\}^3).
\end{gathered}
\end{equation}
Afterwards, we can obtain an image with randomly colored squares by performing the pixel mapping rule as follows:
\begin{equation}\label{eq:rcolor4}
 \begin{gathered}
\mathbf{a}_{i,j} = 
\begin{cases}
\mathbf{c}_{k,l}, & \begin{aligned}
&x \in [kh+1, (k+1)h] \\
&\land y \in [lh+1, (l+1)h]
\end{aligned} \\
\text{Boundary}, & \text{otherwise}
\end{cases}
\end{gathered}
\end{equation}
When $H$ and $W$ are not integer multiples of $h$, boundary handling is performed via replication padding.

\begin{figure*}[t!]
  \includegraphics[width=\textwidth]{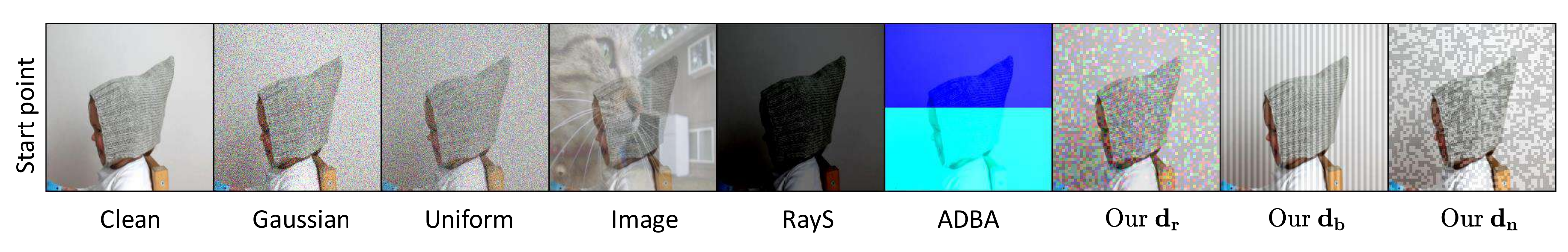}
  \caption{The illustration of AEs (dubbed start point) along different initial directions under $\ell_\infty$ contraint.}
  \label{fig:initialae}
\end{figure*}

\section{Analysis on Direction Initialization Methods}\label{appsec:initres}
We present the scatter plots comparing the $\ell_\infty$ or $\ell_2$ perturbation distances and curvature variations of start points generated by different methods (as shown in Figs.~\ref{fig:startpointscatter} and ~\ref{fig:startpointscatter_l2}). As observed, our proposed initialization directions $\mathbf{d_r}$ and $\mathbf{d_n}$ exhibit the tightest clustering towards the left side of the plot, while other methods show more dispersed and larger distances. This demonstrates that our $\mathbf{d_r}$ and $\mathbf{d_n}$ initialization directions can identify start points with greater local curvature under smaller adversarial perturbations. More examples of start points generated by different kinds of perturbation direction initialization methods are given in Fig.~\ref{fig:initialae}.

\section{Proof of Theorem~\ref{theo:ddm}}
\label{app:theo3_proof}

\subsection{Setup and Notation}

Let $U \in \mathbb{R}^{d\times d}$ denote the orthonormal BDCT basis
($U^\top U = I$), and let the BDCT coefficients be partitioned into
$C$ frequency channels. %To align with the implementation in Section~\ref{sec:method}, each abstract channel index $c \in \{1, \dots, C\}$ in this theoretical analysis maps to a specific tuple $(k, i, j)$ representing the frequency coordinate $(i, j)$ in color channel $k$, where $k \in \{1, \dots, C_{img}\}$ and $0 \le i, j < w$.
Let $P_q$ denote the orthogonal projector onto the subspace corresponding
to channel $q$. Let the initialization direction be generated as $\mathbf{x}_0 = U \mathbf{z}$, where $\mathbf{z} \in \mathbb{R}^d$ is a zero-mean Gaussian vector whose
covariance is block-diagonal across frequency channels: $\mathbf{z} \sim \mathcal{N}(0, \Sigma_v)$, $\Sigma_v = \sum_{q=1}^Q v_q P_q$, where $v_q \ge 0$ denotes the scalar variance assigned to frequency channel $q$. Equivalently, the components of $\mathbf{z}$ restricted to each channel
$q$ are independently sampled as
$\mathbf{z}_{|q} \sim \mathcal{N}(0, v_q I)$. 

Similarly, we introduce an auxiliary random vector in the BDCT domain,
$\tilde{\mathbf{h}} \in \mathbb{R}^d$,
which will later be endowed with specific statistical and semantic interpretations. Let $\mathbf{h} = U\tilde{\mathbf{h}}$, where
$\tilde{\mathbf{h}} \sim \mathcal{N}(0, \Sigma_s)$ with
$\Sigma_s = \sum_{q=1}^Q s_q P_q$, where $s_q \ge 0$ represents the variance proxy corresponding to the gradient energy in channel $q$. Let $\mathbf{g} = \nabla_{\mathbf{x}}\mathcal{L}(\mathbf{x}, y) \in \mathbb{R}^d$ be the true gradient in the pixel space, and $\mathbf{u} = \mathrm{sgn}(\mathbf{g}) \in \{-1, 1\}^d$ be its sign vector.
\subsection{Assumptions and Lemmas}
First, we present Lemma~\ref{lem:arcsine} to provide a quantitative link between
the Pearson correlation of two Gaussian variables and the expected
alignment of their sign vectors.
This result will serve as the key analytical tool for translating
second-order (covariance-level) alignment into first-order
(sign-level) alignment in Theorem~\ref{theo:ddm}.

\renewcommand{\thelemma}{3}
\begin{lemma}[Arcsine Law for Gaussian Signs \cite{1446497}]\label{lem:arcsine}
Let $(X, Y)$ be a pair of zero-mean jointly Gaussian random variables with correlation coefficient $\rho = \mathrm{Corr}(X, Y) \in (-1, 1)$. Then:
\begin{equation}
\mathbb{E}\big[\mathrm{sgn}(X)\,\mathrm{sgn}(Y)\big] = \frac{2}{\pi}\arcsin(\rho).
\end{equation}
By extension, for jointly Gaussian vectors $\mathbf{X}, \mathbf{Y} \in \mathbb{R}^d$ with coordinate-wise correlations $\rho_i = \mathrm{Corr}(X_i, Y_i)$, the expected normalized inner product of their signs is:
\begin{equation}
\frac{1}{d}\,\mathbb{E}\left[\langle \mathrm{sgn}(\mathbf{X}),\,\mathrm{sgn}(\mathbf{Y})\rangle\right] = \frac{1}{d}\sum_{i=1}^d \frac{2}{\pi}\arcsin(\rho_i).
\end{equation}
\end{lemma}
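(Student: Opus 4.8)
\textit{Proof proposal.}
The plan is to establish the scalar identity first and then lift it coordinate-wise. Since $\mathrm{sgn}(aX)=\mathrm{sgn}(X)$ for every $a>0$, and since the hypothesis $\rho\in(-1,1)$ forces both marginal variances to be strictly positive (so that $\mathbb{P}(X=0)=\mathbb{P}(Y=0)=0$ and the convention for $\mathrm{sgn}(0)$ is irrelevant), we may assume without loss of generality that $X,Y\sim\mathcal{N}(0,1)$ with $\mathrm{Corr}(X,Y)=\rho$. Then $\mathbb{E}[\mathrm{sgn}(X)\mathrm{sgn}(Y)]=\mathbb{P}(XY>0)-\mathbb{P}(XY<0)=1-2\,\mathbb{P}(XY<0)$, so the whole problem reduces to computing the sign-disagreement probability $\mathbb{P}(XY<0)$.

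For that I would use a rotational-invariance argument. Take a planar standard Gaussian $\mathbf{g}\sim\mathcal{N}(0,I_2)$ and unit vectors $\mathbf{a},\mathbf{b}\in\mathbb{R}^2$ with $\langle\mathbf{a},\mathbf{b}\rangle=\rho$, i.e.\ subtending an angle $\theta:=\arccos(\rho)$. Then $(\langle\mathbf{a},\mathbf{g}\rangle,\langle\mathbf{b},\mathbf{g}\rangle)$ is a jointly Gaussian pair with unit marginals and correlation $\rho$, hence has the same law as $(X,Y)$. The event $\{XY<0\}$ corresponds to $\mathbf{g}$ lying in the symmetric difference of the half-planes $H_{\mathbf{a}}=\{x:\langle\mathbf{a},x\rangle>0\}$ and $H_{\mathbf{b}}$; this symmetric difference is a union of two antipodal wedges, each of angular width $\theta$. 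By the rotational invariance of $\mathbf{g}$, $\mathbb{P}(XY<0)=2\theta/(2\pi)=\arccos(\rho)/\pi$. Substituting and using $\arccos(\rho)+\arcsin(\rho)=\pi/2$ gives $\mathbb{E}[\mathrm{sgn}(X)\mathrm{sgn}(Y)]=1-\tfrac{2}{\pi}\arccos(\rho)=\tfrac{2}{\pi}\arcsin(\rho)$; the endpoint cases $\rho=\pm1$ follow by continuity (both sides equal $\pm1$).

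For the vector extension, I would note that if $(\mathbf{X},\mathbf{Y})$ is jointly Gaussian then each coordinate pair $(X_i,Y_i)$ is jointly Gaussian with correlation $\rho_i=\mathrm{Corr}(X_i,Y_i)$, so applying the scalar identity termwise inside $\tfrac{1}{d}\mathbb{E}[\langle\mathrm{sgn}(\mathbf{X}),\mathrm{sgn}(\mathbf{Y})\rangle]=\tfrac{1}{d}\sum_{i=1}^d\mathbb{E}[\mathrm{sgn}(X_i)\mathrm{sgn}(Y_i)]$ yields $\tfrac{1}{d}\sum_{i=1}^d\tfrac{2}{\pi}\arcsin(\rho_i)$, which is exactly the claimed formula. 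No independence among coordinates is needed, since linearity of expectation already suffices.

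This is a classical identity (Sheppard's formula), so I do not expect a genuine obstacle; the only delicate point is the non-degeneracy bookkeeping in the reduction step, which the hypothesis $\rho\in(-1,1)$ handles automatically. If one prefers an analytic route over the geometric one, an equivalent argument is via Price's theorem: set $I(\rho)=\mathbb{E}[\mathrm{sgn}(X)\mathrm{sgn}(Y)]$ for standard normals, observe $I(0)=0$ by independence, differentiate under the integral using $\partial_\rho\phi_\rho=\partial_x\partial_y\phi_\rho$ for the bivariate normal density $\phi_\rho$, and integrate by parts twice (with $\partial_x\mathrm{sgn}(x)=2\delta_0$ in the distributional sense) to get $I'(\rho)=4\phi_\rho(0,0)=\tfrac{2}{\pi\sqrt{1-\rho^2}}$, whose integral from $0$ recovers $\tfrac{2}{\pi}\arcsin(\rho)$. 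I would present the geometric proof as the main one since it is fully self-contained.
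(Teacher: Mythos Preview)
Your proof is correct and follows the same high-level reduction as the paper: both transform $\mathbb{E}[\mathrm{sgn}(X)\mathrm{sgn}(Y)]$ into an orthant-type probability for the bivariate normal (the paper via $\mathrm{sgn}(t)=2\mathbb{I}(t>0)-1$ giving $4\,\mathbb{P}(X>0,Y>0)-1$; you via $1-2\,\mathbb{P}(XY<0)$), then appeal to linearity of expectation for the vector case. The difference is that the paper simply \emph{cites} Sheppard's formula for $\mathbb{P}(X>0,Y>0)=\tfrac{1}{4}+\tfrac{1}{2\pi}\arcsin(\rho)$, whereas you actually \emph{derive} the equivalent disagreement probability $\mathbb{P}(XY<0)=\arccos(\rho)/\pi$ from scratch using rotational invariance of the planar Gaussian. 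Your approach is therefore more self-contained and arguably more instructive; the paper's is shorter but relies on an external reference. Your alternative Price's-theorem route is not used in the paper at all and is a genuinely different (analytic rather than geometric) derivation.
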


\begin{proof}
Using the identity $\mathrm{sgn}(t) = 2\mathbb{I}(t>0) - 1$, we have:
\begin{equation}
\mathbb{E}[\mathrm{sgn}(X)\mathrm{sgn}(Y)] = 4\mathbb{P}(X>0, Y>0) - 1.
\end{equation}
For bivariate normal variables, Sheppard's formula~\cite{Owen1956BivariateNormalTables,Rosenbaum1961TruncatedBivariateNormal} states that the orthant probability is $\mathbb{P}(X>0, Y>0) = \frac{1}{4} + \frac{1}{2\pi}\arcsin(\rho)$. Substituting this back yields the result. Linearity of expectation extends this to the vector case.
\end{proof}

To analyze the expected alignment induced by the proposed initialization,
we introduce a probabilistic model in the frequency domain.
The following assumption specifies the joint second-order structure
of the initialization and a surrogate gradient signal.

\begin{assumption}[Frequency-Variance Prior with Average Channel Correlation]
\label{asmp:prior}
We assume that, for each channel $q$, the corresponding components of
$\mathbf{z}$ and $\tilde{\mathbf{h}}$ admit a Pearson correlation
coefficient $\rho_q \in [-1,1]$.
While $\rho_q$ may vary in sign across channels, we assume a weak
aggregate bias:
\[
\mathbb{E}_q[\rho_q] > 0.
\]
\end{assumption}

Under the probabilistic model in Asmp.~\ref{asmp:prior}, the following lemma characterizes
how channel-wise correlations in the BDCT domain are transferred
to pixel-wise correlations.

\renewcommand{\thelemma}{4}
\begin{lemma}[Correlation Transfer]\label{lem:transfer}

For any pixel coordinate $i$, the Pearson correlation between
$(\mathbf{x}_0)_i$ and $(\mathbf{h})_i$ is given by
\begin{equation}
\rho_i
=
\frac{
\sum_{q=1}^Q \beta_{i,q}\,\rho_q \sqrt{v_q s_q}
}{
\sqrt{\sum_{q=1}^Q \beta_{i,q} v_q}\;
\sqrt{\sum_{q=1}^Q \beta_{i,q} s_q}
},
\end{equation}
where
$\beta_{i,q} := \|(U^\top \mathbf{e}_i)_{|q}\|_2^2$
denotes the energy contribution of frequency channel $q$ to pixel $i$.
\end{lemma}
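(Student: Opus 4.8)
The plan is to reduce the statement to three elementary second-moment computations for linear functionals of the jointly Gaussian pair $(\mathbf{z},\tilde{\mathbf{h}})$, and then to form the defining ratio of a Pearson correlation. Writing $\mathbf{a}_i := U^\top\mathbf{e}_i$, orthonormality of $U$ gives $\|\mathbf{a}_i\|_2 = 1$ and
\[
(\mathbf{x}_0)_i = \mathbf{e}_i^\top U\mathbf{z} = \mathbf{a}_i^\top\mathbf{z},
\qquad
(\mathbf{h})_i = \mathbf{e}_i^\top U\tilde{\mathbf{h}} = \mathbf{a}_i^\top\tilde{\mathbf{h}},
\]
so both coordinates are centered Gaussians (linear images of centered Gaussians), and it suffices to evaluate $\mathrm{Var}((\mathbf{x}_0)_i)$, $\mathrm{Var}((\mathbf{h})_i)$, and $\mathrm{Cov}((\mathbf{x}_0)_i,(\mathbf{h})_i)$.

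The first step I would carry out is to make the joint second-order structure of $(\mathbf{z},\tilde{\mathbf{h}})$ fully explicit. By construction $\mathrm{Cov}(\mathbf{z}) = \Sigma_v = \sum_{c} v_c P_c$ and $\mathrm{Cov}(\tilde{\mathbf{h}}) = \Sigma_s = \sum_{c} s_c P_c$, while Assumption~\ref{asmp:prior} --- read as imposing that the within-channel cross-covariance is coordinate-wise diagonal with common correlation $\rho_c$ and that distinct channels are uncorrelated --- gives $\mathrm{Cov}(\mathbf{z},\tilde{\mathbf{h}}) = \Sigma_{vs} := \sum_{c} \rho_c\sqrt{v_c s_c}\,P_c$. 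The key algebraic identity is that, because each $P_c$ is an orthogonal projector, $\mathbf{a}_i^\top P_c\mathbf{a}_i = \|P_c\mathbf{a}_i\|_2^2 = \|(\mathbf{a}_i)_{|c}\|_2^2 = \beta_{i,c}$. Applying this term by term to the quadratic forms $\mathbf{a}_i^\top\Sigma_v\mathbf{a}_i$, $\mathbf{a}_i^\top\Sigma_s\mathbf{a}_i$, and $\mathbf{a}_i^\top\Sigma_{vs}\mathbf{a}_i$ yields $\mathrm{Var}((\mathbf{x}_0)_i) = \sum_c \beta_{i,c} v_c$, $\mathrm{Var}((\mathbf{h})_i) = \sum_c \beta_{i,c} s_c$, and $\mathrm{Cov}((\mathbf{x}_0)_i,(\mathbf{h})_i) = \sum_c \beta_{i,c}\rho_c\sqrt{v_c s_c}$; dividing the last by the product of the square roots of the first two is exactly the claimed formula for $\rho_i$.

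The part that calls for genuine care rather than bookkeeping is pinning down precisely what must be extracted from Assumption~\ref{asmp:prior}: the bare phrase ``the channel-$c$ components of $\mathbf{z}$ and $\tilde{\mathbf{h}}$ have Pearson correlation $\rho_c$'' does not on its own determine the full cross-covariance, so I would either fold the two structural hypotheses above (coordinate-wise diagonality within a channel, no inter-channel coupling) into the assumption or state them explicitly at the top of the proof. A secondary point is well-definedness: since $\|\mathbf{a}_i\|_2 = 1$ forces $\sum_c\beta_{i,c}=1$ with $\beta_{i,c}\ge 0$, and $v_c,s_c>0$, both variances are strictly positive, so the denominator never vanishes and $\rho_i\in[-1,1]$ follows automatically by Cauchy--Schwarz; no further Gaussian facts are needed here, the Arcsine Law (Lemma~\ref{lem:arcsine}) entering only at the subsequent sign-alignment step of Theorem~\ref{theo:ddm}.
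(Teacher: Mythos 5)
Your proof is correct and takes essentially the same route as the paper's: write the pixel coordinate as a linear functional $\mathbf{a}_i^\top\mathbf{z}$ (the paper uses the equivalent coordinate-wise sum $\sum_{k\in\mathcal{I}_c}U_{ik}^2$ where you use the projector quadratic form $\mathbf{a}_i^\top P_c\mathbf{a}_i=\beta_{i,c}$), evaluate the three second moments from the block-diagonal channel covariances, and normalize. Your observation that Assumption~\ref{asmp:prior} as literally stated does not determine the full cross-covariance --- one must additionally impose coordinate-wise diagonality within each channel and vanishing inter-channel coupling --- is a genuine imprecision in the paper's setup that its own proof silently assumes when it writes $\mathrm{Cov}((\mathbf{x}_0)_i,(\mathbf{h})_i)=\sum_c\sum_{k\in\mathcal{I}_c}U_{ik}^2\,\mathrm{Cov}(z_k,\tilde h_k)$ with no cross terms.
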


\begin{proof}
Both $(\mathbf{x}_0)_i$ and $(\mathbf{h})_i$ are zero-mean Gaussian random variables,
as linear transformations of jointly Gaussian vectors.
Their covariance is
\begin{footnotesize}
\begin{equation}
\mathrm{Cov}\big((\mathbf{x}_0)_i,(\mathbf{h})_i\big)
=
\sum\nolimits_{q=1}^Q
\sum\nolimits_{k \in \mathcal{I}^q}
U_{ik}^2\,
\mathrm{Cov}(z_k,\tilde h_k).
\end{equation}
\end{footnotesize}
By assumption, for $k \in \mathcal{I}^q$,
$\mathrm{Cov}(z_k,\tilde h_k) = \rho_q \sqrt{v_q s_q}$,
which yields
\[
\mathrm{Cov}\big((\mathbf{x}_0)_i,(\mathbf{h})_i\big)
=
\sum_{q=1}^Q \beta_{i,q}\,\rho_q \sqrt{v_q s_q}.
\]
The variances satisfy
\[
\mathrm{Var}((\mathbf{x}_0)_i) = \sum_{q=1}^Q \beta_{i,q} v_q,
\qquad
\mathrm{Var}((\mathbf{h})_i) = \sum_{q=1}^Q \beta_{i,q} s_q.
\]
Normalizing the covariance by the standard deviations gives the stated expression.
\end{proof}

Lemma~\ref{lem:transfer} establishes correlation transfer between the
initialization and an auxiliary Gaussian vector.
To relate this auxiliary variable $\mathbf{h}$ to the true gradient of the loss $\nabla_{\mathbf{x}}\mathcal{L}$,
we introduce the following modeling assumption.

\begin{assumption}[Channel-Sensitivity Surrogate]
\label{asmp:sensitivity} Let the variance $s_q$ used for sampling $\mathbf{h}$ be obtained by our BFS analysis method, we assume that this $\mathbf{h}$ serves as a surrogate of the true gradient
$\mathbf{g} = \nabla_{\mathbf{x}}\mathcal{L}$,
in the sense that their sign vectors admit positive expected alignment:
\[
\frac{1}{d}\,\mathbb{E}\big[\langle \mathrm{sgn}(\mathbf{h}), \mathrm{sgn}(\mathbf{g}) \rangle\big] \ge 0.
\]
Accordingly, we denote the surrogate gradient $\mathbf{g}^\sharp := \mathbf{h}$.
\end{assumption}

% \begin{assumption}[Surrogate-to-True-Gradient Transfer]
% \label{asmp:sensitivity}
% Let the variance $s_q$ used for sampling $\mathbf{h}$ be obtained by our BFS analysis method.
% We assume that the sign alignment induced by the surrogate gradient signal
% $\mathbf{h}$ transfers to the true gradient sign up to an error
% $\varepsilon_{\mathrm{sur}}\ge 0$. Specifically, with
% $\mathbf{d_n}=\mathrm{sgn}(\mathbf{x}_0)$, we assume
% \[
% \frac{1}{d}\mathbb{E}\big[\langle \mathbf{d_n}, \mathbf{u} \rangle\big]
% \ge
% \frac{1}{d}\mathbb{E}\big[\langle \mathbf{d_n}, \mathrm{sgn}(\mathbf{h}) \rangle\big]
% -\varepsilon_{\mathrm{sur}}.
% \]
% For Theorem~\ref{theo:ddm}, we consider the informative-surrogate regime where
% $\varepsilon_{\mathrm{sur}} < \gamma$.
% \end{assumption}

\noindent\textit{Justification of Assumption~\ref{asmp:sensitivity}.}
We provide a justification showing that the surrogate
$\mathbf{g}^\sharp$ constructed via BFS induces non-negative expected sign
alignment with the true gradient $\mathbf{g}$.

\noindent\textit{Step 1: BFS as a monotone proxy for channel-wise gradient energy.}
For a frequency channel $q$, BFS evaluates
\[
\mathrm{BFS}_q
=
\mathbb{E}_{\boldsymbol{\delta}_q}\!\left[
\mathcal{L}(\mathbf{x}+\boldsymbol{\delta}_q,y)
\right],
\]
where $\boldsymbol{\delta}_q$ is a zero-mean random perturbation supported on
channel $q$.
By first-order Taylor expansion,
\[
\mathcal{L}(\mathbf{x}+\boldsymbol{\delta}_q,y)
=
\mathcal{L}(\mathbf{x},y)
+
\langle \nabla_{\mathbf{x}}\mathcal{L}, \boldsymbol{\delta}_q\rangle
+
o(\|\boldsymbol{\delta}_q\|).
\]
While the linear term vanishes in expectation, the magnitude of loss variation
induced by $\boldsymbol{\delta}_q$ is governed by the projected gradient energy
$\|(U^\top \mathbf{g})_{|q}\|_2$.
Consequently, $\mathrm{BFS}_q$ serves as a monotone proxy (up to a constant offset)
for the channel-wise gradient energy in expectation.

\noindent\textit{Step 2: Construction of an energy-aligned Gaussian surrogate.}
Let $\mathbf{s}=(s_q)$ be defined proportional to $\mathrm{BFS}_q$ and construct
$\tilde{\mathbf{g}} \sim \mathcal{N}(0,\Sigma_s)$ with
$\Sigma_s = \sum_q s_q P_q$.
Define the surrogate gradient $\mathbf{g}^\sharp = U\tilde{\mathbf{g}}$.
By construction, channels with larger true gradient energy
$\|(U^\top \mathbf{g})_{|q}\|_2^2$ are assigned larger variance $s_q$ in the
surrogate, implying second-order alignment between $\mathbf{g}^\sharp$ and
$\mathbf{g}$ in the BDCT domain.

\medskip
\noindent\textit{Step 3: From second-order alignment to expected sign alignment.}
Each pixel coordinate aggregates contributions from multiple frequency channels
under the orthonormal BDCT basis.
As a result, the pixel-wise covariance
$\mathrm{Cov}((\mathbf{g}^\sharp)_i,\mathbf{g}_i)$ is a non-negative weighted sum
of channel-wise energy alignments.
This induces a non-negative average Pearson correlation:
\[
\mathbb{E}_i\!\left[
\mathrm{Corr}\big((\mathbf{g}^\sharp)_i,\mathbf{g}_i\big)
\right] \ge 0.
\]
For jointly Gaussian variables, the Arcsine Law implies that expected sign
agreement is a monotone function of the Pearson correlation.
Applying this coordinate-wise yields
\[
\frac{1}{d}\,
\mathbb{E}\big[\langle \mathrm{sgn}(\mathbf{g}^\sharp),
\mathrm{sgn}(\mathbf{g})\rangle\big]
\ge 0,
\]
which establishes Assumption~\ref{asmp:sensitivity}.

% For jointly Gaussian variables, the Arcsine Law implies that expected sign
% agreement is a monotone function of the Pearson correlation.
% Applying this coordinate-wise suggests that the surrogate signal
% $\mathbf{h}$ captures a non-negative component of the true gradient sign.
% We summarize the possible mismatch between this surrogate-induced sign
% alignment and the true gradient-sign alignment by the transfer error
% $\varepsilon_{\mathrm{sur}}$ in Assumption~\ref{asmp:sensitivity}.

\subsection{Proof of Theorem~\ref{theo:ddm}}\label{subsec:theorem3}
Combining the correlation transfer property (Lemma~\ref{lem:transfer}),
the sign-correlation relationship for Gaussian variables
(Lemma~\ref{lem:arcsine}), and the surrogate-gradient alignment
assumption (Assumption~\ref{asmp:sensitivity}),
we establish a positive expected alignment between the
proposed initialization and the true gradient sign.

% \renewcommand{\thetheorem}{3}
% \begin{tcolorbox}[colback=blue!5!white,colframe=black!75!black]
% \begin{theorem}[Positive Expected Alignment under Frequency-Variance Prior]
% \label{theo:ddm_app}
% Let $\mathbf{x}_0 = U\mathbf{z}$ be the initialization generated under
% Assumption~\ref{asmp:prior} with sign vector $\mathbf{d_n} = \mathrm{sgn}(\mathbf{x}_0)$, and let $\mathbf{h}$ be the proxy of the gradient under Assumption~\ref{asmp:prior}, $\mathbf{u} = \mathrm{sgn}(\nabla_{\mathbf{x}}\mathcal{L})$ denote the sign of the
% true gradient. Under Lemma~\ref{lem:transfer} and Assumption~\ref{asmp:sensitivity}, the
% channel-wise correlations between $\mathbf{x}_0$ and $\mathbf{h}$ induce
% coordinate-wise correlations between $\mathbf{x}_0$ and $\mathbf{u}$.
% Then the expected cosine alignment satisfies
% \begin{equation}
% \frac{1}{d}\,\mathbb{E}\big[\langle \mathbf{d_n}, \mathbf{u} \rangle\big]
% \approx
% \frac{1}{d}\sum_{i=1}^d
% \mathbb{E}\!\left[\frac{2}{\pi}\arcsin(\rho_i)\right]
% > 0,
% \end{equation}
% where
% $\rho_i = \mathrm{Corr}((\mathbf{x}_0)_i,(\mathbf{h})_i)$
% is given in Lemma~\ref{lem:transfer}.
% In contrast, for isotropic random initializations
% $\mathbf{d}_{\mathrm{rand}}$, we have
% \[
% \mathbb{E}\big[\langle \mathbf{d}_{\mathrm{rand}}, \mathbf{u} \rangle\big] = 0.
% \]
% \end{theorem}
% \end{tcolorbox}
\renewcommand{\thetheorem}{3}
\begin{tcolorbox}[colback=blue!5!white,colframe=black!75!black
,
boxsep=0pt,
  left=4pt,
  right=4pt,
  top=4pt,
  bottom=4pt,
  boxrule=0.7pt,
  arc=2pt
]
\begin{theorem}[Positive Expected Alignment under Frequency-Variance Prior]
\label{theo:ddm_app}
Let $\mathbf{x}_0 = U\mathbf{z}$ be the initialization generated under
Assumption~\ref{asmp:prior} with sign vector
$\mathbf{d_n} = \mathrm{sgn}(\mathbf{x}_0)$, and let
$\mathbf{u} = \mathrm{sgn}(\nabla_{\mathbf{x}}\mathcal{L})$ denote the true
gradient sign. Let $\mathbf{h}=U\tilde{\mathbf{h}}$ be the Gaussian surrogate
gradient signal, and for
$\rho_i = \mathrm{Corr}((\mathbf{x}_0)_i,(\mathbf{h})_i)$ define
$\gamma := d^{-1}\sum_{i=1}^d \frac{2}{\pi}\arcsin(\rho_i)$. If $\gamma>0$ and Assumption~\ref{asmp:sensitivity} holds with
$\varepsilon_{\mathrm{sur}}<\gamma$, then
\[
{d}^{-1}\mathbb{E}\big[\langle \mathbf{d_n}, \mathbf{u} \rangle\big]
\ge
\gamma-\varepsilon_{\mathrm{sur}}
>0.
\]
In contrast, for isotropic random initializations
$\mathbf{d}_{\mathrm{rand}}$ that are coordinate-wise symmetric and independent
of $\mathbf{u}$, we have
$d^{-1}\mathbb{E}\big[\langle \mathbf{d}_{\mathrm{rand}}, \mathbf{u} \rangle\big] = 0$.
\end{theorem}
\end{tcolorbox}

\begin{proof}
    
We now leverage the above preliminaries to prove Theorem~\ref{theo:ddm}.

\textbf{1. Expected Alignment of $\mathbf{d_n}$.}
Let $\mathbf{x}_0 = U\mathbf{z}$ be the pre-sign initialization and
$\mathbf{g}^\sharp = U\tilde{\mathbf{g}}$ be the gradient proxy defined under
Assumptions~\ref{asmp:prior} and~\ref{asmp:sensitivity}.
For each pixel coordinate $i$, denote
\[
\rho_i := \mathrm{Corr}\big((\mathbf{x}_0)_i,(\mathbf{g}^\sharp)_i\big),
\]
which admits the explicit form given in Lemma~\ref{lem:transfer}.

Under Assumption~\ref{asmp:prior}, the channel-wise correlations
$\{\rho_q\}$ have a positive expectation, which induces a positive bias in the
pixel-level correlations when aggregated across coordinates.
As a result, aggregating across pixel coordinates yields a positive bias in expectation,
namely,
\[
\frac{1}{d}\sum_{i=1}^d \mathbb{E}[\rho_i] > 0.
\]
% Applying the Arcsine Law (Lemma~\ref{lem:arcsine}) coordinate-wise and using the
% linearity of expectation, the expected sign alignment between
% $\mathbf{d_n} = \mathrm{sgn}(\mathbf{x}_0)$ and
% $\mathbf{u}^\sharp = \mathrm{sgn}(\mathbf{g}^\sharp)$ satisfies
% \begin{equation}
% \frac{1}{d}\mathbb{E}\big[\langle \mathbf{d_n}, \mathbf{u}^\sharp \rangle\big]
% =
% \frac{1}{d}\sum_{i=1}^d
% \mathbb{E}\!\left[\frac{2}{\pi}\arcsin(\rho_i)\right]
% > 0,
% \end{equation}
% where the inequality follows from the monotonicity of $\arcsin(\cdot)$ on
% $(-1,1)$ and the positive average correlation.

% By Assumption~\ref{asmp:sensitivity}, $\mathbf{u}^\sharp$ serves as a valid proxy
% for the true gradient sign $\mathbf{u}$ in expectation, implying that the
% positive expected alignment transfers from $\mathbf{u}^\sharp$ to $\mathbf{u}$. This means:
% \begin{equation}
% \frac{1}{d}\mathbb{E}\big[\langle \mathbf{d_n}, \mathbf{u} \rangle\big]
% \approx\frac{1}{d}\mathbb{E}\big[\langle \mathbf{d_n}, \mathbf{u}^\sharp \rangle\big]
% =
% \frac{1}{d}\sum_{i=1}^d
% \mathbb{E}\!\left[\frac{2}{\pi}\arcsin(\rho_i)\right]
% > 0.
% \end{equation}

Applying the Arcsine Law (Lemma~\ref{lem:arcsine}) coordinate-wise and using the
linearity of expectation, the expected sign alignment between
$\mathbf{d_n} = \mathrm{sgn}(\mathbf{x}_0)$ and
$\mathbf{u}^\sharp = \mathrm{sgn}(\mathbf{g}^\sharp)$ satisfies
\begin{equation}
\frac{1}{d}\mathbb{E}\big[\langle \mathbf{d_n}, \mathbf{u}^\sharp \rangle\big]
=
\frac{1}{d}\sum_{i=1}^d
\mathbb{E}\!\left[\frac{2}{\pi}\arcsin(\rho_i)\right]
:= \gamma
> 0,
\end{equation}
where the inequality follows from the monotonicity of $\arcsin(\cdot)$ on
$(-1,1)$ and the positive average correlation.

By Assumption~\ref{asmp:sensitivity}, $\mathbf{u}^\sharp$ serves as a proxy
for the true gradient sign $\mathbf{u}$ in expectation. We quantify the possible
surrogate-to-true-gradient mismatch by an error term
$\varepsilon_{\mathrm{sur}}\ge 0$, defined such that
\begin{equation}
\frac{1}{d}\mathbb{E}\big[\langle \mathbf{d_n}, \mathbf{u} \rangle\big]
\ge
\frac{1}{d}\mathbb{E}\big[\langle \mathbf{d_n}, \mathbf{u}^\sharp \rangle\big]
-
\varepsilon_{\mathrm{sur}}.
\end{equation}
Therefore, in the informative-surrogate regime
$\varepsilon_{\mathrm{sur}} < \gamma$, we obtain
\begin{equation}
\frac{1}{d}\mathbb{E}\big[\langle \mathbf{d_n}, \mathbf{u} \rangle\big]
\ge
\gamma-\varepsilon_{\mathrm{sur}}
=
\frac{1}{d}\sum_{i=1}^d
\mathbb{E}\!\left[\frac{2}{\pi}\arcsin(\rho_i)\right]
-\varepsilon_{\mathrm{sur}}
> 0.
\end{equation}

\textbf{2. Comparison with Random Baselines.}
We now consider several common random initialization strategies
$\mathbf{d}_{\mathrm{rand}}$ that are independent of $\mathbf{u}$:
\begin{itemize}[leftmargin=0.4cm]
    \item \textbf{Isotropic Noise (Gaussian or Uniform):}
    Let $\mathbf{d}_{\mathrm{rand}} = \mathrm{sgn}(\mathbf{z})$ with
    $\mathbf{z} \sim \mathcal{N}(0,I)$ or $\mathbf{z} \sim \mathcal{U}(-1,1)^d$.
    By symmetry and independence, each coordinate agrees with $\mathbf{u}$ with
    probability $1/2$, yielding
    $\mathbb{E}[\langle \mathbf{d}_{\mathrm{rand}},\mathbf{u}\rangle]=0$.

    \item \textbf{Constant Directions:}
    For $\mathbf{d}_{\mathrm{rand}}=\pm\mathbf{1}$, the expected alignment
    $\mathbb{E}[\sum_i u_i]$ vanishes under mild symmetry assumptions on the gradient. This symmetry is empirically substantiated by the zero-mean profile observed in Fig.~\ref{fig:distruegrad}, rendering the alignment negligible in practice.

    \item \textbf{Unrelated Clean Images:}
    When $\mathbf{d}_{\mathrm{rand}}$ is derived from an unrelated clean image,
    any residual correlation with $\mathbf{u}$ is bounded by a small constant
    $\varepsilon_{\mathrm{img}} \approx 0$ and lacks the systematic frequency
    alignment induced by Assumption~\ref{asmp:prior}.
\end{itemize}

% Therefore, the proposed initialization $\mathbf{d_n}$ achieves a
% positive expected alignment with the true gradient sign, whereas standard random
% baselines yield zero expected alignment in expectation. This completes the proof.

Therefore, the proposed initialization $\mathbf{d_n}$ achieves positive expected
normalized alignment with the true gradient sign, whereas coordinate-wise
symmetric isotropic random baselines yield zero expected alignment. This
completes the proof.
\end{proof}

\begin{corollary}[Boundary Distance Reduction]\label{cor:boundary}
Under a local linear decision boundary model, the distance to the boundary $r(\mathbf{d})$ along direction $\mathbf{d}$ is monotonically decreasing with the gradient alignment $\langle \mathbf{d}, \mathbf{u} \rangle$. Consequently, Theorem~\ref{theo:ddm} implies that our initialization $\mathbf{d_n}$ yields a smaller expected initial distance compared to random baselines.
\end{corollary}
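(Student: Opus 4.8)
\noindent\textit{Proof proposal.}
The plan is to combine the closed-form boundary distance under the local linear model (Lemma~\ref{lemmagdandgrad}) with the positive expected alignment of Theorem~\ref{theo:ddm}. Recall from Eq.~(\ref{eq:theo2_lemma1_p4}) that along a direction $\mathbf{d}$ the boundary distance is
\[
r(\mathbf{d}) \;=\; g(\mathbf{d}) \;=\; \frac{f_y(\mathbf{x})-f_k(\mathbf{x})}{\langle \mathbf{n},\mathbf{d}\rangle},\qquad \mathbf{n} := \mathbf{w}_k-\mathbf{w}_y,
\]
which is finite precisely on the cone $\{\mathbf{d}:\langle \mathbf{n},\mathbf{d}\rangle>0\}$; there, since the margin $f_y(\mathbf{x})-f_k(\mathbf{x})>0$ is a constant independent of $\mathbf{d}$, the map $\mathbf{d}\mapsto r(\mathbf{d})$ is strictly decreasing (a convex reciprocal) in $\langle\mathbf{n},\mathbf{d}\rangle$. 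By Eq.~(\ref{eq:theo2_lemma1_p9}), near the boundary $\mathbf{g} := \nabla_{\mathbf{x}}\mathcal{L}(\mathbf{x},y)\approx p_k(\mathbf{x})\,\mathbf{n}$ is a strictly positive multiple of $\mathbf{n}$, so $\langle\mathbf{n},\mathbf{d}\rangle$ is proportional to $\langle\mathbf{g},\mathbf{d}\rangle$; hence $r(\mathbf{d})$ is monotonically decreasing in the (magnitude-weighted) gradient projection $\langle\mathbf{g},\mathbf{d}\rangle$.

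The one genuine step is to pass from this weighted projection to the sign-level alignment $\langle\mathbf{d},\mathbf{u}\rangle$ controlled by Theorem~\ref{theo:ddm}. For $\mathbf{d}\in\{\pm1\}^d$ we have $\langle\mathbf{g},\mathbf{d}\rangle=\sum_i|g_i|\,d_iu_i=2W(\mathbf{d})-\|\mathbf{g}\|_1$ with $W(\mathbf{d}):=\sum_{i:\,d_i=u_i}|g_i|$, so $r(\mathbf{d})$ is a strictly decreasing function of the \emph{weighted} sign agreement $W(\mathbf{d})$. I would then note that the Arcsine-law argument behind Theorem~\ref{theo:ddm} is coordinate-wise and magnitude-agnostic: the same per-coordinate positive bias that yields $\mathbb{E}[\mathbb{I}\{d_{n,i}=u_i\}]>\tfrac12$ also yields $\mathbb{E}[W(\mathbf{d_n})]=\sum_i|g_i|\,\mathbb{E}[\mathbb{I}\{d_{n,i}=u_i\}]>\tfrac12\|\mathbf{g}\|_1=\mathbb{E}[W(\mathbf{d}_{\mathrm{rand}})]$, simply reweighting each indicator by the deterministic constant $|g_i|$. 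Consequently $\mathbb{E}[\langle\mathbf{n},\mathbf{d_n}\rangle]>0=\mathbb{E}[\langle\mathbf{n},\mathbf{d}_{\mathrm{rand}}\rangle]$, strictly.

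Finally I would convert this ordering of expected projections into an ordering of expected distances. Since $t\mapsto c/t$ is strictly decreasing and convex on $t>0$, the cleanest route is a coupling / stochastic-dominance argument: establish coordinate-wise that $\langle\mathbf{n},\mathbf{d_n}\rangle$ stochastically dominates $\langle\mathbf{n},\mathbf{d}_{\mathrm{rand}}\rangle$ and push it through the decreasing map $r$; alternatively, condition on $\langle\mathbf{n},\mathbf{d}\rangle>0$ and combine Jensen on the convex reciprocal with the mean inequality. Directions landing in the complementary half-space $\langle\mathbf{n},\mathbf{d}\rangle\le0$ carry (linear-model) distance $+\infty$, which only strengthens the conclusion, as $\mathbf{d}_{\mathrm{rand}}$ falls there with probability $\tfrac12$ while $\mathbf{d_n}$ does so strictly less often. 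Since $\mathbf{d}_0$ is selected by BiLiSearch as the candidate with the smaller boundary distance, $r(\mathbf{d}_0)\le r(\mathbf{d_n})$ deterministically, so the strict inequality $\mathbb{E}[r(\mathbf{d}_0)]<\mathbb{E}[r(\mathbf{d}_{\mathrm{rand}})]$ follows.

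\noindent\textit{Main obstacle.} The delicate point is the bridge between the unweighted sign alignment $\langle\mathbf{d},\mathbf{u}\rangle$ delivered by Theorem~\ref{theo:ddm} and the magnitude-weighted projection $\langle\mathbf{d},\mathbf{g}\rangle$ that actually governs $r(\mathbf{d})$: one must either re-run the Arcsine bound verbatim with the deterministic weights $|g_i|$ inserted, or impose a mild homogeneity/exchangeability assumption on the gradient magnitudes. The secondary subtlety is the nonlinearity of $\mathbf{d}\mapsto r(\mathbf{d})$, which rules out a naive "expectation of a monotone function" step and requires the coupling or Jensen-type refinement above to obtain \emph{strict} inequality.
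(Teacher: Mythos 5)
Your proposal follows the same core route as the paper: parametrize the linear boundary, derive $r(\mathbf{d})=R/\langle\mathbf{d},\mathbf{n}/\|\mathbf{n}\|\rangle$ (paper Eq.~(\ref{eq:dist_geom})), conclude monotonicity, and then invoke Theorem~\ref{theo:ddm}. Where you differ is in honesty about what Theorem~\ref{theo:ddm} actually controls. The paper's proof defines $\mathbf{u}=\mathbf{w}/\|\mathbf{w}\|_2$ to be the \emph{unit normal} of the linear boundary, writes $r(\mathbf{d})=R/\langle\mathbf{d},\mathbf{u}\rangle$, and then simply substitutes the conclusion of Theorem~\ref{theo:ddm}, whose $\mathbf{u}$ is the \emph{sign vector} $\operatorname{sgn}(\nabla\mathcal{L})$. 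It never discusses the mismatch. You are right to flag that bridging from the sign-level alignment $\langle\mathbf{d},\operatorname{sgn}(\mathbf{g})\rangle$ to the magnitude-weighted projection $\langle\mathbf{d},\mathbf{g}\rangle$ (equivalently $\langle\mathbf{d},\mathbf{n}\rangle$) is the real content of the corollary, and you are also right that the reciprocal map and the half-space with $\langle\mathbf{n},\mathbf{d}\rangle\le 0$ need an argument; the paper gives neither and just asserts the final inequality.

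However, the patch you propose for the first gap does not work as written. You argue that the arcsine argument gives a \emph{per-coordinate} bias $\mathbb{E}[\mathbb{I}\{d_{n,i}=u_i\}]>\tfrac12$, and then reweight each indicator by $|g_i|$. But Theorem~\ref{theo:ddm} only asserts the \emph{average} $\tfrac1d\sum_i\tfrac{2}{\pi}\arcsin(\rho_i)>0$, and the paper's own remark immediately following the theorem, together with Assumption~\ref{asmp:prior} (which allows $\rho_c$ of mixed sign), explicitly disclaims coordinate-wise positivity of $\rho_i$. Once $\rho_i$ can be negative on some coordinates, $\sum_i |g_i|\,\mathbb{E}[\mathbb{I}\{d_{n,i}=u_i\}]>\tfrac12\|\mathbf{g}\|_1$ no longer follows from $\sum_i\mathbb{E}[\mathbb{I}\{d_{n,i}=u_i\}]>\tfrac{d}{2}$ unless $|g_i|$ and $\rho_i$ are suitably positively correlated, which would be an extra assumption. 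So the weighted-alignment step needs either (i) a hypothesis that $\rho_i\ge 0$ coordinate-wise, or (ii) a covariance condition of the form $\operatorname{Cov}(|g_i|,\rho_i)\ge 0$ over coordinates, neither of which is available in the paper's framework. Similarly, your stochastic-dominance / Jensen step is the right kind of tool, but a strict inequality in expectations of reciprocals would require more than $\mathbb{E}[\langle\mathbf{n},\mathbf{d_n}\rangle]>\mathbb{E}[\langle\mathbf{n},\mathbf{d}_{\mathrm{rand}}\rangle]$ alone; conditioning on $\langle\mathbf{n},\mathbf{d}\rangle>0$ changes both means and requires control of the conditional distributions. In short, your diagnosis of the gaps is more precise than the paper's own proof, but your sketched repairs assume strengthened hypotheses that the paper does not provide.
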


\begin{proof}
We consider a local linear approximation of the decision boundary near the clean input $\mathbf{x}$. Let the decision boundary be represented by the hyperplane:
\begin{equation}
    \mathcal{B} = \{\mathbf{z} \in \mathbb{R}^d \mid \mathbf{w}^\top \mathbf{z} + b = 0\},
\end{equation}
where $\mathbf{w}$ is the normal vector to the boundary (parallel to the true gradient $\nabla_{\mathbf{x}}\mathcal{L}$) and $b$ is the bias. The normalized gradient direction is $\mathbf{u} = \mathbf{w} / \|\mathbf{w}\|_2$.

Let $R$ be the orthogonal (shortest) distance from $\mathbf{x}$ to the boundary $\mathcal{B}$. For any search direction $\mathbf{d}$ (where $\|\mathbf{d}\|_2=1$), the distance $r(\mathbf{d})$ required to reach the boundary along $\mathbf{d}$ is determined by the intersection:
\begin{equation}
    \mathbf{w}^\top (\mathbf{x} + r(\mathbf{d})\mathbf{d}) + b = 0.
\end{equation}
Solving for $r(\mathbf{d})$, and noting that the orthogonal distance is $R = | \mathbf{w}^\top \mathbf{x} + b | / \|\mathbf{w}\|_2$, we obtain the geometric relationship:
\begin{equation}\label{eq:dist_geom}
    r(\mathbf{d}) = \frac{R}{\langle \mathbf{d}, \mathbf{u} \rangle},
\end{equation}
where $\langle \mathbf{d}, \mathbf{u} \rangle$ is the cosine similarity (alignment) between the search direction and the gradient.

From Eq.~(\ref{eq:dist_geom}), it is evident that for a fixed $R$, $r(\mathbf{d})$ is inversely proportional to the alignment $\langle \mathbf{d}, \mathbf{u} \rangle$. Thus, $r(\mathbf{d})$ is monotonically decreasing with respect to $\langle \mathbf{d}, \mathbf{u} \rangle$ for $\langle \mathbf{d}, \mathbf{u} \rangle > 0$.

Combining this with \textbf{Theorem~\ref{theo:ddm}}:
\begin{itemize}
    \item For our method: $\mathbb{E}[\langle \mathbf{d_n}, \mathbf{u} \rangle] > 0$.
    \item For random baselines: $\mathbb{E}[\langle \mathbf{d}_{\text{rand}}, \mathbf{u} \rangle] = 0$ (implying the search direction is orthogonal to the gradient on average, leading to a significantly larger expected boundary distance
under the linear approximation).
\end{itemize}
Therefore, the expected boundary distance for our initialization is smaller in expectation
under the local linear model: $\mathbb{E}[r(\mathbf{d_n})] < \mathbb{E}[r(\mathbf{d}_{\text{rand}})]$.
\end{proof}

\section{Evolution of Intermediate AEs}\label{app:randkchange}
We analyze the variations in perturbation magnitude $r$ and curvature $\kappa$ of intermediate AEs as the query increases. The experimental results of $1,000$ randomly sampled ImageNet validation images are drawn in Fig. \ref{fig:r}. It is easy to find that our method reduces $r$ much faster than ADBA all the time (consistently smaller mean values and standard deviation) on average, making our method find AEs under a given perturbation budget with fewer queries, leading to a higher ASR under the same query limit. As observed in Fig.~\ref{fig:r}(b), AEs crafted by our DPAttack locate at a higher average curvature region during the query process, leading to much faster reduction in $r$ especially at the early attack stage.

\begin{figure}[t!]
\centering
  \includegraphics[width=0.35\textwidth]{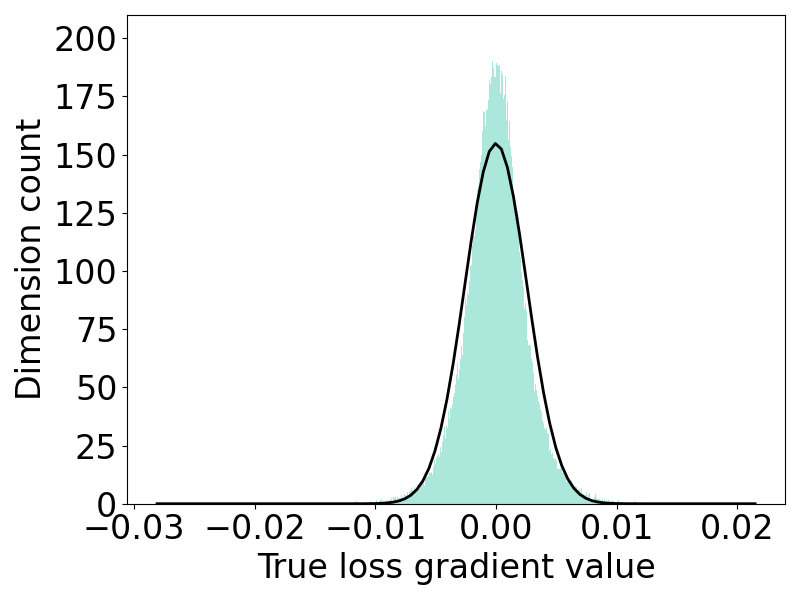}
  \caption{Distribution of the normalized CE loss gradient, $\nabla\mathcal{L}(\mathbf{x},y)/\|\nabla\mathcal{L}(\mathbf{x},y)\|_2$, averaged over 1,000 randomly sampled images from the ImageNet validation set. The distribution closely approximates a Gaussian.}
  \label{fig:distruegrad}
\end{figure}

\begin{figure}
\centering
  \includegraphics[width=\linewidth]{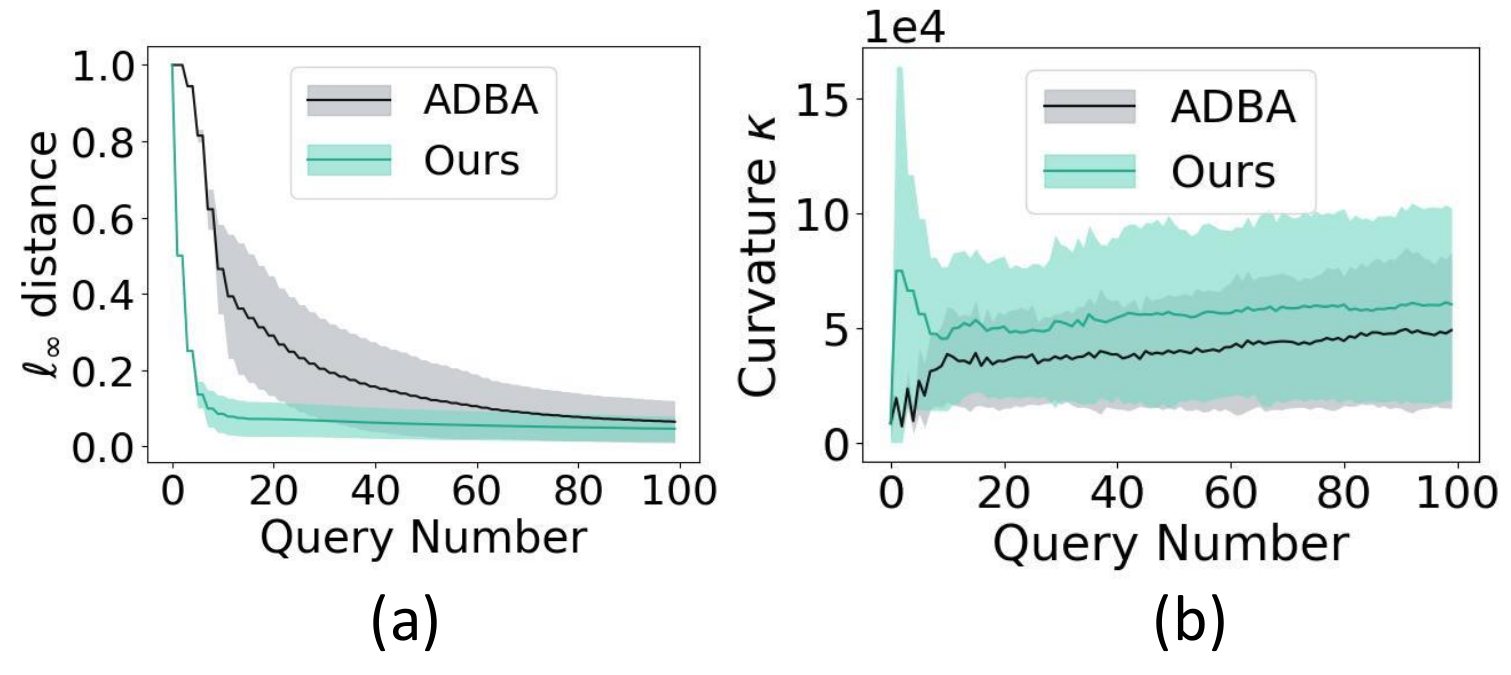}
  \caption{The comparative plot illustrates the variation in (a) the perturbation magnitude $r$ and (b) the curvature $\kappa$ as the number of queries increases.}
  % \Description{Enjoying the baseball game from the third-base
  % seats. Ichiro Suzuki preparing to bat.}
  \label{fig:r}
\end{figure}

\begin{algorithm}[t!]
\small
\caption{Dynamic Blocksize Selection (\textbf{DBS}) Process}
\label{alg:algorithmDBS}
\textbf{Input}: Original image $\mathbf{x}$, ground-truth label $y$, victim model $\mathcal{F}$, candidate block size set $\mathbb{W}$.\\
\textbf{Output}: Selected block size $w$, initial direction $\mathbf{d}^w_\mathbf{n}$, and current minimal decision boundary distance $r$.
\begin{algorithmic}[1]
\STATE Initialize candidate directions $\mathbf{D} \gets \{\mathbf{d}^w_\mathbf{n} \mid w \in \mathbb{W}\}$ where $\mathbf{d}^w_\mathbf{n}$ is the initialization direction for block size $w$.
\STATE Initialize current candidate set $\mathcal{W}_{curr} \gets \mathbb{W}$.
\STATE Initialize binary search bounds $l \gets 0, h \gets 1$.
\STATE Query count $Q \gets 0$, iteration $k \gets 0$.

\WHILE{$Q < Q_\text{max}$ \textbf{and} $k < k_\text{max}$}
    \STATE $r \gets (l + h) / 2$
    \STATE Initialize successful set $\mathcal{S} \gets \emptyset$.
    \FOR{each block size $w \in \mathcal{W}_{curr}$}
        \STATE Generate adversarial candidate: \\$\mathbf{x}' \gets \operatorname{clip}(\mathbf{x} + r \cdot \mathbf{d}^w_\mathbf{n}, [0, 1])$.
        \STATE Query victim model: $y'\gets \mathcal{F}(\mathbf{x}')$.
        \STATE $Q \gets Q + 1$.
        \IF{$y' \neq y$} 
            \STATE $\mathcal{S} \gets \mathcal{S} \cup \{w\}$
        \ENDIF
    \ENDFOR
    \IF{$\mathcal{S} = \emptyset$}
        \STATE $l \gets r$ \COMMENT{All candidates failed, increase magnitude}
    \ELSE
        \STATE $h \gets r$ \COMMENT{At least one succeeded, decrease magnitude}
        \STATE $\mathcal{W}_{curr} \gets \mathcal{S}$ \COMMENT{Prune unsuccessful block sizes}
    \ENDIF
    \STATE $k \gets k + 1$
\ENDWHILE

\STATE Select optimal block size $w^*$ uniformly at random from $\mathcal{W}_{curr}$.
\RETURN $w^*, \mathbf{d}^{w^*}_\mathbf{n}$, h.
\end{algorithmic}
\end{algorithm}

\section{The Curvature Estimation Method}\label{app:anainitd}
The curvature at a point on the decision boundary in high-dimensional space quantifies how sharply the boundary bends locally. Due to the computational complexity of calculating the curvature directly, in practice, the curvature can be approximated using the dominant eigenvalue $\lambda_{max}$ of the Hessian matrix $\mathbf{H}=[\frac{\partial^2\mathcal{L}(\mathbf{x},y)}{\partial\mathbf{x}[i]\partial\mathbf{x}[j]}]_{i,j=1}^{d}$ of the loss function $\mathcal{L}$ with respect to the input $\mathbf{x}$. The estimated curvature $\kappa$ is obtained via:
\begin{equation}\label{eq:r}
    \begin{gathered}
        \kappa\approx\lambda_{max}(\mathbf{H}).
    \end{gathered}
\end{equation}
In our implementation, we make use of the library PyHessian \cite{yao2020pyhessian}.

\section{The DBS Algorithm}\label{sec:DBS}

Since the block size in BDCT plays a critical role in the efficacy of the generated perturbation direction $\mathbf{d_n}$, we propose the Dynamic Blocksize Selection (DBS) algorithm. DBS efficiently identifies the optimal direction that minimizes the initial distance to the decision boundary for each specific image and victim model, providing a superior starting point for subsequent optimization. The key innovation of this algorithm is its parallel evaluation strategy: rather than performing a costly binary search for each candidate to find its exact boundary distance, it assesses all candidates simultaneously at a fixed perturbation magnitude (Lines 6--15 in Algorithm~\ref{alg:algorithmDBS}). This design drastically reduces the query overhead. To prevent excessive query consumption in edge cases where candidates exhibit nearly identical boundary distances, we stipulate a maximum comparison threshold $k_\text{max}$. If multiple candidates remain indistinguishable after $k_\text{max}$ iterations, a final direction is selected via uniform random sampling from the remaining candidate set. The complete procedure is detailed in Algorithm~\ref{alg:algorithmDBS}.

\section{The BiLiSearch Algorithm}\label{app:algo}
Conventional methods typically estimate decision boundary distances via independent binary searches for each direction. To achieve a standard precision of $10^{-3}$, such approaches require approximately $20$ queries per direction~\cite{10.1145/3394486.3403225}, which is prohibitively expensive under strict query budgets. We propose BiliSearch (Algorithm~\ref{alg:bilisearch}) to minimize this overhead through two key strategies: simultaneous evaluation and adaptive search logic.

The core of BiliSearch is a simultaneous comparison strategy that refines perturbation distances from coarse to fine granularity. Instead of independent estimations, BiliSearch evaluates two candidate directions at the same magnitude in each iteration. This allows for early termination: if one direction fails to yield an adversarial example at a given distance, the comparison can be pruned immediately without further refinement. Furthermore, BiLiSearch optimizes the search step based on the current state of the perturbation. Since the optimal distance $r$ is already significantly minimized during the DBS phase, relying solely on aggressive bisection often leads to frequent failures and redundant queries. To address this, we incorporate a line search mechanism that performs conservative, incremental reductions on $r$. This approach ensures a higher success rate and more efficient query utilization compared to standard bisection. As a result, BiliSearch reduces the average number of queries per comparison from $20$ to approximately $13$.

\section{Proof of Theorem~\ref{thm:main_text}}\label{sec:app_theo4}

\subsection{Setup and Notation}
Let $\mathbf{u} \in \{\pm 1\}^d$ denote the sign of the true gradient (i.e., $\mathbf{u} = \operatorname{sgn}(\nabla_{\mathbf{x}}\mathcal{L}(\mathbf{x}))$), and let $\mathbf{d} \in \{\pm 1\}^d$ denote the current perturbation direction.
For any index subset $S \subseteq [d]$, we define the \textit{signed correlation} $C_{\mathbf{d}}(S)$ and the \textit{normalized gain} $\Delta(S \mid \mathbf{d})$ as:
\begin{equation}\label{eq:theo4_def}
    \begin{gathered}
    C_{\mathbf{d}}(S) := \sum_{i \in S} \mathbf{u}_i \mathbf{d}_i, \quad
    \Delta(S \mid \mathbf{d}) := \frac{|C_{\mathbf{d}}(S)|}{d}.
    \end{gathered}
\end{equation}
The agreement metric (Hamming similarity) is defined as:
\begin{equation}\label{eq:agreement_def}
    A(\mathbf{d}, \mathbf{u}) := \frac{1}{d}\sum_{i=1}^d \mathbb{I}\{\mathbf{d}_i = \mathbf{u}_i\} = \frac{1}{2}\left(1 + \frac{1}{d}\langle \mathbf{d}, \mathbf{u} \rangle\right).
\end{equation}
Here, $\mathbb{I}$ is the indicator function. Flipping the signs of $\mathbf{d}$ on subset $S$ (i.e. $\mathbf{d}_i\leftarrow-\mathbf{d}_i,i\in S$) increases the agreement by exactly $\Delta(S \mid \mathbf{d})$ if and only if $C_{\mathbf{d}}(S) < 0$. Thus, identifying subsets with large $|C_{\mathbf{d}}(S)|$ is key to rapid optimization. A large magnitude implies high informative value: if positive, the current alignment is preserved; if negative, flipping the signs yields an immediate and significant improvement.

\textbf{Tree Definitions.} Let $\mathbf{d}_0$ be the initialization direction.
\begin{itemize}
    \item $\mathsf{T}_{\mathrm{pat}}$ ({Pattern-driven tree}): The leaves are the \textit{runs} (maximal constant-sign contiguous segments) of $\mathbf{d}_0$. Internal nodes are unions of adjacent runs. Splits occur \emph{only} at run boundaries.
    \item $\mathsf{T}_{\mathrm{dyad}}$ ({Dyadic tree}): A standard binary tree where splits occur at spatial midpoints, independent of $\mathbf{d}_0$.
\end{itemize}

\begin{algorithm}[t!]
\small
\caption{\textbf{BiLiSearch} Process}
\label{alg:bilisearch}
\textbf{Input}: Target classifier $\mathcal{F}$, clean RGB image $\mathbf{x}$, initiated directions $\mathbf{d}_\mathbf{n}, \mathbf{d}_\mathbf{a}$, GT label $y$, current minimal decision boundary distance $r$. \\
\textbf{Output}: Initialization  direction $\mathbf{d}_0$ and magnitude $r_0$.

\begin{algorithmic}[1]
%\STATE \textbf{Initialization:}
\STATE Query budget $Q \gets 16$, line search steps $t \gets 5$.
\STATE Initialize bounds: $h_1, h_2 \gets r$; $l_1, l_2 \gets 0$.
\STATE Initialize magnitudes: $r_1 \gets r/2$, $r_2 \gets r/2$.
\STATE Initialize update modes: $m_1 \gets \text{Bisect}$, $m_2 \gets \text{Bisect}$.

\WHILE{$Q > 0$}
    \STATE Generate candidates: 
    \STATE \quad $\mathbf{x}_1 \gets \operatorname{clip}(\mathbf{x} + r_1 \cdot \mathbf{d}_\mathbf{n}, [0, 1])$
    \STATE \quad $\mathbf{x}_2 \gets \operatorname{clip}(\mathbf{x} + r_2 \cdot \mathbf{d}_\mathbf{a}, [0, 1])$
    \STATE Query model: $y'_1 \gets \mathcal{F}(\mathbf{x}_1)$, $y'_2 \gets \mathcal{F}(\mathbf{x}_2)$.
    
    \STATE \textit{// Update for direction } $\mathbf{d}_\mathbf{n}$
    \IF{$y'_1 \neq y$ \textbf{and} $m_1 = \text{Bisect}$}
        \STATE $l_1 \gets r_1$
        \STATE $r_1 \gets (h_1 + l_1)/2$ \COMMENT{Bisection update}
    \ELSE
        \STATE $m_1 \gets \text{Line}$
        \STATE $r_1 \gets h_1 - (h_1 - l_1)/t$ \COMMENT{Line update}
    \ENDIF

    \STATE \textit{// Update for direction } $\mathbf{d}_\mathbf{a}$
    \IF{$y'_2 \neq y$ \textbf{and} $m_2 = \text{Bisect}$}
        \STATE $l_2 \gets r_2$
        \STATE $r_2 \gets (h_2 + l_2)/2$ \COMMENT{Bisection update}
    \ELSE
        \STATE $m_2 \gets \text{Line}$
        \STATE $r_2 \gets h_2 - (h_2 - l_2)/t$ \COMMENT{Line update}
    \ENDIF
    \IF{($m_1 = \text{Line}$ \textbf{and} $y'_1 = y$) \textbf{or} ($m_2 = \text{Line}$ \textbf{and} $y'_2 = y$)}
        \STATE \textbf{break}
    \ENDIF

    \STATE $Q \gets Q - 2$
\ENDWHILE

\STATE \textit{// Select the direction with smaller perturbation}
\IF{$r_1 < r_2$}
    \RETURN $\mathbf{d}_0 \gets \mathbf{d}_\mathbf{n}, r_0 \gets r_1$
\ELSE
    \RETURN $\mathbf{d}_0 \gets \mathbf{d}_\mathbf{a}, r_0 \gets r_2$
\ENDIF
\end{algorithmic}
\end{algorithm}

\subsection{Assumptions and Lemmas}
\begin{assumption}[Block Sign-Coherence of True Gradient]\label{asmp:block}
The true gradient sign $\mathbf{u}$ exhibits a block structure. There exists a partition of $[d]$ into disjoint contiguous blocks $\mathcal{B}^* = \{B_1, \dots, B_K\}$ such that for each $B_k$, the sign is constant: $\mathbf{u}_i = \sigma_k \in \{\pm 1\}$ for all $i \in B_k$.
\end{assumption} 

\noindent\textit{Remark.} As evidenced by Fig.~\ref{fig:gradcossim}(a), the true gradient sign $\mathbf{u}$ exhibits distinct block-wise coherence over contiguous neighborhoods. Therefore, we regard this assumption as a data-supported condition rather than a theoretical idealization.

\begin{assumption}[Weak Informative Initialization]\label{asmp:purity}
The initialization $\mathbf{d}_0$ possesses a weak but positive expected alignment with the true gradient sign $\mathbf{u}$ within spatially coherent regions. Specifically, there exists a bias constant $\delta \in (0, 1/2]$ such that for any run $R$ of $\mathbf{d}_0$ fully contained within a coherent block of $\mathbf{u}$ (i.e., a valid run), the expected element-wise agreement satisfies:
\begin{equation}\label{eq:theo4_purity}
    \mathbb{E}\left[ \frac{1}{|R|} \sum_{i \in R} \mathbb{I}\{\mathbf{d}_{0,i} = \mathbf{u}_i\} \right] = \frac{1}{2} + \delta.
\end{equation}
This condition implies that $\mathbf{d}_0$ contains a statistically significant signal component, distinguishing it from pure random guessing (where $\delta=0$).
\end{assumption}

\noindent\textit{Remark.} 
This assumption abstracts the requirement for a ``warm start'' into two distinct properties:
(i) \textbf{Structural Coherence:} $\mathbf{d}_0$ possesses finite spatial correlation to form meaningful runs (ensuring the search space is composed of atomic units rather than fragmented noise); and
(ii) \textbf{Statistical Bias:} The initialization distribution be positively skewed towards $\mathbf{u}$.
Crucially, Eq.~\eqref{eq:theo4_purity} defines an expectation over the random sampling of $\mathbf{d}_0$. It does \textit{not} require every individual run in a single realization to be correctly aligned; it suffices that valid runs carry a positive predictive signal \textit{on average}.
Our proposed pattern-injected initialization naturally satisfies these conditions: the existence of the positive bias $\delta > 0$ is theoretically supported by Theorem~\ref{theo:ddm} (Frequency Sensitivity Alignment) and empirically evidenced by the positive shift in initial cosine similarity in Fig.~\ref{fig:gradcossim}(b).

We first establish a fundamental property of the signed correlation under partitioning,
which will be used to compare the effective gains offered by different tree structures.

\renewcommand{\thelemma}{5}
\begin{lemma}[Gain Cancellation Principle]\label{lem:gain}
For any contiguous set $S$, let $\{P_j\}$ be any partition of $S$ (i.e., $S = \bigcup_j P_j$, $P_j \cap P_k = \emptyset$). By the Triangle Inequality:
\begin{equation}\label{eq:triangle}
    |C_{\mathbf{d}_0}(S)| = \left| \sum_j C_{\mathbf{d}_0}(P_j) \right| \le \sum_j |C_{\mathbf{d}_0}(P_j)|.
\end{equation}
The inequality is strict (cancellation occurs) if the signs of $C_{\mathbf{d}_0}(P_j)$ are mixed. Specifically, if $S$ straddles a boundary where the correlation sign changes, the magnitude of the total correlation on $S$ is less than the sum of the magnitudes of its parts.
\end{lemma}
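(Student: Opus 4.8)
\noindent\textit{Proof plan.}
The statement is, at bottom, a one-line consequence of the additivity of the signed correlation $C_{\mathbf{d}_0}(\cdot)$ combined with the triangle inequality, and the only genuine content is making the strictness condition precise; I would organize the argument in three short steps. First, I would record that $C_{\mathbf{d}_0}(\cdot)$ is additive over disjoint index sets: since each summand $\mathbf{u}_i\mathbf{d}_{0,i}$ in the definition $C_{\mathbf{d}_0}(S)=\sum_{i\in S}\mathbf{u}_i\mathbf{d}_{0,i}$ depends only on the single coordinate $i$, and $\{P_j\}_j$ partitions the contiguous set $S$, one has $C_{\mathbf{d}_0}(S)=\sum_j C_{\mathbf{d}_0}(P_j)$; applying the triangle inequality $|\sum_j a_j|\le\sum_j|a_j|$ to the reals $a_j=C_{\mathbf{d}_0}(P_j)$ then yields Eq.~(\ref{eq:triangle}) directly.

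Second, for the strictness claim I would invoke the classical equality condition for the triangle inequality over $\mathbb{R}$: $|\sum_j a_j|=\sum_j|a_j|$ holds if and only if all nonzero $a_j$ carry a common sign. Hence, if the partial correlations $C_{\mathbf{d}_0}(P_j)$ have mixed signs---equivalently, there exist $j_1,j_2$ with $C_{\mathbf{d}_0}(P_{j_1})>0>C_{\mathbf{d}_0}(P_{j_2})$---the inequality in Eq.~(\ref{eq:triangle}) is strict. It is worth noting here that because $\mathbf{u}_i,\mathbf{d}_{0,i}\in\{\pm1\}$, each $C_{\mathbf{d}_0}(P_j)$ is an integer of parity $|P_j|$, so ``mixed signs'' only concerns the nonzero partial correlations and zero-valued parts are harmless.

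Third, I would specialize to the geometric picture in the lemma: when $S=P_1\cup P_2$ is cut at a point where the agreement pattern $\mathbf{u}_i\mathbf{d}_{0,i}$ reverses, so that $C_{\mathbf{d}_0}(P_1)>0$ and $C_{\mathbf{d}_0}(P_2)<0$, Step~2 gives $|C_{\mathbf{d}_0}(S)|<|C_{\mathbf{d}_0}(P_1)|+|C_{\mathbf{d}_0}(P_2)|$, i.e. the normalized gain $\Delta(S\mid\mathbf{d}_0)$ recoverable on the merged segment strictly underestimates $\sum_j\Delta(P_j\mid\mathbf{d}_0)$. I do not anticipate any real obstacle here; the only care required is to phrase ``$S$ straddles a boundary where the correlation sign changes'' in terms of the signs of the $C_{\mathbf{d}_0}(P_j)$ rather than of individual coordinates $\mathbf{u}_i\mathbf{d}_{0,i}$, so that the lemma plugs cleanly into the dyadic-versus-pattern tree comparison used in the proofs of Theorems~\ref{thm:main_text} and~\ref{theo:complexity-fixed}, where blind dyadic cuts generically induce exactly such sign-mixed partitions while run-aligned pattern cuts do not.
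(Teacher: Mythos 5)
Your proof is correct and matches the paper's approach exactly: the paper's entire proof of Lemma~\ref{lem:gain} is the single phrase ``Triangle Inequality,'' and your three steps simply spell out the additivity $C_{\mathbf{d}_0}(S)=\sum_j C_{\mathbf{d}_0}(P_j)$, the inequality, and the standard equality condition that justifies the strictness clause. Your added care about parity and about phrasing ``mixed signs'' at the level of the $C_{\mathbf{d}_0}(P_j)$ rather than of individual coordinates is a sensible elaboration but not a different route.
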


\begin{proof} Triangle Inequality.\end{proof}

Leveraging the gain cancellation principle above, we now compare the quality
of candidate nodes made available by the two tree constructions.

\renewcommand{\thelemma}{6}
\begin{lemma}[Availability Dominance]\label{lem:depth}
Let $\mathcal{C}_{\mathrm{pat}}$ be the set of nodes (runs) available at the leaves of $\mathsf{T}_{\mathrm{pat}}$. For any dyadic node $D \in \mathsf{T}_{\mathrm{dyad}}$, the gain is no larger than the aggregate gain of its pattern-aligned components.
Specifically,
\begin{equation}
    |C_{\mathbf{d}_0}(D)| \le \sum_{R \in \text{Runs}(D)} |C_{\mathbf{d}_0}(D \cap R)|,
\end{equation}
where $\text{Runs}(D)$ is the set of runs of $\mathbf{d}_0$ that intersect $D$.
\end{lemma}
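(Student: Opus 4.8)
The plan is to reduce Lemma~\ref{lem:depth} to the Gain Cancellation Principle (Lemma~\ref{lem:gain}) applied to one canonically chosen partition of the dyadic node. First I would record that every dyadic node $D \in \mathsf{T}_{\mathrm{dyad}}$ is, by construction, a contiguous index interval $D = \{a, a+1, \dots, b\} \subseteq [d]$, and that the runs of $\mathbf{d}_0$ — the maximal constant‑sign contiguous segments — form a partition of the whole index set $[d]$ into intervals. Intersecting this global partition with $D$ yields the collection $\{D \cap R : R \in \text{Runs}(D)\}$, whose members are (i) pairwise disjoint, since the runs are disjoint; (ii) each an interval, being an intersection of two intervals; (iii) nonempty exactly for the runs in $\text{Runs}(D)$, those meeting $D$; and (iv) exhaustive of $D$, since $\bigcup_R R = [d] \supseteq D$. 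Hence $\{D \cap R\}_{R \in \text{Runs}(D)}$ is a genuine partition of $D$ into contiguous pieces.

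Second, I would invoke the additivity of the signed correlation functional $C_{\mathbf{d}_0}(S) = \sum_{i \in S} \mathbf{u}_i \mathbf{d}_{0,i}$ over disjoint unions, which gives $C_{\mathbf{d}_0}(D) = \sum_{R \in \text{Runs}(D)} C_{\mathbf{d}_0}(D \cap R)$ directly from the partition above. Applying the triangle inequality — that is, Lemma~\ref{lem:gain} with $S = D$ and $\{P_j\} = \{D \cap R\}_{R \in \text{Runs}(D)}$ — then yields $|C_{\mathbf{d}_0}(D)| \le \sum_{R \in \text{Runs}(D)} |C_{\mathbf{d}_0}(D \cap R)|$, which is exactly the claim. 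The strictness clause of Lemma~\ref{lem:gain} records that this inequality is strict whenever $D$ straddles a run boundary across which the partial correlations $C_{\mathbf{d}_0}(D \cap R)$ change sign, i.e., whenever the dyadic grid is genuinely misaligned with the pattern of $\mathbf{d}_0$; this is the quantitative content that later feeds the per‑query dominance argument of Theorem~\ref{thm:main_text}.

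The argument is essentially self‑contained, so there is no real ``hard part''; the only point requiring mild care is the edge case in which a dyadic boundary cuts through the interior of a run, so that $D \cap R$ is a proper sub‑interval of $R$ rather than a leaf of $\mathsf{T}_{\mathrm{pat}}$. I would note that this does not affect the statement, since the right‑hand side is phrased in terms of the pieces $D \cap R$ and not the full runs $R$; it is nonetheless worth remarking that $\mathbf{d}_0$ is sign‑constant on each run, so $C_{\mathbf{d}_0}(D \cap R) = \pm \sum_{i \in D \cap R} \mathbf{u}_i$, and $|C_{\mathbf{d}_0}(D \cap R)|$ is precisely the magnitude of unopposed signal the pattern‑driven search can harvest from that atomic unit. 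If one prefers a bound phrased purely over leaves of $\mathsf{T}_{\mathrm{pat}}$, a second triangle inequality $|C_{\mathbf{d}_0}(D \cap R)| \le |C_{\mathbf{d}_0}(R)| + |C_{\mathbf{d}_0}(R \setminus D)|$ disposes of the at most two boundary runs, but this refinement is not needed for Lemma~\ref{lem:depth} itself and I would relegate it to a remark.
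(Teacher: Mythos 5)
Your argument is correct and follows the same route as the paper's proof: partition the dyadic interval $D$ by intersecting it with the runs of $\mathbf{d}_0$, use additivity of $C_{\mathbf{d}_0}$ over the disjoint pieces, and close with Lemma~\ref{lem:gain} applied to $\{P_j\}=\{D\cap R\}_{R\in\text{Runs}(D)}$. Your explicit verification that this collection is a genuine partition of $D$, and your remark on boundary runs being proper sub-intervals of pattern leaves, are welcome clarifications but do not change the essential approach.
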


\begin{proof}
Let $D$ be an arbitrary node in the dyadic tree. The interval $D$ can be partitioned into disjoint segments based on the runs of $\mathbf{d}_0$: $D = \bigcup_j (D \cap R_j)$. The signed correlation on $D$ is the sum of correlations on these segments.
By Lemma \ref{lem:gain} (taking $P_j = D \cap R_j$):
\[
|C_{\mathbf{d}_0}(D)| \le \sum_j |C_{\mathbf{d}_0}(D \cap R_j)|.
\]
In $\mathsf{T}_{\mathrm{pat}}$, the leaves are the full runs $R_j$. The algorithm selects nodes based on normalized gain $\Delta(S \mid \mathbf{d})$. If $D$ straddles a boundary between runs of opposite signs (which happens frequently in $\mathsf{T}_{\mathrm{dyad}}$ because it ignores $\mathbf{d}_0$'s structure), terms in the sum cancel out, strictly reducing the gain $|C_{\mathbf{d}_0}(D)|$. In contrast, the nodes $R_j$ in $\mathsf{T}_{\mathrm{pat}}$ are defined by the constant-sign regions of $\mathbf{d}_0$. By construction, they suffer zero cancellation from initialization sign changes.
While a run $R_j$ may still span a boundary of the true gradient $\mathbf{u}$, it is structurally superior to $D$, as the latter is susceptible to gain cancellation arising from \textit{both} $\mathbf{d}_0$ and $\mathbf{u}$. Thus, the search space of $\mathsf{T}_{\mathrm{pat}}$ offers candidates with higher non-cancelled gain potential than arbitrary dyadic intervals.
\end{proof}

\begin{table*}[t!]
\footnotesize
\tabcolsep=0.03cm
  \centering
 \caption{The untargeted attack performance on CIFAR-10 (VGG-16-BN) under $\ell_\infty=0.05$ constraint.} 
 \scalebox{0.8}{
 \begin{tabular}{c|cccccccc|cccccccc|cccccccc}
    \Xhline{0.8pt}
    Metrics & \multicolumn{8}{c|}{ASR}                      & \multicolumn{8}{c|}{Avg.Q}                    & \multicolumn{8}{c}{Med.Q} \\
    \hline
    Max$Q$  & HSJA  & BouR  & BouT &TtBA & HRayS  & ADBA  & \textbf{Ours$_\text{opt}$}& \textbf{Ours$_\text{dyn}$}  & HSJA  & BouR  & BouT &TtBA & HRayS  & ADBA  & \textbf{Ours$_\text{opt}$}& \textbf{Ours$_\text{dyn}$}  & HSJA  & BouR  & BouT &TtBA & HRayS  & ADBA  & \textbf{Ours$_\text{opt}$}& \textbf{Ours$_\text{dyn}$} \\
    \hline
    5     & 0.0   & 0.0   & 0.5 & 0.0 & \underline{1.5}   & 0.0   & \textbf{4.0}&0.0 &    -   &  -     &               5  &  - &            5  &     -  &               5  &-&     -  &     -  &               5 & -&               5  &    -   &              5 &- \\
   
    10    & 0.0   & 0.0   & \underline{2.5}  &0.0 & 1.5   & 0.0   & \textbf{14.5}  & \underline{2.5}&   -    &    -   &               7  &   -  &       10  &    -   &               6  & 9&   -   &    -   &               7 &- &             10  &   -    &              6&6  \\
   
    20    & 2.5   & 2.5   & 2.5 & 0.0 & {3.5}   & 2.0   & \underline{14.5} & \textbf{18.5} &             14  &             14  &               7  &  - &          19  &             17  &               6  & 14&            14  &             14  &               7  &-&             20  &            17  &              6 &14 \\
   
    50    & \underline{24.5}  & \underline{24.5}  & 14.0  &0.0& 8.0   & 13.0  & \textbf{30.0} & \textbf{30.0} &             44  &             44  &             33 & -&             39  &             35  &             25 &24 &             48  &             48  &             40  &    -   &      47  &            37  &            24  &18\\
   
    80    & 32.0  & {33.0}  & 25.0 & 4.0& 11.0  & 24.5  & \textbf{45.5}& \underline{41.0} &             45  &             47  &             49  &   71  &        49  &             47  &             39 &             35  &             49  &             50  &             41  & 71  &          49  &            49  &            37 &29 \\
   
    100   & 32.5  & {37.5}  & 22.5 &5.5 & 18.0  & 31.5  & \underline{49.5}& \textbf{50.5} &             45  &             47  &             44  &  73    &       69  &             57  &             43 &45 &             49  &             50  &             41  &   &          68  &            55  &            40&39  \\
   
    150   & 41.5  & 40.5  & 16.0  &8.5& 26.5  & {51.0}  & \underline{62.0}& \textbf{64.0}  &             82  &             81  &             64  &91 &            93  &             83  &             61 &61 &             89  &             89  &             79  &    72      &   98  &            85  &            52&56  \\
   
    200   & 44.0  & 41.5  & 27.0 & 9.0& 37.0  & {66.0}  & \textbf{76.0} & \underline{73.0} &             81  &             84  &           115  & 106  &        121  &           104  &             81 &74 &             89  &             89  &             82  &  107&         122  &          104  &            66 &65 \\
   
    250   & 45.0  & 45.0  & 27.0 & 15.5& 45.5  & {75.0}  & \underline{80.0} &\textbf{80.5}&             92  &             85  &             99  & 147  &        143  &           118  &             88 &             88  &             89  &             90  &             80  &141 &           150  &          117  &            69  &74\\
   
    300   & 45.5  & 45.0  & 28.0  &21.0 &57.0  &{79.0}  & \underline{82.5} &\textbf{85.5}&             89  &             85  &           106  & 185 &         173  &           125  &           94&           99  &             90  &             90  &             80  &214 &          179  &          120  &            72&82  \\
   
    400   & 42.0  & 42.0  & 29.0 & 32.5& 71.5  & {86.0}  & \underline{87.5}  & \textbf{90.5} &             87  &             84  &           120  &  259  &       212  &           144  &           109 &112 &             89  &             89  &             82  &298 &          209  &          127  &            79&87  \\
   
    500   & 44.0  & 43.0  & 29.5& 40.0 & 76.5  & {91.0}  & \textbf{92.5}& \underline{91.5}  &             83  &             86  &           115& 293 &           227  &           161  &           127 &116 &             89  &             90  &             81  &299&           217  &          132  &            83&87  \\
   
    1k  & 47.0  & 49.5  & 35.0 & 64.0& 91.0  & \underline{98.0} & \textbf{98.5}& \textbf{98.5}  &           137  &           133  &           239 &491 &           305  &           205  &           163&           161  &           140  &           139  &           133& 487 &           265  &          147  &          96  &          92 \\
   
    2k  & 48.5  & 48.0  & 37.0& 81.5 & 98.0  & \underline{99.5}  & \textbf{100.0}& {99.0} &           201  &           130  &           320 &657 &           387  &           220  &           178   &           166 &           140  &           139  &           132  &591&           288  &          147  &          107 &94 \\
   
    3k  & 52.0  & 47.5  & 40.0 &92.5 & \underline{99.5}  & \textbf{100.0}  & \textbf{100.0} & \textbf{100.0} &           180  &           136  &           346 & 873&           393  &           229  &           171 &188 &           139  &           139  &           134 & 698&           290  &          147  &          104 &98 \\
   
    \Xhline{0.8pt}
    \end{tabular}%
    }
  \label{tab:cifarvgg}%
\end{table*}%

\subsection{Proof of Theorem 4}

With the above assumptions ensuring block-level signal coherence
(Assumptions~\ref{asmp:block}–\ref{asmp:purity})
and Lemmas~\ref{lem:gain}–\ref{lem:depth} characterizing the structural
advantage of the pattern-driven candidate pool, we now establish
the per-query alignment dominance.

\renewcommand{\thetheorem}{4}
\begin{tcolorbox}[colback=blue!5!white,colframe=black!75!black
,
boxsep=0pt,
  left=4pt,
  right=4pt,
  top=4pt,
  bottom=4pt,
  boxrule=0.7pt,
  arc=2pt
]
\begin{theorem}[Per-query Sign Alignment Dominance]\label{thm:main_text_app}
Let $\mathbf{u} = \operatorname{sgn}(\nabla_{\mathbf{x}}\mathcal{L})$ be the true gradient sign, and let
$\mathbf{d}_t\in\{\pm1\}^d$ be the perturbation direction after $t$ queries.
We measure sign alignment using the agreement
$A(\mathbf{d}_t,\mathbf{u})
:=\frac{1}{d}\sum_{i=1}^d \mathbb{I}\{\mathbf{d}_{t,i}=\mathbf{u}_i\}
$. Under the assumptions of local block coherence (Asmp.~\ref{asmp:block}) and a weakly positive initial correlation (Asmp.~\ref{asmp:purity}), the expected agreement $A_t$ after $t$ queries satisfies:
\begin{equation}
\mathbb{E}\big[A_t^{\mathrm{pat}}\big] \ge \mathbb{E}\big[A_t^{\mathrm{dyad}}\big], \quad \forall t \in \mathbb{N},
\end{equation}
where $\mathrm{pat}$ and $\mathrm{dyad}$ denote the pattern-driven strategy and the blind dyadic baseline, respectively.
\end{theorem}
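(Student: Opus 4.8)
The plan is to prove the per-query dominance by a coupling argument combined with an inductive comparison of the two search processes on a query-by-query basis. First I would set up a common probability space in which both the pattern-driven search on $\mathsf{T}_{\mathrm{pat}}$ and the dyadic search on $\mathsf{T}_{\mathrm{dyad}}$ are driven by the \emph{same} realization of the random initialization $\mathbf{d}_0$ and the same true gradient sign $\mathbf{u}$. Since both algorithms start from $\mathbf{d}_0$, we have $A_0^{\mathrm{pat}} = A_0^{\mathrm{dyad}}$ almost surely, which gives the base case $t=0$. The inductive hypothesis is that $\mathbb{E}[A_t^{\mathrm{pat}}] \ge \mathbb{E}[A_t^{\mathrm{dyad}}]$; I would in fact aim for the stronger statement that, under the coupling, there is a state-dominance invariant preserved by each query step.

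The core of the induction is a per-step gain comparison. Recall from Eq.~\eqref{eq:agreement_def} that flipping a candidate set $S$ with $C_{\mathbf{d}_t}(S) < 0$ raises the agreement by exactly $\Delta(S\mid\mathbf{d}_t) = |C_{\mathbf{d}_t}(S)|/d$, and otherwise leaves the best-so-far direction unchanged (because both RayS-style updates only accept a flip if it decreases the boundary distance, which by Lemma~\ref{lemmagdandgrad} corresponds to increasing the loss, hence to improving alignment in expectation). Thus after $t$ queries the agreement is $A_t = A_0 + \sum_{\text{accepted flips}} \Delta(\cdot)$, and bounding $\mathbb{E}[A_t^{\mathrm{pat}}]$ from below reduces to showing that at each node expansion the pattern-driven tree offers a candidate whose expected non-cancelled gain is at least as large as that of the corresponding dyadic candidate. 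This is precisely where Lemma~\ref{lem:depth} (Availability Dominance) enters: any dyadic node $D$ satisfies $|C_{\mathbf{d}_0}(D)| \le \sum_{R\in\mathrm{Runs}(D)}|C_{\mathbf{d}_0}(D\cap R)|$, so the best run-aligned candidate available to $\mathsf{T}_{\mathrm{pat}}$ has gain potential at least that of $D$ after accounting for the cancellation that $D$ suffers at both $\mathbf{d}_0$-sign changes and $\mathbf{u}$-block boundaries. Under Assumption~\ref{asmp:block} the runs of $\mathbf{d}_0$ interact with only $O(\gamma_k)$ block boundaries of $\mathbf{u}$, and under Assumption~\ref{asmp:purity} each valid run carries expected bias $\delta>0$, so $\mathbb{E}[\Delta^{\mathrm{pat}}_{\text{step}}] \ge \mathbb{E}[\Delta^{\mathrm{dyad}}_{\text{step}}]$ term by term. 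Summing over the (identically indexed) query steps and taking expectations yields $\mathbb{E}[A_t^{\mathrm{pat}}] \ge \mathbb{E}[A_t^{\mathrm{dyad}}]$, with equality only when $\mathbf{d}_0$ has no run structure (e.g.\ $\mathbf{d}_0 = \mathbf{1}$) or its run boundaries coincide with the dyadic grid.

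The main obstacle I anticipate is making the ``step-by-step'' coupling rigorous, because the two trees have different depths and different numbers of candidate nodes, so there is no canonical bijection between their query sequences. I would handle this by comparing the processes at matched \emph{granularity levels} rather than matched query indices: at each level $s$, both algorithms have refined $\mathbf{d}_{\text{best}}$ using a collection of candidate subsets, and I would show that the cumulative expected agreement gain of the pattern-driven collection dominates that of the dyadic collection at the same level, using that every dyadic interval at level $s$ is a union of pattern-driven atomic units at a comparable or finer level together with Lemma~\ref{lem:gain}. A secondary technical point is that the acceptance of a flip depends on the hard-label oracle, which only reveals whether the boundary distance decreased, not the true sign correlation; here I would invoke Assumption~\ref{ass1} and Lemma~\ref{lemmagdandgrad} to argue that in expectation an accepted flip corresponds to a genuine reduction of misaligned coordinates, so the expected-gain bookkeeping above is valid. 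Once these two points are settled, the inequality $\mathbb{E}[A_t^{\mathrm{pat}}] \ge \mathbb{E}[A_t^{\mathrm{dyad}}]$ follows for all $t$ by induction on the granularity level and monotonicity of the accumulated gains.
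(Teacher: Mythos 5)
Your proposal follows essentially the same route as the paper's proof: decompose the cumulative agreement as $A_t = A_0 + \sum_{j=1}^t G_j$, then establish per-step expected-gain dominance $\mathbb{E}[G_j^{\mathrm{pat}}] \ge \mathbb{E}[G_j^{\mathrm{dyad}}]$ by combining the gain-cancellation principle (Lemma~\ref{lem:gain}) with the availability-dominance lemma (Lemma~\ref{lem:depth}) under Assumptions~\ref{asmp:block} and~\ref{asmp:purity}. Where you go slightly beyond the paper is in flagging the state-divergence issue: the paper compares $G_j^{\mathrm{pat}}$ and $G_j^{\mathrm{dyad}}$ at the same query index $j$ without acknowledging that the two algorithms' internal states ($\mathbf{d}_{\text{best}}$, tree depth, candidate pool) separate after the first accepted flip, whereas you explicitly note that no canonical bijection between query sequences exists and propose matching at granularity levels instead---the more honest framing, though both versions stay at the level of a sketch there. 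Your explicit appeal to Lemma~\ref{lemmagdandgrad} to justify that the hard-label oracle's accept/reject decision tracks true alignment gain in expectation is also sound and makes explicit a step the paper only gestures at (``approximated by selecting the candidate that yields the maximum reduction in the distance to the decision boundary'').
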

\end{tcolorbox}

\begin{proof}
The proof relies on the greedy nature of the selection policy and the superior quality of the candidate pool available in the pattern tree.

\begin{enumerate}
    \item \textbf{Decomposition:} The agreement at step $t$ is the cumulative sum of gains: $A_t = A_0 + \sum_{j=1}^t G_j$, where $G_j$ is the agreement improvement at the $j$-th query. Since $A_0$ is identical for both, it suffices to show $\mathbb{E}[G_j^{\mathrm{pat}}] \ge \mathbb{E}[G_j^{\mathrm{dyad}}]$.

    \item \textbf{Candidate Selection Policy:} Both methods use the same greedy rule. They simply look at all currently available nodes and pick the one that offers the largest possible gain $\Delta(S \mid \mathbf{d})$. 
    In practice, this is approximated by selecting the candidate that yields the maximum reduction in the distance to the decision boundary.
    Therefore, the improvement at step $j$ is determined by the best candidate found in the tree at that moment.

    \item \textbf{Structural Advantage:} Consider the composition of the candidate nodes.
    \begin{itemize}[leftmargin=0.1cm]
        \item \textbf{Pattern Tree:} The candidates are the \textit{runs} of $\mathbf{d}_0$. By definition, a run is a maximal segment of constant sign. Thus, pattern nodes are ``atomic'' w.r.t. $\mathbf{d}_0$ and suffer \textit{zero internal cancellation} from initialization sign changes.
        \item \textbf{Dyadic Tree:} The candidates are arbitrary spatial intervals. By Lemma \ref{lem:depth}, a dyadic node $D$ is structurally the summation of the pattern runs it covers. If $D$ covers runs of opposing signs (e.g., a positive region adjacent to a negative region), these components mathematically cancel each other out, reducing the magnitude of the total correlation: $|C(D)| < \sum |C(R_i)|$.
    \end{itemize}

\item \textbf{Availability of High-Gain Candidates (Max-Pooling Argument):} 
This difference in candidate exposure is driven by the handling of sign conflicts.
Since the algorithm is greedy, it selects the candidate with the largest instantaneous gain.

In $\mathsf{T}_{\mathrm{dyad}}$, a node often covers a region spanning multiple underlying signal runs.
Consider a dyadic node covering two adjacent runs with gains $g_1$ and $g_2$. 
Crucially, by the definition of a run, adjacent runs must have \textit{opposite signs} (otherwise they would merge into a single run).
Thus, the effective gain available to the dyadic strategy is subject to subtraction: $G_{\mathrm{dyad}} = |g_1 - g_2|$.

In contrast, $\mathsf{T}_{\mathrm{pat}}$ explicitly decomposes the search space into sign-consistent atomic runs.
The same region is exposed as separate candidates $g_1$ and $g_2$, allowing the greedy rule to select $G_{\mathrm{pat}} = \max(|g_1|, |g_2|)$.

Mathematically, for any two non-negative magnitudes, the maximum is always greater than or equal to their absolute difference:
\[
\max(|g_1|,|g_2|) \ge \big| |g_1| - |g_2| \big|.
\]
Consequently, whenever sign cancellation occurs in a dyadic node, $\mathsf{T}_{\mathrm{pat}}$ offers a candidate with larger gain. In regions with no sign changes, both trees offer equivalent candidates.
Therefore, the maximum gain attainable at each step under $\mathsf{T}_{\mathrm{pat}}$ is dominant:
\[ \mathbb{E}[G_j^{\mathrm{pat}}] \ge \mathbb{E}[G_j^{\mathrm{dyad}}]. \]

%     \item \textbf{Probabilistic Dominance of Boundaries:}
%     Beyond structural considerations, Assumption~\ref{asmp:purity}
% ensures that the pattern boundaries are statistically informative
% about the true gradient structure.
%  Finally, we address the alignment with the true gradient $\mathbf{u}$. While it is theoretically possible for a blind dyadic cut to fortuitously align with a boundary of $\mathbf{u}$ (a ``lucky guess''), the Dyadic strategy is statistically disadvantaged.
%     \begin{itemize}[leftmargin=0.1cm]
%         \item \textbf{Dyadic (Blind):} The splits are fixed and independent of the signal structure. The probability that a fixed dyadic partition aligns well with the unknown block boundaries of $\mathbf{u}$ is small in general, especially when block boundaries are not synchronized with the grid.

%         \item \textbf{Pattern (Correlated):} The splits follow the boundaries of $\mathbf{d}_0$. Under Assumption \ref{asmp:purity}, $\mathbf{d}_0$ is spatially correlated with $\mathbf{u}$. This implies that the boundaries of $\mathbf{d}_0$ act as noisy but informative predictors of the boundaries of $\mathbf{u}$.
%     \end{itemize}
%     Therefore, a pattern-defined node is significantly more likely to be \textit{sign-coherent} with respect to $\mathbf{u}$ (preserving full gain) than an arbitrary dyadic interval. Combined with the Max-Pool argument, this ensures:
%     \[ \mathbb{E}[G_j^{\mathrm{pat}}] \ge \mathbb{E}[G_j^{\mathrm{dyad}}]. \]

\item \textbf{Probabilistic Dominance of Boundaries:}
    While the Max-Pooling argument (Step 4) ensures that Pattern candidates minimize internal cancellation w.r.t. $\mathbf{d}_0$, success ultimately depends on alignment with the true gradient $\mathbf{u}$.
    Assumption~\ref{asmp:purity} ensures that the pattern boundaries are statistically informative about the structure of $\mathbf{u}$.

    \begin{itemize}[leftmargin=0.1cm]
        \item \textbf{Dyadic (Blind):} The fixed grid partitions are independent of the image content, meaning a dyadic node likely straddles the true boundaries of $\mathbf{u}$, leading to partially incorrect updates even if the node itself has high energy.
        \item \textbf{Pattern (Correlated):} The splits follow the boundaries of $\mathbf{d}_0$, which are spatially correlated with $\mathbf{u}$ (Assumption \ref{asmp:purity}). Thus, a Pattern node corresponds to a coherent region in the ground truth with high probability.
    \end{itemize}

    \item \textbf{Synthesis of Per-Step Dominance:} 
    The superiority of the proposed method arises from the conjunction of these two factors: 
    (1) {Structural Purity}: Pattern nodes avoid internal gain cancellation (Step 4); and 
    (2) {Semantic Alignment}: Pattern boundaries inherently respect the signal structure of $\mathbf{u}$ (Step 5).
    
    Combining the gain dominance from Max-Pooling with the higher probability of valid boundary alignment, we conclude that the expected improvement at each step is larger for the pattern-driven strategy:
    \[ \mathbb{E}[G_j^{\mathrm{pat}}] \ge \mathbb{E}[G_j^{\mathrm{dyad}}]. \]
    \item \textbf{Final Aggregation:} Summing the expected gains over $t$ steps yields the theorem statement.
\end{enumerate}
\end{proof}

\begin{table*}[t!]
\footnotesize
\tabcolsep=0.03cm
\centering
  \caption{The untargeted attack performance on CIFAR-10 (ResNet-18) under $\ell_\infty=0.05$ constraint.}
  \scalebox{0.8}{
    \begin{tabular}{c|cccccccc|cccccccc|cccccccc}
    \Xhline{0.8pt}
    Metrics & \multicolumn{8}{c|}{ASR}                      & \multicolumn{8}{c|}{Avg.Q}                    & \multicolumn{8}{c}{Med.Q} \\
    \hline
    Max$Q$  & \multicolumn{1}{c}{HSJA} & \multicolumn{1}{c}{BouR} & \multicolumn{1}{c}{BouT} & \multicolumn{1}{c}{TtBA}& \multicolumn{1}{c}{HRayS} & {ADBA} & \textbf{Ours$_\text{opt}$}& \textbf{Ours$_\text{dyn}$} & HSJA  & BouR  & BouT &TtBA & HRayS  & ADBA  & \textbf{Ours$_\text{opt}$}& \textbf{Ours$_\text{dyn}$}  & HSJA  & BouR  & BouT &TtBA & HRayS  & ADBA  & \textbf{Ours$_\text{opt}$}& \textbf{Ours$_\text{dyn}$} \\
    \hline
    5     &            0.0    &            0.0    &            0.0    & 0.0   &         \underline{0.5}  &             0.0    & \textbf{4.5} & 0.0 &   -    &     -  & -    & -  & 5     &   -    & 5&   -     &  -     &   -    &    -  &  -& 5     &    -   & 5  &    -  \\
    
    10    &            0.0    &            0.0   &          0.0   & 0.0&   {0.5}  &            0.0   & \textbf{16.5} &   \underline{2.0}&    -   &    -   &   -  &-   & 10    &   -    & 6  & 8   &  -     &    -   &   -    &- & 10    &    -   & 6& 8  \\
    
    20    &        {4.5}  &           {4.5}  &           1.5  & 0.0      &       1.0  &            0.0    & \underline{17.0} & \textbf{18.5} & 13    & 13    & 7      &-& 19    &    -   & 7& 14     & 13    & 13    & 5    &-  & 19    &    -   & 6& 14  \\
    
    50    &  {27.0}  &{27.0}  &         17.5  &        0.0   &   5.0  &          10.0  & \textbf{36.0}& \underline{31.0} & 41    & 41    & 37   & - & 44    & 38    & 25   & 24  & 48    & 48    & 40    &- & 48    & 39    & 23& 18  \\
    
    80    & {34.0}  &         31.5  &         23.5  &       14.5    & 11.5  &          22.5  & \underline{45.0}& \textbf{48.0} & 47    & 46    & 47   &71  & 59    & 53    & 33& 37    & 50    & 49    & 41    & 70 & 62    & 53    & 29& 30  \\
    
    100   &         37.0  & \underline{38.0}  &         27.5  &     13.5    &   13.5  &          32.5  & \underline{49.5}& \textbf{51.0} & 48    & 48    & 51   &70  & 74    & 64    & 38& 40    & 49    & 49    & 41    &69 & 79    & 66    & 31& 35  \\
    
    150   &         43.5  &         45.5  &         19.5  &     19.5   &    22.5  &   \underline{56.0}  & \textbf{61.5}& \textbf{61.5} & 81    & 82    & 76  &  93 & 97    & 88    & 56 & 55    & 89    & 89    & 80   &72  & 110   & 88    & 41& 50  \\
    
    200   &         41.0  &         46.0  &         26.0  &       19.5   &  33.0  &   {64.5}  & \underline{70.0}& \textbf{72.5} & 84    & 83    & 101    &95& 127   & 100   & 70  & 73   & 89    & 90    & 80    &72 & 122   & 100   & 48 & 56  \\
    
    250   &         42.5  &         48.0  &         26.5  &      31.5   &   44.5  &  {74.5}  & \textbf{79.5}& \underline{78.5} & 91    & 86    & 100  & 134 & 157   & 116   & 89& 84    & 89    & 89    & 80    &137 & 147   & 107   & 60 & 61  \\
    
    300   &         44.0  &         47.5  &         29.5  &       35.5  &   53.5  &  {78.5}  & \textbf{85.5}& \underline{83.0} & 86    & 90    & 119  & 153 & 179   & 126   & 102& 95   & 89    & 89    & 81   &139  & 191   & 113   & 72  & 63 \\
    
    400   &         45.5  &         41.5  &         32.5  &       47.0  &   63.5  & {87.0}  & \textbf{89.0} & \underline{88.5} & 89    & 87    & 110   &209 & 207   & 146   & 111 & 109  & 89    & 89    & 81     &214& 207   & 116   & 77  & 69\\
    
    500   &         46.5  &         47.0  &         30.0  &       57.0   &  75.0  &  \underline{93.0}  & \textbf{94.0} & {92.0} & 94    & 90    & 141  & 247 & 246   & 166   & 124  & 122  & 89    & 89    & 84  & 218  & 235   & 123   & 84 & 74 \\
    
    1k  &         49.5  &         53.0  &         36.0  &       77.5  &   93.5  &  \underline{97.0}  & \textbf{98.0}& \underline{97.0} & 147   & 145   & 222  & 379 & 332   & 191   & 152& 150   & 138   & 139  &   131 & 302 & 279   & 128   & 95   & 83\\
    
     4k  &  53.5        &   53.0     & 41.5     &  96.0      &\underline{99.5}    & \textbf{  100.0} & \textbf{ 100.0}& \textbf{ 100.0}& 393   & 140   & 340    &975& 411   & 214   & 183 & 192 & 140   & 139   & 134  & 774 & 289   & 130   & 106& 97 \\
    \Xhline{0.8pt}
    \end{tabular}%
    }
  \label{tab:cifar10res18}%
\end{table*}%

\section{Proof of Theorem~\ref{theo:complexity-fixed}}\label{sec:app_theo5}

\renewcommand{\thetheorem}{5}
\begin{tcolorbox}[colback=blue!5!white,colframe=black!75!black
,
boxsep=0pt,
  left=4pt,
  right=4pt,
  top=4pt,
  bottom=4pt,
  boxrule=0.7pt,
  arc=2pt
]
\begin{theorem}[Query Complexity under Block Sign-Coherence]\label{theo:complexity-fixed_app}
Under Assumptions.~\ref{asmp:block} and~\ref{asmp:purity}, let the true gradient sign $\mathbf{u}$ consist of $K$ spatially coherent blocks $\{B_k\}_{k=1}^K$. To identify descent directions aligned with $\mathbf{u}$, the expected query complexities for \emph{dyadic} search ($T_{\mathrm{dyad}}$) and \emph{pattern-driven} search ($T_{\mathrm{pat}}$) satisfy:
\begin{equation}
    T_{\mathrm{dyad}} = \Omega\left(\sum_{k=1}^K \log_2\frac{d}{|B_k|}\right), \quad T_{\mathrm{pat}} = O\left(\sum_{k=1}^K \gamma_k\right),
\end{equation}
where $\Omega(\cdot)$ and $O(\cdot)$ denote asymptotic lower and upper bounds. $\gamma_k$ is the number of $\mathbf{d}_0$-runs intersecting block $B_k$. In particular, the pattern-driven strategy is asymptotically more efficient in expectation when $\gamma_k < \log_2(d/|B_k|)$ on average.
\end{theorem}
\end{tcolorbox}

\begin{proof}
The proof compares how efficiently the two strategies expose non-cancelled
descent gain under block sign-coherence.

\paragraph{Setup.}
We refer to the signed correlation $C_{\mathbf{d}}(S)$ over a subset $S$ as its signed descent gain. Under Assumption~\ref{asmp:purity}, for any run $R \subseteq B_k$ fully contained
in a true block, the expected gain contribution is positive.

\paragraph{Dyadic Strategy.}
The dyadic strategy recursively partitions coordinates by index.
If a true block $B_k$ is fragmented into multiple dyadic cells at a given depth,
cells are likely to contain mixed signs of $\mathbf{u}$, which leads to partial cancellation of signed gain. In expectation, the magnitude of the gain is significantly reduced compared to a fully contained block. In the worst case, or in expectation over random block alignments, isolating a region fully contained in $B_k$ requires the tree to reach depth
$\Omega(\log_2(d/|B_k|))$
, yielding
\[
T_{\mathrm{dyad}} = \Omega\!\left(\sum_{k=1}^K \log_2(d/|B_k|)\right).
\]

%cg
\paragraph{Pattern-Driven Strategy.}
The pattern-driven strategy partitions coordinates along sign-consistent runs
of the initial direction $\mathbf{d}_0$.
For any run $R$ such that $R \cap B_k \neq \emptyset$, there exists a contiguous
sub-run $R' \subseteq R \cap B_k$ that is fully contained in the true block $B_k$.
By Assumption~\ref{asmp:purity}, such sub-runs admit positive expected
signed descent gain.

Importantly, the pattern-driven strategy explicitly enumerates runs that
intersect with $B_k$, and the number of such candidates is denoted by $\gamma_k$. Each candidate can be evaluated with $O(1)$ queries, and the existence of a
fully contained sub-run ensures that informative descent directions are exposed
without requiring further partitioning.
Therefore,
\[
T_{\mathrm{pat}} = O\!\left(\sum_{k=1}^K \gamma_k\right).
\]

\paragraph{Conclusion.}
The comparison highlights the efficiency of structural priors.
While the dyadic strategy requires deep partitioning proportional to the log-inverse of the block density ($\log(d/|B_k|)$) to resolve the signal, the pattern-driven strategy exploits the existing coarse alignment, requiring only $O(\gamma_k)$ queries per block.
Since $\gamma_k$ is a small constant (representing the number of initialization segments touching the block) and $\log(d/|B_k|)$ grows as the true gradient becomes higher-frequency, the pattern-driven strategy is superior.
\end{proof}
\noindent\textit{Remark.}
To substantiate the practical efficiency implied by Theorem~\ref{theo:complexity-fixed_app}, we quantify the effective dimensionality of the search space on real data. Empirically, we observe that the number of sign-consistent runs induced by the
initial direction $\mathbf{d}_0$ is significantly smaller than the number of
true gradient blocks across ImageNet images.
For example, over more than $50{,}000$ ground-truth gradient blocks, the number
of runs is approximately $20\text{k}$, $11\text{k}$, and $6\text{k}$ for
block sizes $w=\{4,8,16\}$, respectively.
This significant reduction suggests that individual runs typically span multiple gradient blocks and,
consequently, that the intersection between a run and a block often contains
a non-trivial contiguous sub-run fully contained within the block. For empirical verification of this theorem, please refer to Fig.~\ref{fig:gradsignstru} in the main text (for representative models) and Fig.~\ref{fig:gradsignstru_others} in this appendix (for diverse architectures). A detailed discussion is provided in Sec.~\ref{sec:pdores} of the main text.

% Table generated by Excel2LaTeX from sheet 'objectDetection'
\begin{table}[t!]
\small\tabcolsep=0.06cm
  \centering
  \caption{The untargeted attack performance on dense prediction tasks. The query limit is 1,000.}
    \begin{tabular}{c|c|c|c|c|c|c}
    \Xhline{0.8pt}
    \multirow{2}[1]{*}{Methods} & \multicolumn{3}{c|}{Object Detection} & \multicolumn{3}{c}{Segmentation} \\
\cline{2-7}          & ASR   & Avg.Q & Med.Q & ASR   & Avg.Q & Med.Q \\
    \hline
    ADBA  &                   39.5  &                     \underline{146}  &                        73  &                   43.5  &                     364  &                     298  \\
   % \midrule
    Ours $\mathbf{d_n}$ & \textbf{                  70.5 } & \textbf{                       65 } & \underline{                       27 } & \textbf{                  68.0 } & {                    227 } & {                    109 } \\
   % \midrule
    Ours$_\text{opt}$ &                   65.0  &                     167  &                       {  36 } &                   63.5  &                     \underline{161}  &                       \textbf{36}  \\
    %\midrule
    Ours$_\text{dyn}$ &                  \underline{69.5}  &                     160  &                        \textbf{ 26}  &                   \underline{64.0}  &                     \textbf{ 160}  &                        \underline{41}  \\
    \Xhline{0.8pt}
    \end{tabular}%
  \label{tab:dense1000}%
\end{table}%

\begin{figure}[t!]
    \centering

    % Row 1: Max.Q = 50
    \begin{subfigure}{0.32\linewidth}
        \centering
        \includegraphics[width=\linewidth]{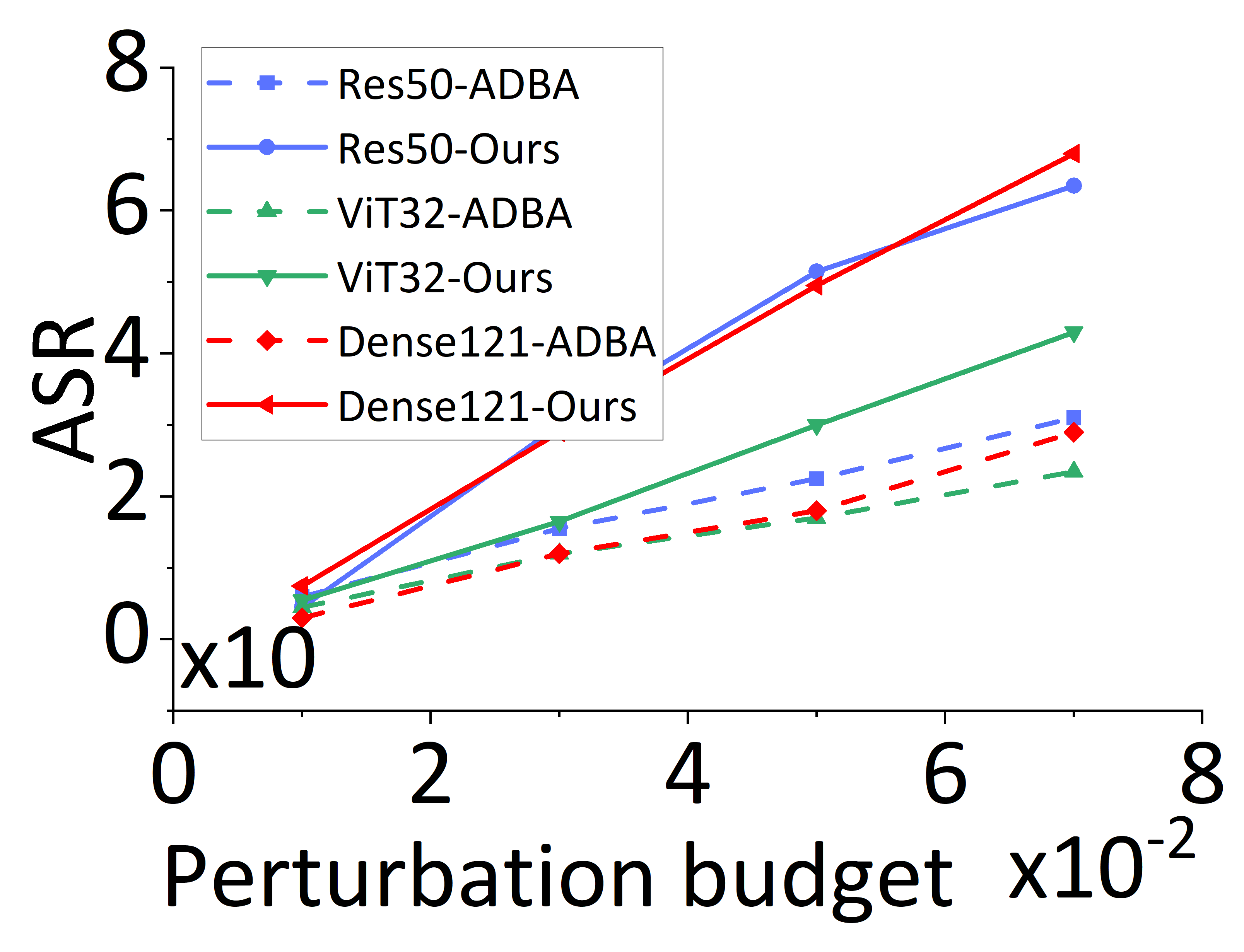}
        \caption{ASR}
        \label{fig:asr50}
    \end{subfigure}
    \hfill
    \begin{subfigure}{0.32\linewidth}
        \centering
        \includegraphics[width=\linewidth]{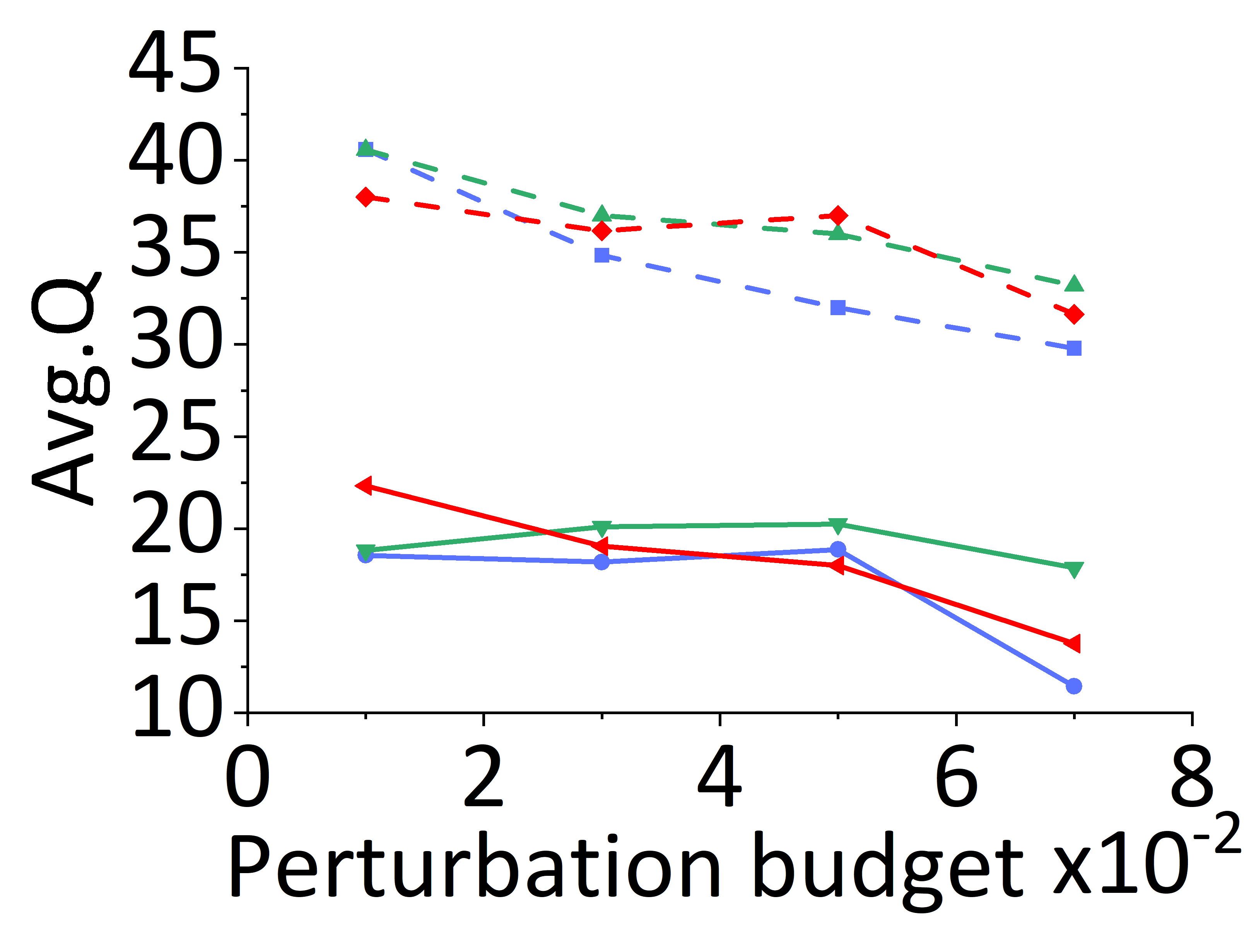}
        \caption{Avg.Q}
        \label{fig:avgq50}
    \end{subfigure}
    \hfill
    \begin{subfigure}{0.32\linewidth}
        \centering
        \includegraphics[width=\linewidth]{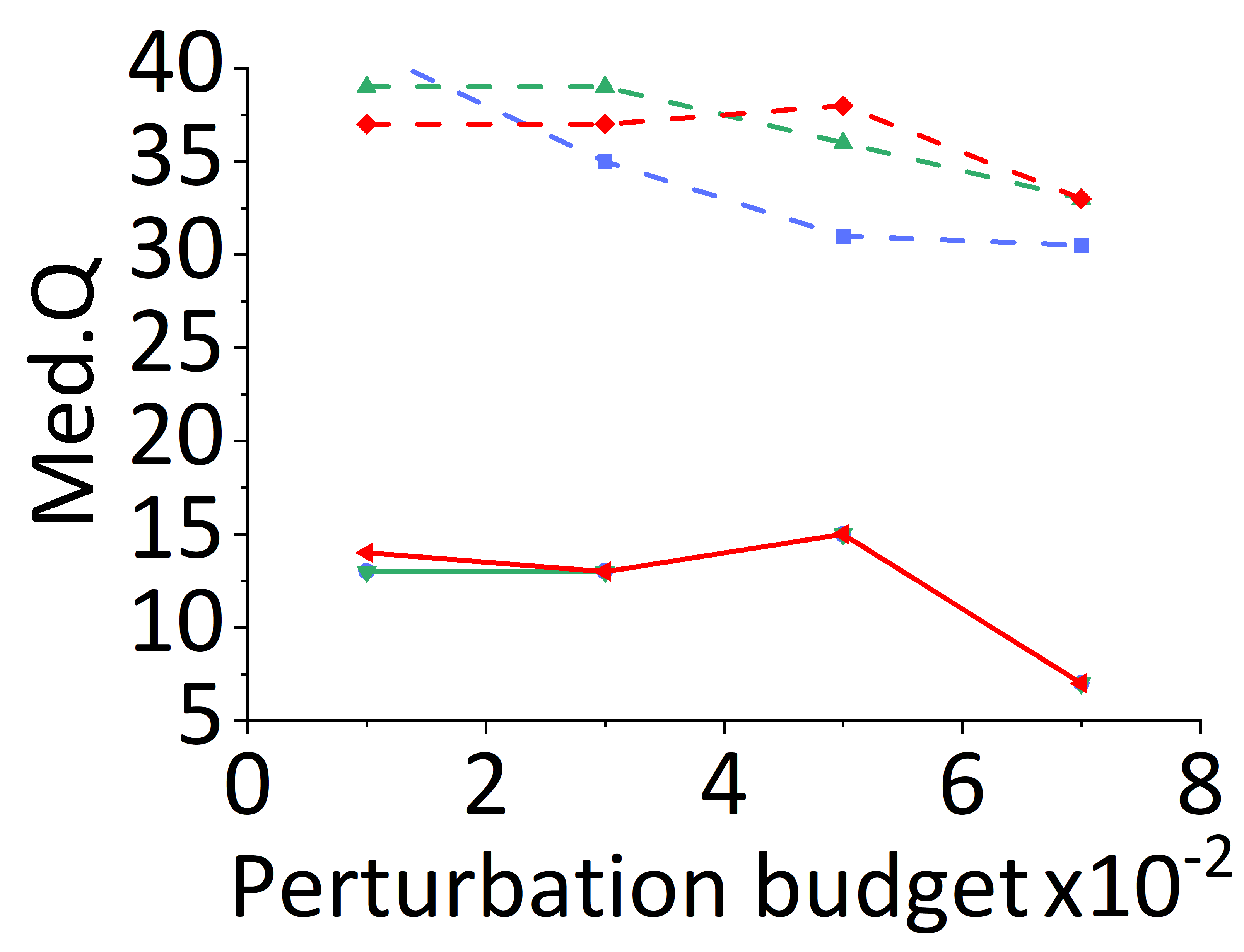}
        \caption{Med.Q}
        \label{fig:medq50}
    \end{subfigure}

    \vspace{0.3em}

    % Row 2: Max.Q = 500
    \begin{subfigure}{0.32\linewidth}
        \centering
        \includegraphics[width=\linewidth]{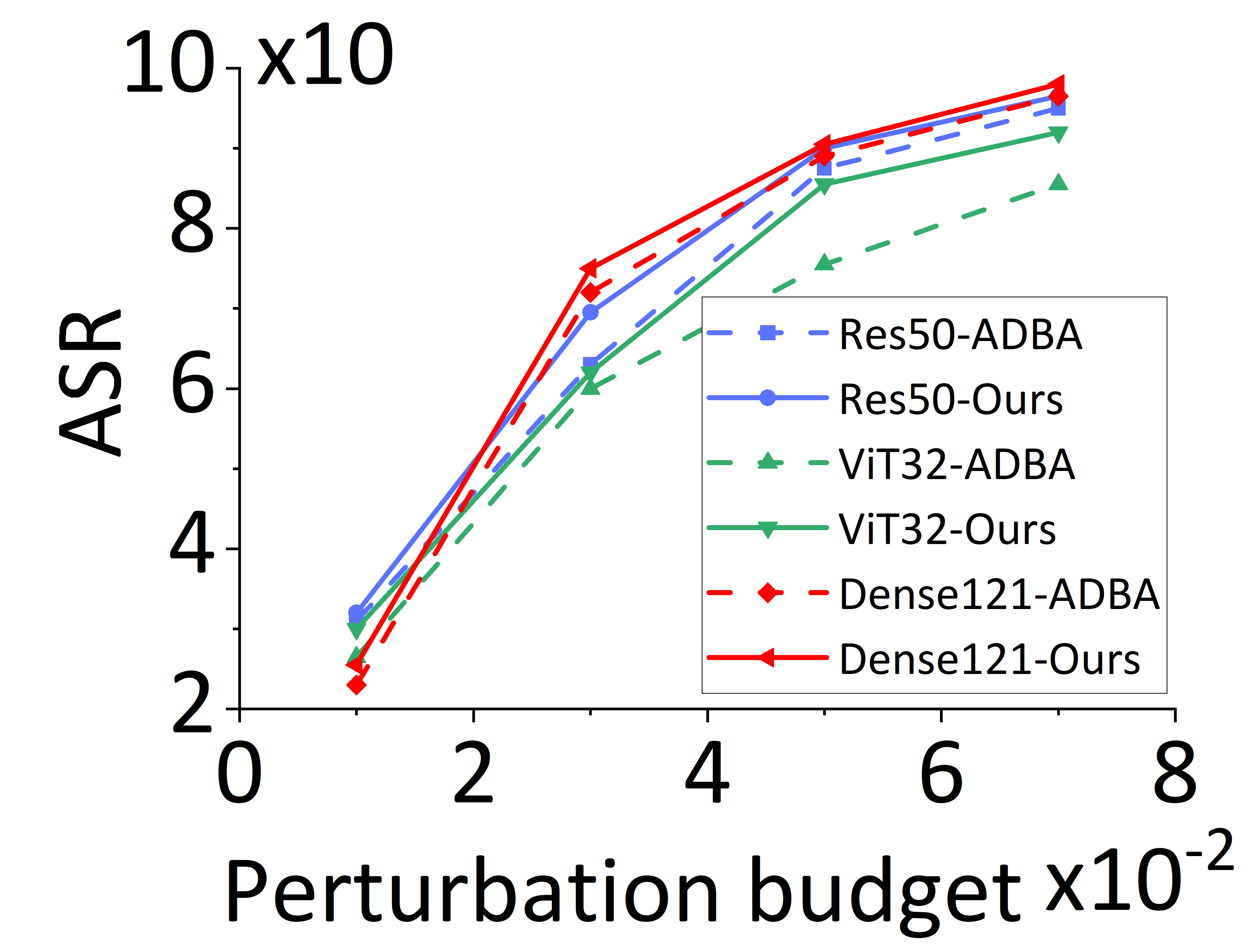}
        \caption{ASR}
        \label{fig:asr500}
    \end{subfigure}
    \hfill
    \begin{subfigure}{0.32\linewidth}
        \centering
        \includegraphics[width=\linewidth]{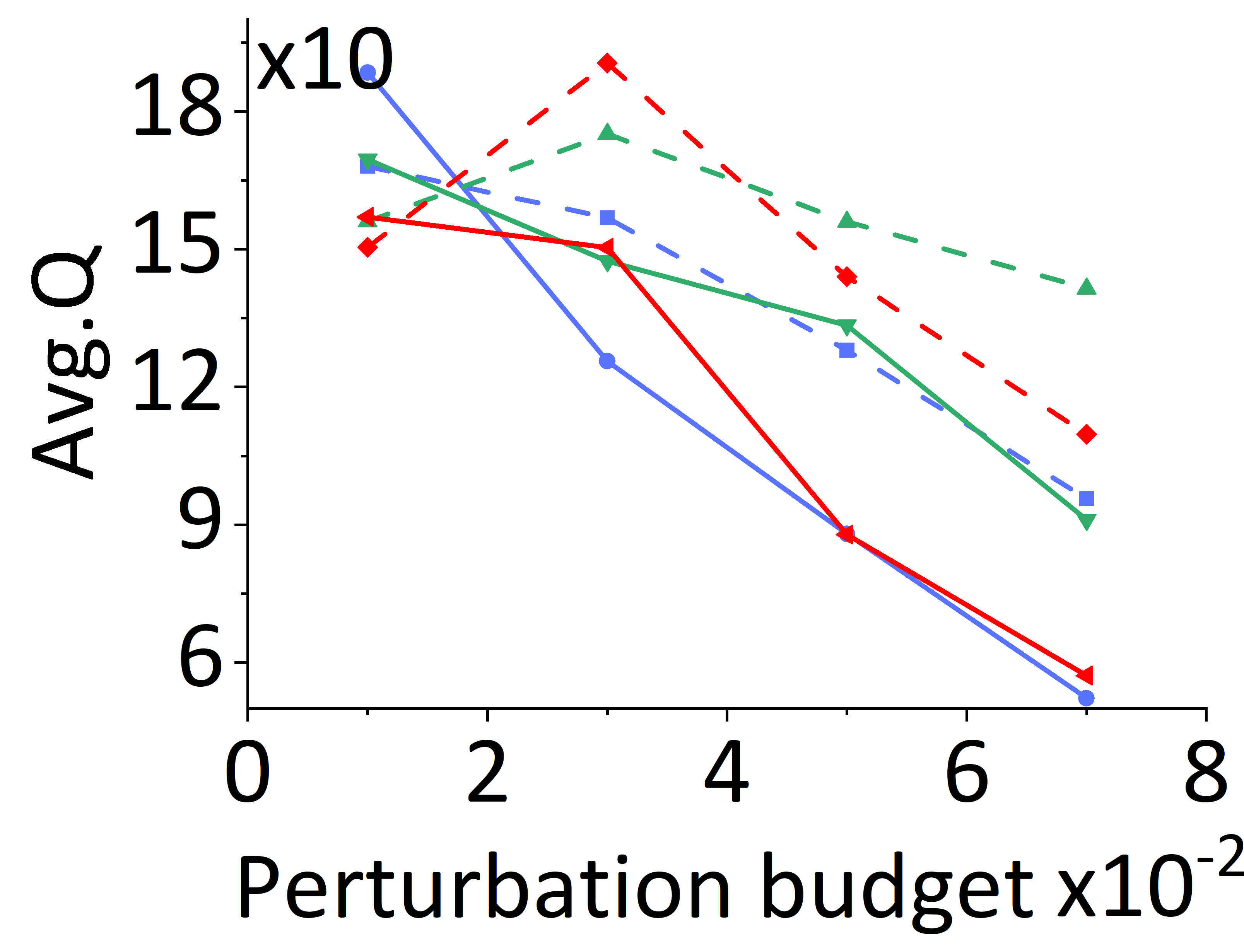}
        \caption{Avg.Q}
        \label{fig:avgq500}
    \end{subfigure}
    \hfill
    \begin{subfigure}{0.32\linewidth}
        \centering
        \includegraphics[width=\linewidth]{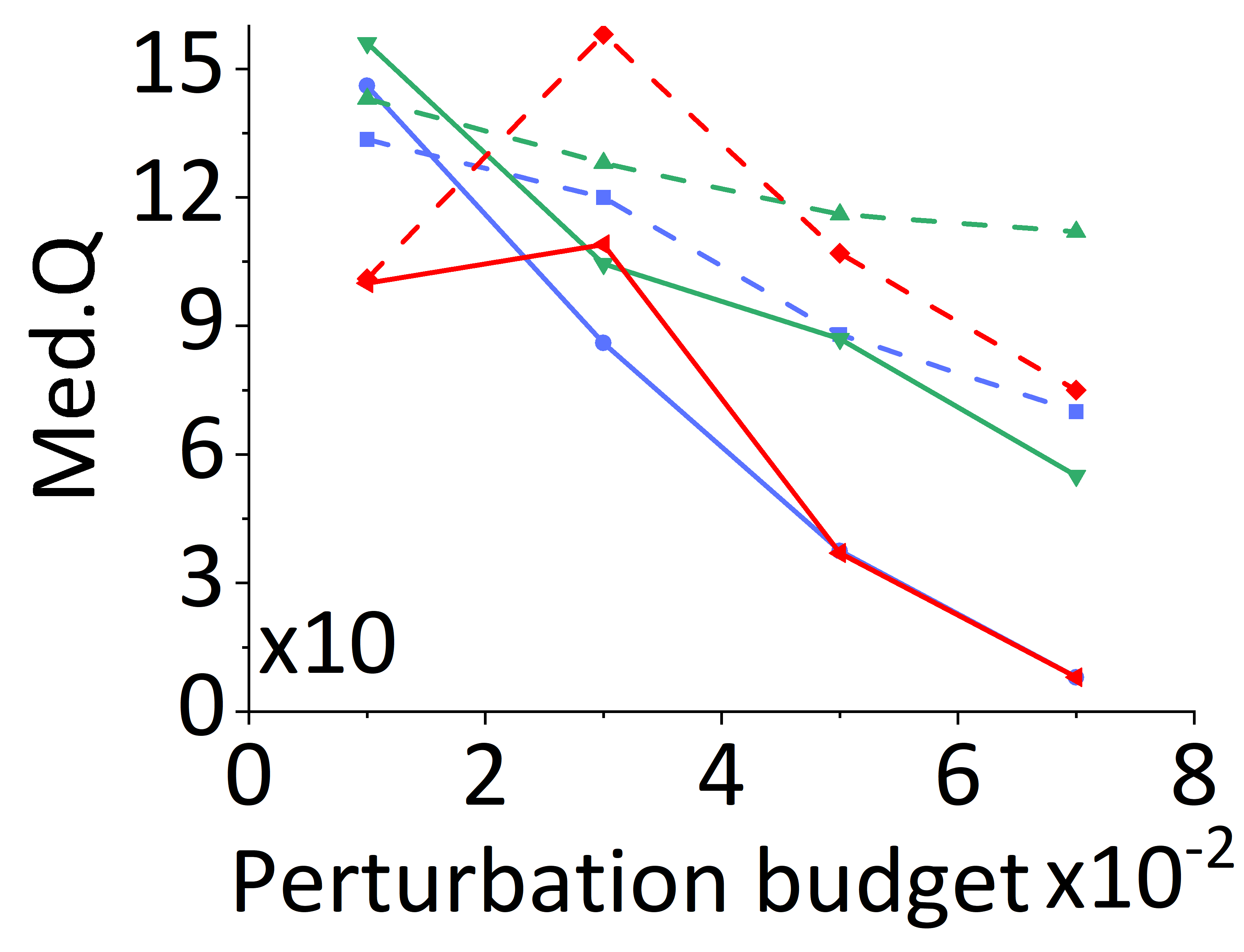}
        \caption{Med.Q}
        \label{fig:medq500}
    \end{subfigure}

    \caption{Attack performance on ImageNet under different perturbation budgets. The first row shows Max.Q=50, and the second row shows Max.Q=500.}
    \label{fig:differentPerturbation}
\end{figure}

\begin{table}[t!]
 \small\tabcolsep=0.03cm
  \centering
  \caption{The attack performance on ImageNet (adversarially trained WideResNet-50) under different perturbations.}
  \scalebox{0.75}{
    \begin{tabular}{c|ccc|ccc|ccc}
    \Xhline{0.8pt}
    Metrics & \multicolumn{3}{c|}{ASR} & \multicolumn{3}{c|}{Avg.Q} & \multicolumn{3}{c}{Med.Q} \\
    \hline
    $\epsilon$ ($\ell_\infty$)     & ADBA  & \textbf{Ours$_\text{opt}$}& \textbf{Ours$_\text{dyn}$}  & ADBA  & \textbf{Ours$_\text{opt}$}& \textbf{Ours$_\text{dyn}$}  & ADBA & \textbf{Ours$_\text{opt}$}& \textbf{Ours$_\text{dyn}$} \\
    \hline
    0.03  & 26.0    & \textbf{28.0} & \underline{27.0}  & 158   & 153  & 162& 56    & 92& 110\\
    0.05  & 45.0   & \textbf{50.5}& \underline{48.5} & 143   & 143 & 191 & 58    & 73& 90\\
    0.08  & \underline{73.0}    & \textbf{73.5} &\underline{73.0} & 96    & 116 & 121 & 41    & 40&48 \\
    0.10   & 85.5  & \textbf{87.0} & \underline{86.5}  & 74    & 89  &82  & 35    & 26 &33\\
    \Xhline{0.8pt}
    \end{tabular}%
    }
  \label{tab:wrs50}%
\end{table}%

\begin{table*}[t!]
\footnotesize
\tabcolsep=0.03cm
  \centering
  \caption{The untargeted attack performance on ImageNet (ViT-B-32) under $\ell_\infty=0.05$ constraint.}
  \scalebox{0.8}{
    \begin{tabular}{c|cccccccc|cccccccc|cccccccc}
   \Xhline{0.8pt}
    Metrics & \multicolumn{8}{c|}{ASR}      & \multicolumn{8}{c|}{Avg.Q} & \multicolumn{8}{c}{Med.Q} \\
    \hline
    Max$Q$  & HSJA  & BouR &  BouT &TtBA&HRayS  & ADBA   & \textbf{Ours$_\text{opt}$}& \textbf{Ours$_\text{dyn}$}  & HSJA  & BouR & BouT&TtBA & HRayS  & ADBA   & \textbf{Ours$_\text{opt}$}& \textbf{Ours$_\text{dyn}$}  & HSJA & BouR  & BouT &TtBA& HRayS  & ADBA   & \textbf{Ours$_\text{opt}$}& \textbf{Ours$_\text{dyn}$} \\
    \hline
    5     &   0.0   &    0.0    &  \underline{3.5}   &  0.0   &        0.5  & 0.0 & \textbf{8.0} &   -  &   -   &    -  &               5   & -&               5  &   -   &               5  &   - &   -   &   -   &               5  &  & \textbf{5} &    -   &               5 &   -  \\
   
    10    &   0.0   &  0.0    & \underline{9.0}    & 0.0 &   0.5  &  0.0  & \textbf{18.5} &  -&  -     & -      &               6  &    -  &         10  &    -   &               6  &  -&   -    &   -    &               6  &   -        &    10  &  -     &               6  &  -\\
   
    20    &   2.0  &       2.0  &   {9.0}  &    0.0    &     0.5  &  2.0 & \textbf{19.0} &\underline{17.0}&             14  &             14  &               6  & -          &    20  &             18  &             12 & 18&             14  &             14  &               6  &   -    &        20  &             18  &             10 & 19\\
   
    50    &           2.5  &           2.5  &           12.5  &   0.0   &     10.5  &  \underline{17.0}  & \textbf{30.0}& \textbf{30.0} &             14  &             14  &               8  &  -   &          42  &             36  &             20&25  &             14  &             14  &               6  & -  &            40  &             36  &             15&20  \\
   
    80    &           9.5  &           9.5  &           13.0  &   6.5   &     17.5  & {30.0}  & \underline{41.5}& \textbf{43.5} &             48  &             48  &               9  &   75   &         57  &             49  &             32 &             38  &             60  &             60  &               6  &  75  &           57  &             48  &             26&             27  \\
   
    100   &         10.0  &           8.0  &           15.5  &   9.0  &      22.0  &{35.5}  & \underline{44.5}& \textbf{48.0} &             45  &             43  &             25  &   80   &         66  &             55  &             36&45  &             60  &             60  &               6  &  80   &          72  &             52  &             29  &35\\
   
    200   &         11.0  &         12.0  &         15.5  &   15.0     &   36.5  &{52.5}  & \underline{64.5}& \textbf{65.0} &             77  &             82  &             46  &    115    &     103  &             86  &             71 &             71  &           100  &           100  &               6  &   120    &        98  &             74  &             54 &55 \\
   
    300   &         14.0  &         11.5  &         13.0  &     19.5  &    44.0  & {63.5}  & \underline{74.5}& \textbf{75.5} &             85  &             81  &             35  &  143   &        129  &           113  &           97 &           96 &           100  &             99  &               6  &  146   &        117  &             88  &             65 &71 \\
   
    400   &         12.5  &         11.5  &         14.0  &    21.0   &    51.0  &{70.5}  & \textbf{82.0}& \underline{78.5} &             83  &             81  &             85  &  167     &      163  &           136  &           121&           106  &             99  &             99  &               7  &  149    &       136  &           100  &             80&81  \\
   
    500   &         11.5  &         10.0  &         13.0  &    25.5   &    56.5  & {75.5}  & \textbf{85.5}& \underline{84.5} &           125  &             70  &             71  &    201    &     193  &           156  &           133 &130 &           100  &           100  &               7  &  152  &         153  &           116  &             87&93  \\
   
    1k  &         14.0  &         14.5  &         15.5  &     41.0   &   71.0  &  {85.5}  & \underline{89.5}& \textbf{92.5} &           160  &           124  &           119  &    435      &   299  &           217  &           160&176  &           150  &           150  &             13  &  415      &     194  &           146  &           104 &99 \\
   
    3k  &         15.0  &         14.0  &         14.5  &    67.0    &   92.0  & {97.5}  & \underline{98.0}& \textbf{98.5} &           184  &           132  &           151  &     927   &     602  &           405  &           295 &272 &           150  &           150  &           139  &  730    &       366  &           178  &           119  &103\\
   
    5k  &         16.0  &         12.0  &         17.0  &   76.0   &     95.5  & \underline{99.0}  & \textbf{99.5} & \textbf{99.5} &           340  &           128  &           158  &    1,363     &    700  &           453  &           341 &305 &           151  &           150  &               7  &    972  &       392  &           181  &           119 &105 \\
   
    10k &         17.5  &         13.0  &         14.0  &   83.0    &    99.0  & \textbf{100.0} & \textbf{100.0}& \textbf{100.0}&        1,813  &           141  &           153  &    1,801      &   961  &           502  &           346& 349 &           151  &           150  &             72  &    974   &      402  &           186  &           121& 106\\
    \Xhline{0.8pt}
    \end{tabular}%
    }
  \label{tab:vit}%
\end{table*}%

% Table generated by Excel2LaTeX from sheet 'HSJA'
\begin{table*}[t!]
\footnotesize
\tabcolsep=0.03cm
  \centering
  \caption{The untargeted attack performance on ImageNet (DenseNet-121) under $\ell_\infty=0.05$ constraint. }
  \scalebox{0.8}{
    \begin{tabular}{c|cccccccc|cccccccc|cccccccc}
    \Xhline{0.8pt}
    Metrics & \multicolumn{8}{c|}{ASR}      & \multicolumn{8}{c|}{Avg.Q} & \multicolumn{8}{c}{Med.Q} \\
    \hline
    Max$Q$  & HSJA  & BouR &  BouT&TtBA &HRayS  & ADBA & \textbf{Ours$_\text{opt}$}& \textbf{Ours$_\text{dyn}$}  & HSJA  & BouR & BouT&TtBA & HRayS  & ADBA  & \textbf{Ours$_\text{opt}$}& \textbf{Ours$_\text{dyn}$}  & HSJA & BouR  & BouT &TtBA& HRayS  & ADBA  & \textbf{Ours$_\text{opt}$}& \textbf{Ours$_\text{dyn}$} \\
    \hline
   
    5     &    0.0   &         0.0    &   \underline{4.5}  & 0.0  &         2.0  & 0.0 & \textbf{20.0}&   -  &   -    &  -     &               5  &  -  &            5  &  -     &               5 &   -  &     -  &  -     &               5  &  -  &            5  &     -  &               5  &   - \\
   
    10    &       0.0   &      0.0  &   \underline{9.5}  &   0.0  &        2.0  &    0.0    & \textbf{32.5}&   -  &     -  &     -  &               6  &  -  &          10  &   -    &               6&   -   &    -   &  -     &               6  & - &           10  &     -  &               5 &   -  \\
   
    50    &           4.5  &           4.5  &         12.0  &    0.0   &   14.0  &  \underline{18.0}  & \textbf{49.5}&\textbf{49.5} &             14  &             14  &             13  &      -   &     39  &             37  &             18& 23 &             15  &             15  &               6  &       -    &   39  &             38  &             15& 20 \\
   
    100   &         13.5  &         11.5  &         16.0  & 8.5   &      22.0  &   {42.5}  & \underline{62.0} & \textbf{63.5}&             38  &             44  &             40  &    76   &       58  &             59  &             29  &33&             59  &             59  &             48  &     75    &     54  &             56  &             16 &20 \\
   
    200   &         15.5  &         16.0  &         16.0  & 11.5    &     37.5  &  {67.5}  & \textbf{78.5}& \textbf{79.0} &             70  &             71  &             55  &     95   &    105  &             91  &             53  &55&             99  &             99  &               6  &       78    &   96  &             80  &             28 &22 \\
   
    300   &         17.0  &         14.5  &         20.5  &  13.5  &      52.0  &   {79.5}  & \underline{84.5}& \textbf{85.5} &             88  &             71  &             49  &  109    &      148  &           115  &             66  &  69&           99  &             99  &               6  &     78     &  152  &             92  &             34&28  \\
   
    400   &         15.0  &         13.5  &         16.5  & 18.0   &      62.0  &  {85.5}  & \underline{87.5} & \textbf{89.0} &             98  &             83  &             87  & 158     &      184  &           131  &             76  & 81&            99  &             99  &             10  &     146   &    180  &           102  &             36&36  \\
   
    500   &         15.0  &         15.5  &         19.5  &  21.5  &      68.5  & {89.0} & \underline{90.5}& \textbf{92.0}&             95  &             73  &             77  &     213   &    210  &           144  &             88& 92 &             99  &             99  &               8  &    151    &    196  &           107  &             37 & 44\\
   
    700   &         17.0  &         14.0  &         19.5  &  27.5  &      77.5  &  {92.5}  & \underline{95.0} & \textbf{96.0} &           115  &           103  &             90  &  301      &    253  &           160  &           113 &           113  &           151  &           149  &               8  &     311     &  222  &           114  &             45 &46 \\
   
    1k  &         17.5  &         14.5  &         17.5  &  37.5  &      86.5  & \underline{96.5}  & \textbf{98.5} & \textbf{98.5} &           170  &           114  &           128  &    441    &    316  &           187  &           137 &           131  &           149  &           149  &           138  &       412   &  251  &           118  &             48  &48\\
   
    2k  &         18.5  &         15.0  &         15.5  &   55.0    &   96.5  & \underline{99.0}  & \textbf{100.0}& \textbf{100.0} &           334  &           110  &           115  &    813     &   423  &           218  &           155 &145 &           150  &           149  &               7  &   725    &     278  &           120  &             52&51  \\
   
    \Xhline{0.8pt}
    \end{tabular}%
    }
  \label{tab:dense121imgnet}%
\end{table*}%

% Table generated by Excel2LaTeX from sheet 'DynamicStart'
\begin{table*}[t!]
\tabcolsep=0.05cm
  \centering
  \caption{The untargeted attack performance on ImageNet across diverse architectures under $\ell_\infty=0.05$ constraint.}
  \scalebox{0.7}{
    \begin{tabular}{c|ccc|ccc|ccc|ccc|ccc|ccc}
    \Xhline{0.8pt}
    Models & \multicolumn{3}{c|}{EfficientNet-B0} & \multicolumn{3}{c|}{ConvNeXt-base} & \multicolumn{3}{c|}{Inception-v3} & \multicolumn{3}{c}{SwinV2-T} & \multicolumn{3}{c}{DeiT-B} & \multicolumn{3}{c}{RegNet-X-8GF}\\
    \hline
    Max$Q$  & \multicolumn{1}{c}{ADBA} & \textbf{Ours$_\text{opt}$}& \textbf{Ours$_\text{dyn}$} & \multicolumn{1}{c}{ADBA} & \textbf{Ours$_\text{opt}$}& \textbf{Ours$_\text{dyn}$} & ADBA  & \textbf{Ours$_\text{opt}$}& \textbf{Ours$_\text{dyn}$}& ADBA & \textbf{Ours$_\text{opt}$}& \textbf{Ours$_\text{dyn}$} & \multicolumn{1}{c}{ADBA} & \textbf{Ours$_\text{opt}$}& \textbf{Ours$_\text{dyn}$}&{ADBA} & \textbf{Ours$_\text{opt}$}& \textbf{Ours$_\text{dyn}$} \\
    \hline
    50    &               24.5  &               \underline{36.5}  & \textbf{              44.5 } &                  4.5  &               \underline{17.0}  & \textbf{              18.5 } &               14.5  & \underline{44.0 } & \textbf{              48.5 } &               10.5  &   \underline{16.5}  & {\textbf{              18.5 }}&4.5&\underline{8.0}&\textbf{18.5}&20.5&35.5 &\textbf{65.0} \\
\cline{1-1}    500   &               \underline{85.5}  & \textbf{              87.0 } &               85.0  &               \underline{72.5}  & \textbf{              75.5 } &               68.0  &               85.0  &               \underline{85.5}  & \textbf{              87.0 } &               52.0  & \textbf{              67.5 } & \textbf{              67.5 } &49.5&\underline{55.0}&\textbf{62.0}&88.0&\textbf{92.0} &\underline{91.0}\\
    \Xhline{0.8pt}
    \end{tabular}%
  }
  \label{tab:morelinf}%
\end{table*}%

\begin{table*}[t!]
\small
\tabcolsep=0.02cm
  \centering
 \caption{The untargeted attack performance comparison on ImageNet under $\ell_2=5$ constraint.}  
 \scalebox{0.7}{
    \begin{tabular}{c|c|cccccccc|cccccccc|cccccccc}
    \Xhline{0.8pt}
  \multirow{3}[1]{*}{Victims}& Metrics & \multicolumn{8}{c|}{ASR}                      & \multicolumn{8}{c|}{Avg.Q}                    & \multicolumn{8}{c}{Med.Q} \\
   % \hline
 &  Max$Q$  & HSJA  & BouR  & BouT  & SurFree & Tangent & CGBA & \textbf{Ours$_\text{opt}$}& \textbf{Ours$_\text{dyn}$}  & HSJA  & BouR  & BouT  & SurFree & Tangent & CGBA &\textbf{Ours$_\text{opt}$}& \textbf{Ours$_\text{dyn}$}  & HSJA  & BouR  & BouT  & SurFree & Tangent &CGBA & \textbf{Ours$_\text{opt}$}& \textbf{Ours$_\text{dyn}$} \\
   \hline
   \multirow{4}[1]{*}{ResNet-50}& 100   &          5.5  &             5.5  &          5.5  &           8.5  &          8.0 &9.0 & \underline{          14.5 } & \textbf{          18.0 } &          39  &            31  &            40  &            43  &            62  &73&            39  &44 &            50  &            19  &            37  &            37  &            44  &    73&        36  & 39\\
   
    &500   &          8.0  &             6.5  &          6.5  &         18.0  &         18.0&14.0  & \textbf{34.5}& \underline{32.0} &          104  &            80  &           133  &           168  &           199&148  &          171  &160 &            89  &            79  &            76  &           110  &           150  &81&          125 &80  \\
   
    &1k    &          9.5  & {            7.5 } &          8.0  &         26.0  & {        25.0 } &18.0& \textbf{          44.5 }&  \textbf{          44.5 } &           162  & {         124 } &           163  &           351  &           415&308  &          295 & 313 &           140  & {         136 } &           126  &           280  &           310&137  &          202 & 243 \\
   
   & 5k    &         10.0  &             9.0  &          9.0  &         52.0  &         61.5&34.0  &\textbf{67.5} &  \underline{63.5}&          884  &          525  &           423  &        1,410&1,347  &        1,590  &1,088   & 953    &           141  &          138  &           128  &        1,062  &        1,304 &958 & 572 &477 \\

    \hline
      \multirow{4}[1]{*}{Dense-121}&100   &  5.0        &              5.0  & 4.0     &           7.5  &           7.0&8.5  & \textbf{ 15.5 }& \underline{ 15.0 } &      42         &             49  &              19  &              46  &              59&  74& 32&32  &50  &             47  &              9  &              39  &              45 &74 &             24&22   \\
   
   & 500   &    8.0       & 6.0  &         6.5  &         18.0  &         13.0  &14.0 &\underline{ 25.0 } & \textbf{ 32.5 }&  113            &           139  &            129  &            306  &            490&181  & 132& 157 &       90         &           88  &78  &            119  &            150  & 97&          59&123   \\
   
   & 1k  &    5.5    & 6.5  & 8.0  &         22.5  &         22.0 &19.5 & \underline{ 33.5 }& \textbf{ 44.0 } &       131       &           113  &            249  &            306  &            490 &364 & 264&304  &    140          &  137  &131  &            229  &            485  &291&           143 &237  \\
  & 5k   &     9.5     & 8.5  &       7.0 &        53.0  &         66.0  &37.0&  \textbf{67.5} & \underline{66.5}    &  568    &        509  &        367 &         1,727  &        1,889&1,469  &    1,323&  1,045  &   142     &     138  &         128  &   1,334  &   1,541  &1,110& 1,052 &567 \\  
    \hline
      \multirow{4}[1]{*}{ViT-B-32}&100   &           3.5  &              6.0  & 5     &           5.5  &           6.5&7.5  & \underline{ 15.5 }& \textbf{ 18.5 } &              21  &             43  &              59  &              53  &              64 &73 & 42&45  &18  &             51  &              39  &              48  &              70  &73&             40& 41  \\
   
   & 500   &         10.0  & 13.5  &         11.0  &         13.5  &         16.5  &20.0& \textbf{ 30.5 }& \textbf{ 30.5 } &            157  &           236  &            158  &            171  &            265 &196 & 138& 118 &              91  &           182  &79  &            106  &            219  & 214&          99 & 76 \\
   
   & 1k  &         11.5  & 20.5  & 20.5  &         19.5  &         22.0  & 27.0&\underline{ 36.5 }& \textbf{ 38.5 } &            263  &           486  &            504  &            335  &            493&362  & 231& 247 &            141  &           500  &            490  &            287  &            482 &247 & 138 & 119 \\
  & 5k   &      29.5     & 54.0  &       57.5  &        47.0  &         55.5  & 52.5& \underline{59.5}& \textbf{62.0}     &  1,768    &        1,907  &        1,985  &         1,847  &        1,708 &1,331 &    1,039&1,174    &   1,296     &     1,832  &         1,823  &   1,422  &   1,199 &871 & 439& 550 \\   
  \Xhline{0.8pt}
    \end{tabular}%
    }
  \label{tab:l2resvit}%
\end{table*}%

\begin{figure*}
\centering
  \includegraphics[width=\linewidth]{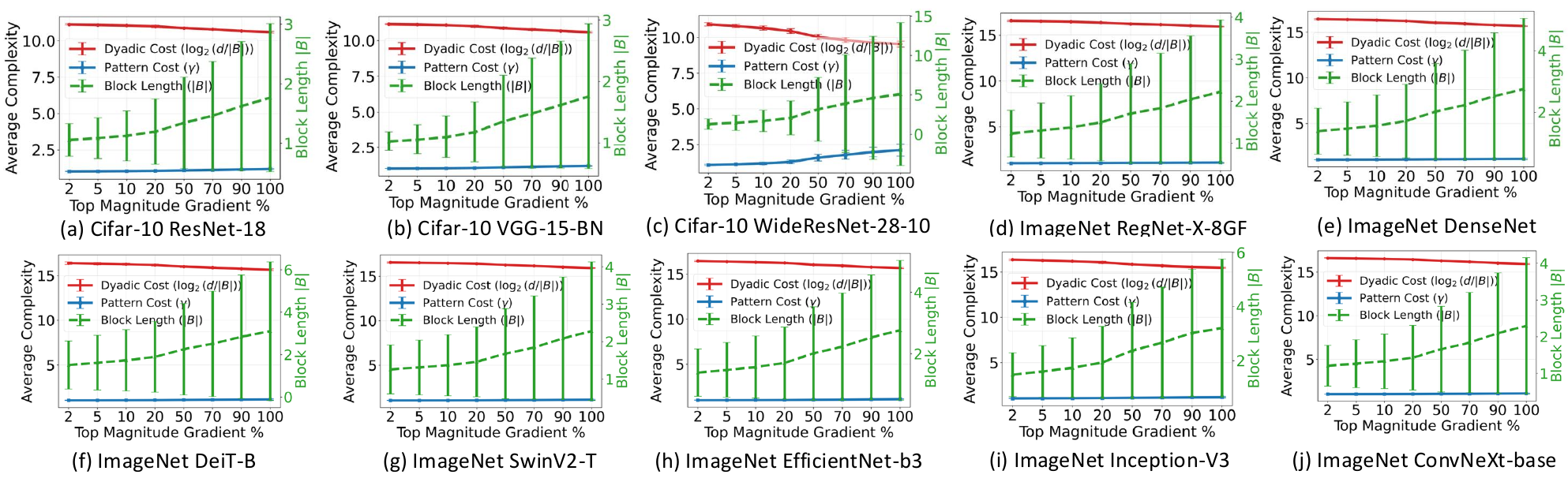}
  \caption{Extensive validation of Theorem~\ref{theo:complexity-fixed} on diverse datasets and architectures. The results consistently confirm that salient gradients exhibit rapid sign oscillations ($|B|\approx1$), validating the universality of the query complexity gap derived in Theorem~\ref{theo:complexity-fixed}.} 

  \label{fig:gradsignstru_others}
\end{figure*}

% Table generated by Excel2LaTeX from sheet 'DynamicStart'
\begin{table}[t!]
\small\tabcolsep=0.02cm
  \centering
  \caption{The untargeted attack performance on ImageNet-C and PathMNIST, respectively. Max.Q$=50$.}
  \scalebox{0.7}{
    \begin{tabular}{c|c|ccc|ccc|ccc}
    \Xhline{0.8pt}
    \multirow{2}[1]{*}{Datasets} & \multirow{2}[1]{*}{Models} & \multicolumn{3}{c|}{ASR} & \multicolumn{3}{c|}{Avg.Q} & \multicolumn{3}{c}{Med.Q} \\
\cline{3-11}          &       & \multicolumn{1}{c}{ADBA}  & \textbf{Ours$_\text{opt}$}& \textbf{Ours$_\text{dyn}$} & \multicolumn{1}{c}{ADBA}  & \textbf{Ours$_\text{opt}$}& \textbf{Ours$_\text{dyn}$}& \multicolumn{1}{c}{ADBA}  & \textbf{Ours$_\text{opt}$}& \textbf{Ours$_\text{dyn}$}\\
    \hline
    \multirow{2}[1]{*}{ImgNC} & NMix &              33.5  & \textbf{             50.0 } &             \underline{48.5}  &                  30  & \textbf{                 13 } &   \underline{22}  & 29    & \textbf{6} &  \underline{20} \\
\cline{2-2}          & HMany &              43.5  & \textbf{             59.0 } &               \underline{56.5}  &                  31  & \textbf{                 16 } &                   \underline{22}  & 29    & \textbf{6} &  \underline{20} \\
\cline{1-2}    PMNIST & MLP &              21.0  & \textbf{             35.0 } &               \underline{29.5}  &                  37  & \textbf{                 31 } &                   \underline{35}  & 37    & \textbf{31} &  \underline{36} \\
    \Xhline{0.8pt}
    \end{tabular}%
    }
  \label{tab:otherdataset}%
\end{table}%

\section{Detailed Experimental Settings}\label{app:expsetting}

\noindent{\textbf{Victim Models.}} For CIFAR-10, we select standard ResNet-18 and VGG-16-BN models~\cite{Torchvision}, alongside an adversarially trained WideResNet~\cite{croce2021robustbench}. For ImageNet-1K, we target a diverse array of high-performing architectures from the Torchvision library~\cite{Torchvision}, including ResNet-50, ViT-B-32, DenseNet-121, EfficientNet-B0, ConvNeXt-Base, Inception-v3, SwinV2-T, DeiT-B, and RegNet-X-8GF. Additionally, we evaluate an adversarially trained WideResNet-50  (trained with $\ell_\infty=8/255$ \cite{robustness}). Regarding foundation models~\cite{radford2021learning}, we employ the ViT-H-14-CLIPA-336~\cite{ilharco_gabriel_2021_5143773}, chosen for its superior zero-shot performance on ObjectNet. For downstream tasks, we evaluate FasterRCNN-ResNet50-FPN-V2~\cite{Torchvision} (object detection) and SAM-ViT-B~\cite{kirillov2023segment} (segmentation). Finally, we adopt the top-performance NoisyMix~\cite{erichson2022noisymix} and HMany~\cite{hendrycks2021many} models from RobustBench library~\cite{croce2021robustbench} for ImageNet-C, and a standard MLP from the MedMNIST library~\cite{medmnistv2,medmnistv1} for PathMNIST.

\noindent{\textbf{Image Sizes.}}
All inputs are processed as three-channel RGB images. Grayscale images are converted to RGB via channel replication before processing. We use $32\times32\times3$ for CIFAR-10, $224\times224\times3$ for ImageNet-1K and ImageNet-C, $336\times336\times3$ for ObjectNet/CLIP, and $28\times28\times3$ for PathMNIST.  For MS-COCO object detection, we keep the original image resolution, whose height and width are typically around 300--700 pixels. When the image dimensions are not divisible by the effective block size required by our block-wise initialization, we pad the bottom and right borders to the nearest divisible size using replication padding. For SA-1B/SAM, following~\cite{zhou2024darksam}, all images are resized to $1500\times2250\times3$ before attack.

\noindent\textbf{Implementation Details.} In DPAttack, for low-frequency filtering, the decomposition level is set to its maximum feasible depth, $dl = \lfloor\log_2(\bar{w})\rfloor$. Empirical analysis shows that while results are marginally sensitive to $dl$ within its valid range, larger values generally yield superior performance. For baselines requiring Monte Carlo gradient estimation (HSJA, Bounce, and Tangent), we tune their batch sizes via grid search to ensure fairness in low-query regimes. This adjustment is necessary because their default batch size ($100$) leads to $0\%$ ASR whenever the query budget $Q_{\max} < 100$.

\noindent\textbf{Dense Prediction Tasks.} For the results in Tables~\ref{tab:odandseg} and~\ref{tab:dense1000}, we define task-specific success criteria: (i) Object Detection: an attack succeeds if the per-image Average Precision (AP) drops below $0.2$; (ii) Segmentation: success is defined as reducing the prediction Intersection over Union (IoU) to $<0.5$. Strictly adhering to the hard-label setting, our algorithm only accesses predicted bounding boxes (object detection) or masks (segmentation) during the attack process. Confidence scores are solely utilized for final AP calculation during the evaluation phase and are never used to guide the attack generation.

\section{Additional Results under $\ell_\infty$ Constraints}\label{sec:linfmore}
\subsection{ Diverse Perturbation Budgets}\label{app:differentPerturbation500}
Fig.~\ref{fig:differentPerturbation} illustrates the attack performance of DPAttack versus ADBA under varying $\ell_\infty$ constraints. 
In the low-query regime (Figs.~\ref{fig:differentPerturbation}(a), (c), (e)): our method maintains a clear ASR advantage that scales with the perturbation magnitude, while requiring significantly fewer queries to find successful adversarial examples. In the high-query regime (Figs.~\ref{fig:differentPerturbation}(b), (d), (f)): DPAttack remains superior across different perturbation levels. The only marginal exception occurs at an extremely small bound of $\ell_\infty=0.01$, where our method shows a slightly higher average query count. This is primarily because DPAttack successfully converts ``hard samples'' that ADBA fails to attack; these more challenging cases naturally demand higher query counts, which increases the overall average but results in a superior final ASR.

\subsection{Diverse Architectures and Domains}
To complement the results in Sec.~\ref{sec:expstdmodel}, we provide extensive evaluations across diverse architectures and datasets. Tables~\ref{tab:cifarvgg},~\ref{tab:cifar10res18},~\ref{tab:vit}, and~\ref{tab:dense121imgnet} report the performance on VGG-16-BN and ResNet-18 (CIFAR-10), as well as ViT-B-32 and DenseNet-121 (ImageNet) across multiple query budgets. Table~\ref{tab:morelinf} further extends these evaluations to a wider range of SOTA ImageNet architectures. Beyond standard natural images, we evaluate DPAttack on corrupted (ImageNet-C) and biomedical (PathMNIST) domains (see Table~\ref{tab:otherdataset}). Our method consistently outperforms SOTA hard-label attacks in both ASR and query efficiency across all tested models, confirming the generalization of our proposed method.

\subsection{Robustness against Adversarial Training}\label{app:wrs50}
Table~\ref{tab:wrs50} presents comparative results against an adversarially trained WideResNet-50 across various perturbation levels. Our method achieves a consistently higher ASR than the second-best baseline, ADBA, while maintaining competitive query efficiency. Notably, the ASR gains—though numerically modest—represent a significant improvement in the challenging context of attacking robust models.

\subsection{Generalization on Dense Prediction Tasks}

Beyond the results in Table~\ref{tab:odandseg} of the main manuscript, which demonstrate the superiority of our method under a limited query budget ($50$), Table~\ref{tab:dense1000} further shows that our approach maintains a performance margin of over $30\%$ even when the query limit is relaxed to $1,000$.

\section{Additional Results Under $\ell_2$ Constraint}\label{sec:l2app}

While DPAttack is primarily tailored for the structural characteristics of $\ell_\infty$ perturbations, we evaluate its performance under $\ell_2$ constraints to demonstrate its versatility. Table~\ref{tab:l2resvit} compares DPAttack against SOTA methods specifically optimized for the $\ell_2$ metric, such as SurFree and Tangent. Despite its $\ell_\infty$-focused design, DPAttack achieves superior ASR and query efficiency—particularly in low-query regimes—validating its robust generalization across different distance metrics.

\begin{table}[t]
\centering
\footnotesize\tabcolsep=0.1cm
\caption{Comparison with CA~\cite{hong2024certifiable} under the same settings on ImageNet with ResNet-18. The values are CA/\textbf{Ours$_\text{dyn}$}.}
\label{tab:ca_compare}
\begin{tabular}{lccccc}
\Xhline{0.8pt}
Defense & ASR $\uparrow$ & Det$R\downarrow$ & Avg.Q$\downarrow$ & Med.Q$\downarrow$& $\ell_2\downarrow$ \\
\hline
Blacklight & 100\% / 100\% & 0\% / 0\% & 603 / \textbf{16}& 603 / \textbf{9} & 33.0 / \textbf{31.3} \\
RAND-Pre   & 100\% / 100\% & --          & 603 / \textbf{29} & 603 / \textbf{16}& 32.2 / \textbf{28.2} \\
RAND-Post  & 100\% / 100\% & --          & 603 / \textbf{20}& 603 / \textbf{16} & 32.7 / \textbf{29.2} \\
\Xhline{0.8pt}
\end{tabular}
\end{table}

\subsection{Comparison with Certifiable Attack (CA)}
\label{app:ca}

We further compare DPAttack with the recent Certifiable Attack (CA)~\cite{hong2024certifiable}, which represents a different attack paradigm based on randomized adversarial distributions. Unlike empirical hard-label attacks that search for a single adversarial example, CA constructs an adversarial distribution with a guaranteed attack success probability, and adversarial examples sampled from this distribution can be used without final verification queries. This inherent randomness makes CA particularly effective against stateful detection mechanisms such as Blacklight. CA provides two localization strategies: Smoothed Self-Supervised Perturbation (SSSP) localization and binary-search localization. SSSP leverages a pretrained feature extractor as an additional surrogate prior, while binary-search localization does not rely on such pretrained adversarial or feature priors. Since DPAttack also does not use adversarial-example priors or surrogate feature extractors, we compare against the binary-search version of CA for a closer surrogate-free comparison.

Table~\ref{tab:ca_compare} reports the comparison under the same settings as CA on ImageNet with ResNet-18. DPAttack achieves the same ASR and detection rate as CA across Blacklight, RAND-Pre~\cite{qin2021random}, and the RAND-Post setting used in CA, while requiring about $25\times$ fewer queries and smaller perturbations. These results suggest that although CA provides a strong certifiable and verification-free attack paradigm, DPAttack offers a more query-efficient empirical alternative when the goal is to obtain successful hard-label AEs under strict query budgets.

%%%%%%%%%%%%%%%%%%%%%%%%%%%%%%%%%%%%%%%%%%%%%%%%%%%%%%%%%%%%%%%%%%%%%%%%%%%%%%%%
\end{document}